\newcommand{\bm}{ \boldsymbol }
\newcommand{\Bmat}{{\bf B}}
\newcommand{\Dmat}{{\bf D}}
\newcommand{\Lmat}{{\bf L}}
\newcommand{\Wmat}{{\bf W}}
\newcommand{\gv}{{\boldsymbol g}}
\newcommand{\Wv}{{\boldsymbol W}}
\newcommand{\xv}{{\boldsymbol x}}
\newcommand{\zv}{{\boldsymbol z}}
\newcommand{\deltav}{{\boldsymbol \delta}}
\newcommand{\phiv}{{\boldsymbol \phi}}
\newcommand{\iid}{{\it i.i.d. }}
\newcommand{\var}{\text{Var}}
\newcommand{\BR}{\mathbb{R}}
\DeclareMathOperator{\argmin}{\arg\min}
\theoremstyle{plain}
\newtheorem{thm}{Theorem}[section] 
\theoremstyle{definition}
\newtheorem{defn}[thm]{Definition} 
\newtheorem{assumption}[thm]{Assumption}
\newtheorem{lem}[thm]{Lemma}
\begin{document}

\twocolumn[

\aistatstitle{Finite-Time Consensus Learning for Decentralized Optimization with Nonlinear Gossiping}

\aistatsauthor{Junya Chen${}^{*}$ \And Sijia Wang \And  Lawrence Carin \And Chenyang Tao${}^{*}$}

\aistatsaddress{ Duke University \And  Duke University \And KAUST \And Duke University} ]

\begin{abstract}
Distributed learning has become an integral tool for scaling up machine learning and addressing the growing need for data privacy. 
Although more robust to the network topology, decentralized learning schemes have not gained the same level of popularity as their centralized counterparts for being less competitive performance-wise. In this work, we attribute this issue to the lack of synchronization among decentralized learning workers, showing both empirically and theoretically that the convergence rate is tied to the synchronization level among the workers. Such motivated, we present a novel decentralized learning framework based on nonlinear gossiping (NGO), that enjoys an appealing finite-time consensus property to achieve better synchronization. We provide a careful analysis on its convergence and discuss its merits for modern distributed optimization applications, such as deep neural networks. Our analysis on how communication delay and randomized chats affect learning further enables the derivation of practical variants that accommodate asynchronous and randomized communications. To validate the effectiveness of our proposal, we benchmark NGO against competing solutions through an extensive set of tests, with encouraging results reported.
\end{abstract}

\vspace{-1em}
\section{Introduction}
\vspace{-5pt}
The growth of data and the complexity of modern machine learning problems have often necessitated the deployment of distributed learning algorithms to scale up implementation in practice \citep{bottou2010large, dean2012large}. Such learning schemes allow the workload to be split across multiple computing units, commonly known as {\it workers}, to parallelize the computation. This enables learning within a fraction of the time typically required for a single unit, and also adds robustness as failures of individual workers can be easily isolated. Apart from its efficiency, applications of distributed learning are also frequently driven by data-privacy \citep{balcan2012distributed,truex2019hybrid,kairouz2019advances} or cost-performance \citep{kovalev2021linearly,haddadpour2021federated} considerations.

Crucial for all distributed-learning schemes is to establish a sense of consensus among individual workers. 
Two major strategies are employed to reinforce the desired coherence: centralized and decentralized synchronizations. As its name suggests, the former makes use of centralized nodes, commonly known as {\it parameter servers} \citep{li2014communication}, to aggregate model updates from and broadcast the latest parameters back to the workers, thereby enforcing model parallelism ({\it i.e.}, exact synchronization). Centralized schemes are considered less scalable \& more vulnerable due to communication bottlenecks \& heavy reliance on central nodes, and more demanding for the network infrastructure.

In the decentralized setup, workers communicate only with their neighbors rather than the centralized server \citep{nedic2018network}. Parameter updates are executed on individual workers using local gradients, followed by a consensus update step to integrate information from their neighbors. 
Done properly, all workers will agree on a set of parameters that is close to the optimum \citep{Wu2017}. In other words, extra flexibility with the network topology can be achieved by settling with asymptotic or approximate synchronization.

For practical considerations, however, the efficiency afforded by distributed learning may be overshadowed by the cost of communication, thus preventing ideal linear scaling predicted by theory \citep{dean2012large}. The popularity of large-scale neural networks further aggravates this issue, in which the synchronization latency for big models may easily wash away the potential gain from distributed computation \citep{shamir2014fundamental,leblond2017asaga}. To ameliorate this difficulty, research has focused on two fundamental questions: how and what to communicate? For the former, the goal is to achieve higher throughput. Strategies like eager updates \citep{tandon2017gradient,lee2017speeding} and asynchronous protocols \citep{recht2011hogwild} are among the most straightforward and effective, though proper compensation may be needed to correct for the altered learning dynamics \citep{mitliagkas2016asynchrony, zheng2017asynchronous}.

As for the latter, it is always desirable to communicate frugally. One of the most successful strategies for synchronization under budget is gradient compression, which entails transmitting a succinct summary instead of the full gradients to reduce communications \citep{lin2018deep}. A major motivation is that model gradients often manifest more redundancy ({\it e.g.}, sparsity) compared with the full set of model parameters \citep{na2017chip,ye2018communication,basu2020qsparse}. Prominent examples from this category include gradient sparsification \citep{wangni2018gradient} and quantized gradients \citep{wen2017terngrad,alistarh2017qsgd, wu2018error,reisizadeh2020fedpaq}.

This study focuses on decentralized learners, and addresses an issue orthogonal to the endeavors discussed above. Specifically, we seek answers on how to synchronize. Our investigation is motivated by the observation that, despite matching theoretical convergence rates, decentralized algorithms often fail to compete with centralized alternatives in practical settings. We hypothesize that this is due to the lack of synchrony among the collaborating workers in the decentralized setting. This is especially the case for over-parameterized models, where small discrepancies in the parameter space sometimes can lead to qualitatively different behavior. Consequently, we wish to keep the local models better synchronized with the same or fewer communications. One promising direction is to alter the communication protocol, a path we take in this work.

This paper revisits decentralized stochastic optimization from a consensus learning perspective, in the hope of better understanding the trade-offs involved and motivating new algorithms for improved practice. Our work yields the following contributions: $(i)$ we propose a new family of consensus learning schemes based on nonlinear gossiping for decentralized optimization; $(ii)$ we analyze the convergence of our solution, and discuss the computation-communication trade-offs under practical considerations; and $(iii)$ we discuss extensions to more general learning settings, such as with randomized and time-varying network topology, and the communication delay case. Our theory generalizes the existing results for decentralized learning, and the proposed algorithms promise to better balance the tension between the communication bottleneck and learner synchrony.

We use the following notations in this paper: In the decentralized setup, the workers participate in peer-to-peer communications defined by the network topology, where each worker is only allowed to exchange information with the connecting workers. Formally, let $\mathcal{G} = \{\mathcal{V}, \Wmat\}$ denote a weighted undirected graph with the set of nodes (or vertices) $\mathcal{V}$. $\Wmat$ is a symmetric matrix called the weight matrix. We say that a set ${v_i, v_j}$ is an edge if $w_{ij} > 0$. 
For every node $v_i\in \mathcal{V}$, the degree $d(v_i)$ is the sum of the weights of the edges adjacent to $v_i$: $d(v_i)=\sum_{j=1}^n \Wmat_{ij}$. The degree matrix $D(\mathcal{G})$ is the diagonal matrix $\Dmat = \text{diag}\left(d_1, \cdots, d_n\right)$. The graph Laplacian is defined by $\Lmat \triangleq \Dmat - \Wmat$, whose eigenvalues encode information on how network topology affects convergence. $\lambda_i(\cdot)$ denotes the $i$-th smallest eigenvalue of a matrix.

\begin{figure*}[t]
	\begin{center}
	    \includegraphics[width=0.22\textwidth]{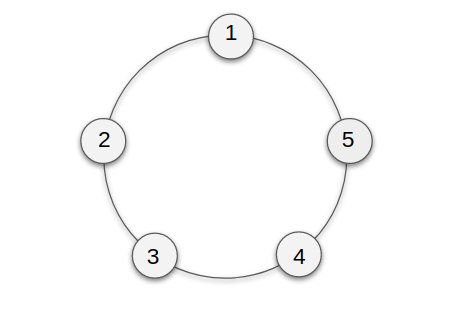}
		\includegraphics[width=0.74\textwidth]{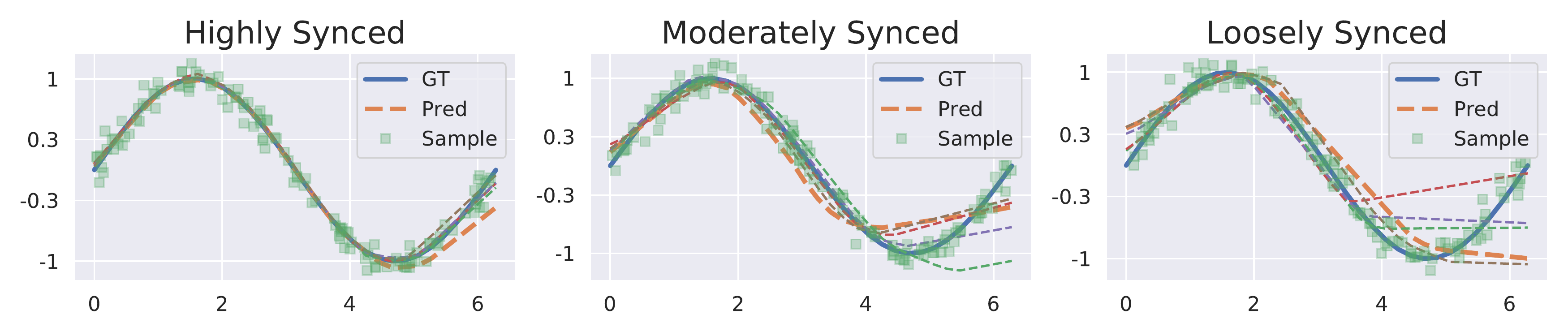}
        \vspace{-1em}
		\caption{Linear Gossiping SGD with varying synchronization level under the \iid setup on toy model. (a) computation graph with a ring topology. (b) highly synced. (c) moderately synced. (d) loosely synced. With the same number of training iterations, highly synced workers converge better. }
		\label{fig:linear_th}
	\end{center}
	\vspace{-2em}
\end{figure*}

\vspace{-8pt}
\section{Problem Setup}\label{problem_setup}
\vspace{-6pt}
Consider the following distributed optimization problem:$~
\min_{\xv\in \BR^d} \left[\frac{1}{n}\sum_{i=1}^n f_i({\xv})\right]$, where $\xv$ denotes the model parameters to be optimized and $f_i:\BR^d\rightarrow \BR$ for $i \in [n] \triangleq \{1, \cdots, n\}$ are the local objectives defined on each worker. For most modern machine learning applications, $f_i(\xv)$ is given by the empirical expectation
$\label{f_i}
f_i(\xv) = \mathbb{E}_{\xi_i\sim \mathcal{D}_i}[F_i(\xv,\xi_i)]$,
where $\mathcal{D}_i$ denotes the local data  distribution, and $F_i(\xv, \cdot)$ is the local loss function of model parameters $\xv$ on the worker $i$. 

We call the setting \iid if all $\mathcal{D}_i$ are independent copies from the same underlying data distribution, and non-\iid otherwise. The non-\iid setting commonly arises when locally curated data differ distributionally ({\it e.g.}, data collected from different countries), and especially the case when data privacy is of concern. 


Before addressing the detailed analysis, we summarize the standard technical conditions and define the key synchronization index function throughout the paper.
\begin{assumption}\label{assumption}
Local objectives $f_i:\BR^d\rightarrow \BR$ for all $i\in [n]$ are $L$-smooth and $\mu$-strongly convex, with the variance of their stochastic gradients uniformly bounded for all $\xv$ and $i$: 
\begin{align}
&\mathbb{E}_{\xi_i\sim \mathcal{D}_i}\left\|\nabla F_i(\xv, \xi_i) - \nabla f_i(\xv)\right\|^2\leq \sigma_i^2,\\
&\mathbb{E}_{\xi_i \sim \mathcal{D}_i}\left\|\nabla F_i(\xv, \xi_i)\right\|^2\leq G^2, \forall \xv\in \BR^d, i\in [n].
\end{align}
\end{assumption}

\begin{assumption}
We assume that $\Wmat \in  [0, 1]^{n\times n}$, $\Wmat$ is a symmetric $(\Wmat = \Wmat^\top)$ doubly stochastic $(\Wmat \mathbf{1} = \mathbf{1}, \mathbf{1}^\top \Wmat = \mathbf{1}^\top$) matrix.
\end{assumption}

\begin{defn}[Synchronization index \citep{olfati2007consensus}]
    Denote $\bm{\delta}_i = \xv_i - \frac{1}{n}\sum_{i=1}^n \xv_i$ as the disagreement vector, and we define the Lyapunov function of (\ref{ct}) as $V = \sum_i\|\bm{\delta}_i\|^2$, which is also called the synchronization index. 
\end{defn}
For a small value of $V$, the parameters of different workers are similar to each other. When all workers are perfectly synchronized, $V$ function reaches zero. The dynamic of $V$ function depicts the convergence behavior of gossiping learning, and serves as an important role in decentralized learning.

\vspace{-8pt}
\subsection{Gossiping SGD}
\vspace{-6pt}
Gossiping averaging is one of the most popular techniques for decentralized learning \citep{koloskova2019decentralized, lian2017asynchronous, nedic2009distributed, tang2018communication}. In this study, we restrict our discussion to the variants based on {\it stochastic gradient descent} (SGD).

In gossiping SGD, there are two major steps involved in iteration $t$: $(i)x^{(t)}\rightarrow x^{(t+\frac{1}{2})}$: {\it SGD update} (local model update) and $(ii)$ $x^{(t+\frac{1}{2})}\rightarrow x^{(t+1)}$:{\it gossiping update} (neighbor communication). The SGD step proceeds as regular SGD on a worker, with local models updated independently based on the gradient computed from a mini-batch of data curated on the individual workers. Under proper regularity conditions, the SGD updates push local models toward the local solution $\xv_i^* \triangleq \argmin_{\xv} \left[f_i(\xv) \right]$.

The gains in scaling and the consensus on the correct  global solution\footnote{Note that in the non-\iid setting, $\{\xv_i^*\}$ differ from each other.}, come from the gossiping updates. Now each worker exchanges messages with their neighbors and updates their models through localized averaging
\vspace{-2pt}
\begin{equation}
	\xv_i^{(t+1)} = \xv_i^{\left(t+\frac{1}{2}\right)} +\gamma\sum_{j:(i,j)\in \mathcal{E}} \Wmat_{ij} \Delta_{ij}^{{\left(t+\frac{1}{2}\right)}}, \forall i\in [n].
\vspace{-2pt}
\end{equation}
where $\Delta_{ij}^{\left(t+\frac{1}{2}\right)} \triangleq \xv_j^{\left(t+\frac{1}{2}\right)} - \xv_i^{\left(t+\frac{1}{2}\right)}$ is the update vector between worker $j$ and worker $i$ at iteration $t$, and $\gamma>0$ controls the synchronization rate for local averaging.

Intuitively, similar to the outright averaging employed by their centralized counterparts, the gossiping update mollifies the noise term introduced from the SGD updates, thereby achieving linear speedup in terms of convergence. Without the SGD updates, all $\xv_i^{(t)}$ converge exponentially to the average $\bar{\xv} = \frac{1}{n}\sum_{i=1}^n \xv_i^{(0)}$ under gossiping updates. The convergence rate depends on the network topology: for a connected undirected network it is faster or equal to $1-\gamma \lambda_2(\Lmat)$ \citep{olfati2007consensus,chen2017second}. $\lambda_2(\Lmat)$ is known as the algebraic connectivity of $\mathcal{\Wmat}$.

\vspace{-8pt}
\subsection{Synchronization and convergence}
\vspace{-6pt}
Now let us compare centralized optimized and decentralized linear gossiping. Our goal is to underscore that better synchronization provides a faster overall convergence rate.

To motivate, consider the following hypothetical example: two populations of workers, with the same parameter mean $\bar{\xv}$ but different synchronization, {\it i.e.}, $\var[\xv_{i}]$ differs. Typically $\bar{\xv}$ is identified as the current global solution that one seeks to optimize. For each local gradient $\partial \xv_{i}$, we could treat it as a noisy approximation $\hat{\partial} \xv$ of the target gradient $\partial \xv$ evaluated at $\bar{\xv}$. Let us execute one step of SGD and infinite steps of gossiping, then it essentially reduces to applying gradient descent with $\hat{\partial} \xv$ instead of $\partial \bar{\xv}$. As $\var[\xv_{i}]$ becomes smaller ($\{\xv_i\}$ more concentrated around $\bar{\xv}$), the approximation to $\partial \bar{\xv}$ gets better (Figure \ref{fig:grad_compare}). Consequently, one should expect better convergence provided the workers maintain a more synchronized status.


We verify this intuition experimentally in Figure \ref{fig:linear_th}. We use ring topology (left above in Figure \ref{fig:linear_th}) with $n=5$ workers to solve a simple least square regression problem with \iid data ({\it i.e.}, each worker only gets data samples just from a non-overlapping partition of data distribution). 
We set different thresholds ($V_{th}$ = 1, 5, 10) for the Lyapunov function to measure synchronization. After each SGD update, we execute gossiping updates until the tolerance is satisfied ($V\leq V_{th}$). We see that the synchronization level among workers strongly correlates with the distributed learning performance.

The following statement formalizes the intuition above, that the level of synchronization among the workers affects the convergence rate. 

\begin{thm}[Convergence and synchronization]\label{centralized}
	Under Assumptions \ref{assumption}, centralized SGD and linear gossiping SGD with step-size $\eta_t = \frac{4}{\mu(a+t)}$, for parameter $a\geq 16\frac{L}{\mu}$, converges at the rate
	\vspace{-2pt}
	\begin{equation}\label{eq:sync_conv}
		f\left(\xv_{avg}^T\right) - f^*
    		\leq \underbrace{\frac{\mu a^3}{8S_T}}_{\text{I}} +\underbrace{\frac{4\bar{\sigma}^2T\left(T+2a\right)}{\mu n S_T}}_{\text{II}}+\underbrace{h(T)}_{\text{III}}
    \vspace{-2pt}
	\end{equation}
	where $\bar{\sigma}^2 = \frac{1}{n}\sum_i \sigma_i^2$, $\xv_{avg}^T = \frac{1}{S_T}\sum_{t=0}^{T-1}w_t\xv^t$ for weights $w_t = (a+t)^2$, and $S_T = \sum_{t=0}^T w_t\geq \frac{1}{3}T^3$. The synchronization-dependent terms are given by
	\begin{equation*}
	h(T) = \begin{cases}
	0,~~ \hspace{6.em} \mbox{Centralized SGD}\\
	\frac{(2L+\mu)}{nS_T}\sum_{t=0}^T w_tV^{(t)},~~\mbox{Gossiping SGD}
	\end{cases}
	\end{equation*}
	where $V^{(t)} = \sum_i\|\xv_i^{(t)}-\bar{\xv}\|^2$.
\end{thm}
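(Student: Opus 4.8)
The plan is to reduce everything to the dynamics of the \emph{averaged} iterate $\bar{\xv}^{(t)} = \frac{1}{n}\sum_i \xv_i^{(t)}$ and then run a strongly-convex SGD telescoping argument with the prescribed weights. First I would observe that the symmetric doubly stochastic structure of $\Wmat$ makes the gossiping step mean-preserving: since $\sum_{i,j}\Wmat_{ij}(\xv_j - \xv_i) = 0$ by antisymmetry, the averaged iterate obeys the clean recursion $\bar{\xv}^{(t+1)} = \bar{\xv}^{(t)} - \eta_t \bar{g}^{(t)}$ with $\bar{g}^{(t)} = \frac{1}{n}\sum_i \nabla F_i(\xv_i^{(t)}, \xi_i^{(t)})$. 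For centralized SGD all workers share $\xv_i^{(t)} = \bar{\xv}^{(t)}$, so $V^{(t)} \equiv 0$ and the two cases differ only through the consensus gap.

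The core is a one-step descent inequality for $\EE\|\bar{\xv}^{(t+1)} - \xv^*\|^2$. Expanding the square and taking conditional expectation gives the usual three pieces. The cross term $-2\eta_t \langle \frac{1}{n}\sum_i \nabla f_i(\xv_i^{(t)}),\, \bar{\xv}^{(t)} - \xv^* \rangle$ I would split into a component evaluated at each $\xv_i^{(t)}$ plus a remainder $\langle \nabla f_i(\xv_i^{(t)}),\, \bar{\xv}^{(t)} - \xv_i^{(t)}\rangle$; $\mu$-strong convexity on the first yields the contraction factor $1 - \mu\eta_t$ together with the descent $-2\eta_t(f(\bar{\xv}^{(t)}) - f^*)$, while $L$-smoothness turns the remainder and the gap $f_i(\xv_i^{(t)}) - f_i(\bar{\xv}^{(t)})$ into a term proportional to $\sum_i\|\xv_i^{(t)} - \bar{\xv}^{(t)}\|^2 = V^{(t)}$; this is the origin of the $(2L+\mu)$ constant. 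The second-moment term $\eta_t^2 \EE\|\bar{g}^{(t)}\|^2$ I would control via the bias--variance split, and here independence of the per-worker stochastic gradients shrinks the variance by $1/n$, producing the $\bar{\sigma}^2/n$ that drives term~II and the linear speedup.

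Next I would multiply the one-step bound by $w_t = (a+t)^2$ and telescope. The schedule $\eta_t = \frac{4}{\mu(a+t)}$ is tuned precisely so that $(1-\mu\eta_t)w_t \le w_{t-1}$, making the $\|\bar{\xv}^{(t)} - \xv^*\|^2$ terms cancel across the sum; what survives is the initial distance (bounded to give term~I, $\frac{\mu a^3}{8S_T}$), the accumulated variance $\frac{1}{S_T}\sum_t w_t\eta_t\cdot \frac{\bar{\sigma}^2}{n}$ (which evaluates to term~II after $\sum_t w_t\eta_t \asymp \frac{4}{\mu}T(T+2a)$), and the accumulated consensus error, which is exactly $h(T)$. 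A final application of Jensen's inequality and convexity of $f$ converts the weighted average $\frac{1}{S_T}\sum_t w_t(f(\bar{\xv}^{(t)}) - f^*)$ into $f(\xv_{avg}^T) - f^*$.

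The main obstacle I anticipate is the bookkeeping in the one-step inequality: the smoothness terms generate $\eta_t^2 L$ contributions to the coefficient of $f(\bar{\xv}^{(t)}) - f^*$, and one must verify that $a \ge 16L/\mu$ (equivalently a small enough $\eta_t$) keeps that coefficient bounded away from zero so the descent is not destroyed, while simultaneously pinning down the exact constants $(2L+\mu)$ and the factor in term~II. Matching all of these constants to the stated bound --- rather than a loose $O(\cdot)$ version --- is the delicate part.
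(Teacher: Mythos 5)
Your proposal is correct and follows essentially the same route as the paper: a one-step descent inequality for the averaged iterate $\bar{\xv}^{(t)}$ (strong-convexity contraction, the $\bar{\sigma}^2/n$ variance reduction, and a consensus-error term with constant $(2L+\mu)/n$ multiplying $V^{(t)}$), followed by weighted telescoping with $w_t=(a+t)^2$ under the schedule $\eta_t=\tfrac{4}{\mu(a+t)}$ and a final Jensen step. The only difference is one of packaging: you re-derive these ingredients from scratch, while the paper imports both the one-step bound and the weighted-telescoping step as black-box lemmas from \citet{koloskova2019decentralized} and merely simplifies the constants (using $\eta_t\le\tfrac{1}{4L}$ to absorb $2\eta_t^2L^2+L\eta_t+\mu\eta_t$ into $\eta_t(2L+\mu)$), which is exactly the constant bookkeeping you flag as the delicate part.
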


Now we are ready to discuss the interplay between synchronization and algorithmic convergence. After a sufficient number of iterations ({\it i.e.}, a fairly large $T$), one recognizes the second term from the bracket in (\ref{eq:sync_conv}) is the leading term. For highly synchronized workers (small $V^{(t)}$), the gossip SGD scheme matches its centralized counterpart. As such, provided a sufficient communication budget, one can maximally enforce the synchronization to expedite convergence.

\begin{figure}[t]
	\begin{center}
		\includegraphics[width=.2\textwidth]{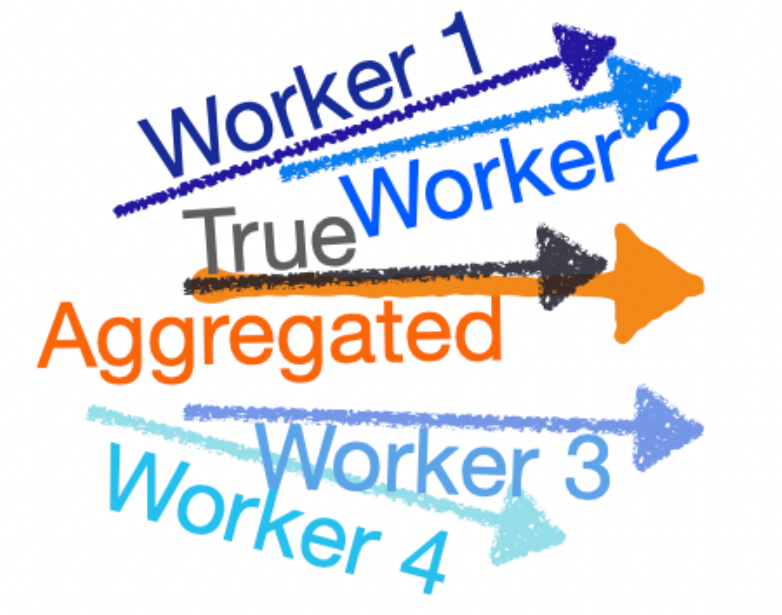}\includegraphics[width=.25\textwidth]{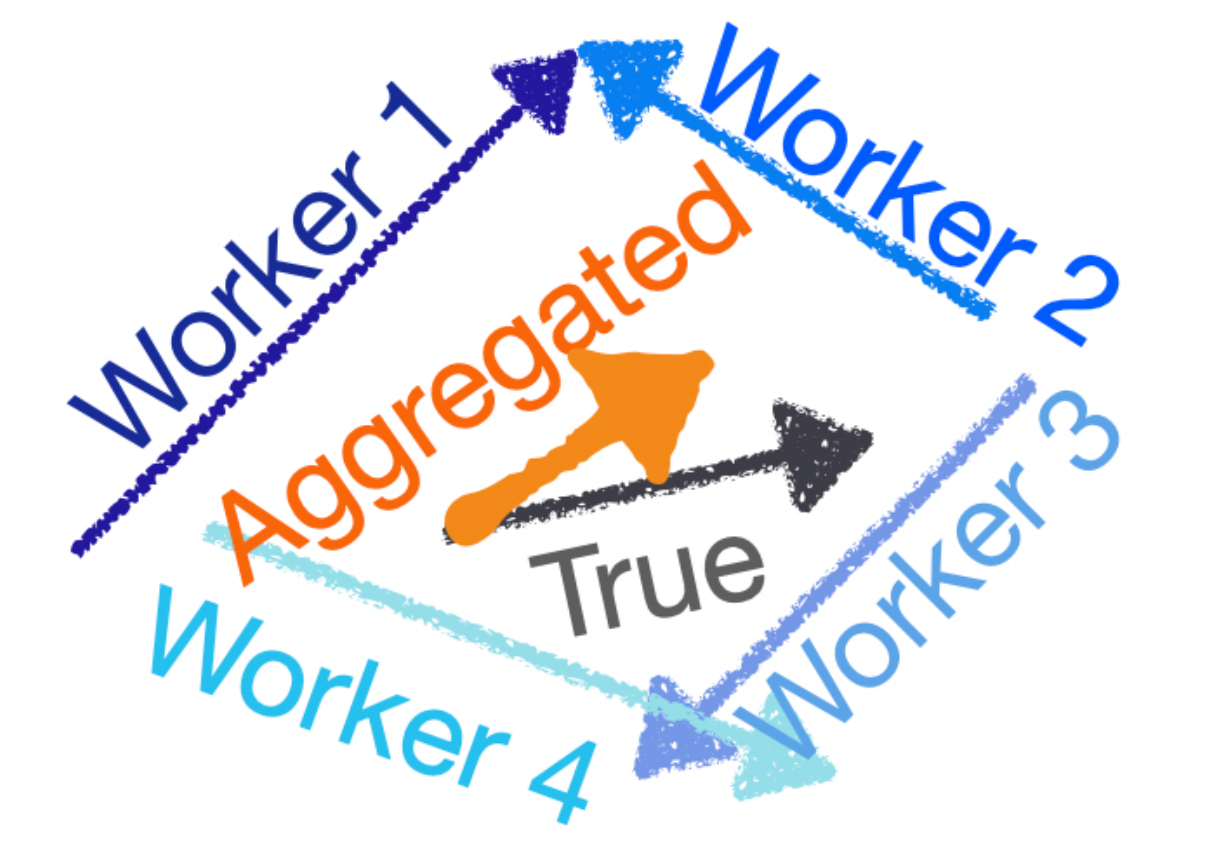}
        \vspace{-1em}
		\caption{When workers are synced (close to each other in the state space, left plot), their gradients will be more aligned with that of the global average; if they are less coordinated (right), individual gradients will be more variable.
		}
		\label{fig:grad_compare}
	\end{center}
	\vspace{-1.5em}
\end{figure}

\vspace{-8pt}
\section{Decentralized Learning with Nonlinear Gossiping}\label{sec:decentralized}
\vspace{-6pt}
Above we have established the intuition that decentralized optimization is expected to benefit from improved consensus among workers. Inspired by this, we present NGO, a novel distributed optimization scheme based on {\it {\bf N}onlinear {\bf Go}ssiping} (NGO) consensus protocol. While most of the existing literature focuses on how to improve the communication efficiency for decentralized learning, we investigate a novel direction: how to improve synchronization by altering the communication updates. This implies our NGO can be combined with prior arts to further enhance learning efficiency. Below we elaborate our construction, relegating all technical proofs to the Supplementary Material (SM) Sec. A.

\vspace{-8pt}
\subsection{Finite-time consensus protocol}
\label{subsec:finite-time}
\vspace{-6pt}

To motivate the new consensus updates, we first briefly review basic consensus schemes in the continuous time limit. Consider a network with $n$ nodes, continuously coupled in the following manner:
\vspace{-5pt}
\begin{equation}\label{ct}
\dot{\xv}_i(t) = \sum_{j:(i,j)\in \mathcal{E}} \Wmat_{ij}\left(\xv_j(t)-\xv_i(t)\right)
\vspace{-.1em}
\end{equation}
Note the discretized version of (\ref{ct}) corresponds to the consensus updates employed by linear gossiping\footnote{We use $f(t)$ to denote continous dynamics.}.
One can readily show that following the dynamics described in (\ref{ct}), $V(t)$ converges to zero in the asymptotic time limit if the graph is connected ($t \rightarrow \infty$; see SM Sec. A), and the workers reaches consensus\footnote{We use consensus and synchronization interchangeably.}. For consensus learning, we desire that perfect synchronization can happen sooner, preferably within a finite time frame. The following statement verifies that this is possible through use of nonlinear couplings. 

	

\begin{figure*}[h]
	\begin{center}
		\includegraphics[width=\textwidth]{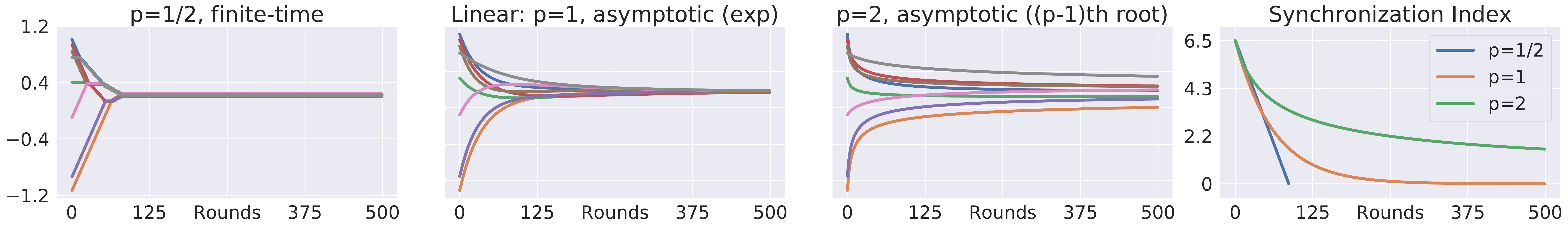}
		\vspace{-2em}
		\caption{Convergence rate of nonlinear ($p\neq 1$) and linear ($p=1$) couplings with ten workers. (a) Sub-linear coupling ($p=0.5$). (b) Linear coupling ($p=1$). (c) Super-linear coupling ($p=2$). (d) Synchronization Index. Finite time consensus is achieved with sub-linear coupling.}
		\vspace{-1.5em}
		\label{fig:compare}
	\end{center}
\end{figure*}

\begin{thm}[Finite time consensus \citep{lu2016note}]
Consider the following system with nonlinear coupling $\phiv(\xv): \BR^d \rightarrow \BR^d$
\vspace{-2pt}
\begin{equation}
\dot{\xv}_i(t) = \sum_{j:(i,j)\in \mathcal{E}} W_{ij}\phiv(\xv_j(t) - \xv_i(t)),
\vspace{-2pt}
\end{equation}
where $\phiv(\zv) = [\text{sign}(\zv_1)|\zv_1|^{2p-1}, \cdots, \text{sign}(\zv_n)|\zv_n|^{2p-1}]$ for $p \in [\frac{1}{2},1)$. 
Then $V(t) = 0$ for all 
\vspace{-2pt}
\begin{equation}\label{eq:finite_time}
    t>
    T^*(\Wmat, p) \triangleq \frac{\left(\sum_{i}\left(\xv_i(0)-\bar{\xv}(0)\right)^2\right)^{1-p}}{4\gamma (1-p) \left(\lambda_2(\Lmat(\Bmat))\right)^p }
\vspace{-2pt}
\end{equation} 
$T^*(\Wmat, p)$ is a function of the nonlinear parameter $p$ and the network topology $\Wmat$, with $\Bmat =\left[\Wmat_{ij}^{\frac{1}{p}}\right]$,  $\lambda_2(\Lmat(\Bmat))$ is the algebraic connectivity of $\mathcal{G}(\Bmat)$.
\end{thm}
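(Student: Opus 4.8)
The plan is to use $V(t)$ itself as a Lyapunov function and to derive a scalar differential inequality of the form $\dot V \le -c\,V^{p}$, whose solution becomes identically zero in finite time. A preliminary observation makes the computation clean: the mean $\bar\xv(t)=\frac1n\sum_i\xv_i(t)$ is conserved. Summing the dynamics over $i$ and using that $\Wmat$ is symmetric while $\phiv$ is \emph{odd} coordinatewise (each entry is $\sign(z)|z|^{2p-1}$), the contributions of every pair $(i,j)$ cancel, so $\dot{\bar\xv}=0$. Hence $\dot{\bm{\delta}}_i=\dot\xv_i$ for the disagreement vectors $\bm{\delta}_i=\xv_i-\bar\xv$, and differentiating gives $\dot V = 2\sum_i \bm{\delta}_i^\top\dot\xv_i = 2\gamma\sum_{i,j}\Wmat_{ij}\,\bm{\delta}_i^\top\phiv(\bm{\delta}_j-\bm{\delta}_i)$, where $\gamma$ is the coupling gain carried over from the linear convention.

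Next I would symmetrize this expression. Relabeling $i\leftrightarrow j$ and invoking $\Wmat_{ij}=\Wmat_{ji}$ together with the oddness of $\phiv$ converts the rate into the manifestly dissipative form $\dot V = -\gamma\sum_{i,j}\Wmat_{ij}\,(\bm{\delta}_j-\bm{\delta}_i)^\top\phiv(\bm{\delta}_j-\bm{\delta}_i)$. Since for any $\zv$ one has $\zv^\top\phiv(\zv)=\sum_k|z_k|^{2p}\ge 0$, every summand is nonnegative, so $V$ is already seen to be nonincreasing and consensus is at least asymptotic. The remaining work is to bound this rate \emph{below} by a power of $V$.

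This is the crux of the argument, and the place where the rescaled matrix $\Bmat=[\Wmat_{ij}^{1/p}]$ enters. I would rewrite each term as $\Wmat_{ij}|x_{j,k}-x_{i,k}|^{2p}=\bigl(\Bmat_{ij}(x_{j,k}-x_{i,k})^2\bigr)^p$ and then apply the subadditivity of $t\mapsto t^{p}$ valid for $p\in(0,1)$, namely $\sum_m a_m^{p}\ge(\sum_m a_m)^{p}$ for $a_m\ge0$, to the full triple sum over $(i,j,k)$. This yields $\sum_{i,j,k}(\Bmat_{ij}(x_{j,k}-x_{i,k})^2)^p\ge\bigl(\sum_{i,j,k}\Bmat_{ij}(x_{j,k}-x_{i,k})^2\bigr)^p$. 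The inner quantity is a Laplacian quadratic form: $\sum_{i,j}\Bmat_{ij}(x_{j,k}-x_{i,k})^2=2\,\bm{\delta}_{\cdot,k}^\top\Lmat(\Bmat)\bm{\delta}_{\cdot,k}$, and because each coordinate-slice $\bm{\delta}_{\cdot,k}$ is orthogonal to $\mathbf{1}$ (its entries sum to zero), Courant–Fischer gives $\bm{\delta}_{\cdot,k}^\top\Lmat(\Bmat)\bm{\delta}_{\cdot,k}\ge\lambda_2(\Lmat(\Bmat))\,\|\bm{\delta}_{\cdot,k}\|^2$. Summing over $k$ produces $2\lambda_2(\Lmat(\Bmat))\,V$, so altogether $\dot V\le-\gamma\,(2\lambda_2(\Lmat(\Bmat)))^{p}\,V^{p}$.

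Finally I would integrate this scalar inequality. Writing $c=\gamma(2\lambda_2(\Lmat(\Bmat)))^{p}$, while $V>0$ we have $\frac{d}{dt}V^{1-p}=(1-p)V^{-p}\dot V\le-(1-p)c$, hence $V^{1-p}(t)\le V^{1-p}(0)-(1-p)c\,t$; the right side reaches zero at $t=V(0)^{1-p}/((1-p)c)$, after which $V\equiv0$ by nonincrease and nonnegativity. Identifying $V(0)=\sum_i(\xv_i(0)-\bar\xv(0))^2$ recovers the stated extinction time $T^*$. I expect the subadditivity step together with the passage to $\lambda_2(\Lmat(\Bmat))$ to be the main obstacle, since this is what forces the appearance of $\Bmat$ rather than $\Wmat$; the rest is bookkeeping. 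The only caveat is the leading constant: the construction above gives a $2^{p}$ factor, whereas the statement writes $4$, so one should either track the edge-counting convention in $\sum_{i,j}$ carefully or simply note that $2^{p}<4$ for $p\in[\tfrac12,1)$, which makes the claimed $T^*$ a valid (slightly conservative) upper bound on the true extinction time.
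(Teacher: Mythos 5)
Your proof follows exactly the same route as the paper's: conservation of $\bar\xv$, symmetrization of $\dot V$ into the dissipative form $-\gamma\sum_{i,j}\Wmat_{ij}(\deltav_j-\deltav_i)^\top\phiv(\deltav_j-\deltav_i)$, the rewriting $\Wmat_{ij}|z|^{2p}=(\Bmat_{ij}z^2)^p$ followed by subadditivity of $t\mapsto t^p$, Courant--Fischer for $\Lmat(\Bmat)$ on the subspace $\{\deltav:\sum_i\deltav_i=0\}$, and integration of the scalar inequality $\dot V\le -cV^p$. Up to the last sentence your bookkeeping is correct, and in fact more careful than the paper's, whose displayed chain silently drops the symmetrization factor $\tfrac{1}{2}$.

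The genuine flaw is that your final reconciliation of the constant runs in the wrong direction. From $\dot V\le-\gamma\bigl(2\lambda_2(\Lmat(\Bmat))\bigr)^pV^p$ you can certify $V(t)=0$ only for
\begin{equation*}
t\ \ge\ \frac{V(0)^{1-p}}{2^p\gamma(1-p)\,\lambda_2(\Lmat(\Bmat))^p},
\end{equation*}
and since $2^p<4$ this threshold is \emph{larger} than the stated $T^*$, by the factor $4/2^p\in(2,2\sqrt{2}\,]$. A smaller extinction time is a \emph{stronger} claim, so the observation ``$2^p<4$'' works against you, not for you: what you have proved is finite-time consensus with settling time at most $(4/2^p)\,T^*$, not that $V$ vanishes by $T^*$. (Applying subadditivity per unordered edge rather than per ordered pair improves your constant from $2^p$ to $2$, but still not to $4$.) For what it is worth, the paper's own proof contains the same gap: its displays justify at best $\dot V\le-2^{1+p}\gamma\lambda_2^pV^p$ (and that only because of the dropped $\tfrac{1}{2}$), yet it asserts $\dot V\le-4\gamma\lambda_2^pV^p$, a matching of constants that holds only at $p=1$. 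So relative to the paper you have lost nothing in substance; but to state a result your argument actually supports, either replace the $4$ in the denominator of $T^*$ by $2$, or keep the stated $T^*$ and weaken the conclusion to $V(t)=0$ for all $t>2T^*$.
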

To illustrate, we randomly sample ten samples from Gaussian distribution to denote the inital state of ten workers. After evolving for 500 rounds according to Eqn (\ref{eq:finite_time}), we plot the convergence results with $p=0.5$ (sublinear), $p=1$ (linear), $p=2$ (superlinear), along with the corresponding synchronization index In Figure \ref{fig:compare}. We see that the sublinear coupling reaches consensus in finite time, follows by the linear coupling (exponentially) and lastly the superlinear coupling. 
\vspace{-8pt}
\subsection{Dencentralized nonlinear gossiping}\label{subsec:ngo}
\vspace{-6pt}

Inspired from above, we consider the following nonlinear gossiping protocol after the SGD update
\vspace{-5pt}
\begin{equation}\label{x_i}
    \xv_i^{(t+1)} = \xv_i^{(t+\frac{1}{2})} + \gamma \sum_{j:(i,j)\in \mathcal{E}} \Wmat_{ij} \phiv\left(\xv_j^{(t)} - \xv_i^{(t)}\right).
    \vspace{-5pt}
\end{equation}
We summarize the pseudocode of a simple implementation of decentralized NGO in Algorithm \ref{alg_nonlinear}, and establish its convergence rate as follows. 

\begin{algorithm}[H]
\caption{Nonlinear Gossiping (NGO) Learning} 
	\label{alg_nonlinear}
\begin{algorithmic}
	\REQUIRE {Initialize value $\bar{\xv}^{(0)}\in \mathbb{R}^d$, $\xv_i^{(0)}=\bar{\xv}^{(0)}$ on each node $i\in [n]$, consensus step-size $\gamma$, learning rate $\eta_t$, communication graph $\mathcal{G} = (\mathcal{V}, \Wmat)$, and number of total budget $T$.}
	\FOR {$t = 0,\cdots, T-1$} 
	\STATE{\{In parallel for all workers $i\in [n]$\}}
	\STATE{\small{Sample $\xi_i^{(t)}$, compute gradient $\gv_i^{(t)} = \nabla F_i\left(\xv_i^{(t)}, \xi_i^{(t)}\right)$}.}
	\STATE{$\xv_i^{\left(t+\frac{1}{2}\right)} = \xv_i^{(t)} -\eta_t\gv_i^{(t)}$.}			
	\FOR {neighbors $j:\{i,j\}\in \mathcal{E}$}
	\STATE{Send $\xv_i^{\left(t+\frac{1}{2}\right)}$ and receive $\xv_j^{\left(t+\frac{1}{2}\right)}$.}
	\ENDFOR	
	\STATE{$\xv_i^{(t+1)} = \xv_i^{\left(t+\frac{1}{2}\right)} + \gamma \sum_{j:(i,j)\in \mathcal{E}}\Wmat_{ij}\phiv \left(\Delta_{ij}^{\left(t+\frac{1}{2}\right)}\right)$, where $\Delta_{ij}^{\left(t+\frac{1}{2}\right)}=
\xv_j^{\left(t+\frac{1}{2}\right)} - \xv_i^{\left(t+\frac{1}{2}\right)}.$}
	\ENDFOR
 	\end{algorithmic}
\end{algorithm}


\begin{thm}[Convergence of NGO]\label{main thm}
With hyperparameter $a \geq \max\{\frac{5}{\beta}, 15 \kappa\}, \kappa=\frac{L}{\mu}$, and learning rate scheduling $\eta_t =\frac{4}{\mu(a+t)}$, 
nonlinear gossiping learning converges at the rate 
\vspace{-1em}
\begin{align*}
f\left(\xv^{(T)}_{\text{avg}}\right)-f^*
&\leq \frac{\mu a^3}{8S_T}\|\bar{\xv}^{(0)}-\xv^*\|^2+\frac{4T(T+2a)}{\mu S_T}\frac{\bar{\sigma}^2}{n} \\
&+ \frac{2L+\mu}{nS_T}\sum_{t=0}^{T^*(\Wmat, p)} w_t V^{(t)}
\vspace{-1.5em}
\end{align*}
where $\xv^{(T)}_{\text{avg}} = \frac{1}{S_T}\sum_{t=0}^{T-1}w_t\bar{\xv}^{(t)} 
$ for weights $w_t = (a+t)^2$, and $S_T= \sum_{t=0}^{T-1}w_t\geq \frac{1}{3}T^3$, with $T^*(\Wmat, p)$ is defined in (\ref{eq:finite_time}).
\end{thm}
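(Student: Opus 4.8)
The plan is to mirror the template of Theorem~\ref{centralized} for the network average and then exploit the finite-time consensus property to truncate the residual synchronization sum. The first fact I would establish is that the nonlinear gossiping step leaves the average invariant. Since $\phiv$ is an odd map ($\phiv(-\zv)=-\phiv(\zv)$) and $\Wmat$ is symmetric, the aggregate correction $\sum_{i,j}\Wmat_{ij}\phiv(\xv_j-\xv_i)$ cancels against its index-swapped copy and vanishes, so $\bar{\xv}^{(t+1)}=\bar{\xv}^{(t+\frac12)}=\bar{\xv}^{(t)}-\eta_t\frac{1}{n}\sum_i \gv_i^{(t)}$. Thus the average evolves exactly as SGD driven by the averaged stochastic gradient, reducing the analysis to a scalar recursion in $\|\bar{\xv}^{(t)}-\xv^*\|^2$ in which the workers enter \emph{only} through the gradient mismatch, precisely as in the linear case.

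Next I would derive the per-step contraction. Expanding $\|\bar{\xv}^{(t+1)}-\xv^*\|^2$ and taking the conditional expectation under Assumption~\ref{assumption}, the cross term carries $\frac{1}{n}\sum_i\nabla f_i(\xv_i^{(t)})$ rather than $\nabla f(\bar{\xv}^{(t)})$. I would split this as $\nabla f(\bar{\xv}^{(t)})+\frac{1}{n}\sum_i[\nabla f_i(\xv_i^{(t)})-\nabla f_i(\bar{\xv}^{(t)})]$, lower-bound the first summand via $\mu$-strong convexity, and upper-bound the mismatch via $L$-smoothness and Young's inequality; this is exactly where $V^{(t)}=\sum_i\|\xv_i^{(t)}-\bar{\xv}^{(t)}\|^2$ enters and produces the coefficient $2L+\mu$. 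For the second-order term, worker independence of the noise gives $\EE\|\frac{1}{n}\sum_i\gv_i^{(t)}\|^2\le \frac{\bar{\sigma}^2}{n}+\|\frac{1}{n}\sum_i\nabla f_i(\xv_i^{(t)})\|^2$, with the gradient-norm part absorbed by smoothness. The result is a recursion of the form $\EE\|\bar{\xv}^{(t+1)}-\xv^*\|^2\le(1-\mu\eta_t)\EE\|\bar{\xv}^{(t)}-\xv^*\|^2-\eta_t(\EE f(\bar{\xv}^{(t)})-f^*)+\eta_t^2\frac{\bar{\sigma}^2}{n}+\eta_t\frac{2L+\mu}{n}V^{(t)}$.

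I would then run the standard weighted-averaging argument. Multiplying by $w_t=(a+t)^2$ and summing, the schedule $\eta_t=\frac{4}{\mu(a+t)}$ together with the hypothesis $a\ge\max\{\frac{5}{\beta},15\kappa\}$ ensures $(1-\mu\eta_t)w_t\le w_{t-1}$, so the distance terms telescope, the transient collapses into $\frac{\mu a^3}{8S_T}\|\bar{\xv}^{(0)}-\xv^*\|^2$, the variance term accumulates to $\frac{4T(T+2a)}{\mu S_T}\frac{\bar{\sigma}^2}{n}$, and the consensus term to $\frac{2L+\mu}{nS_T}\sum_t w_tV^{(t)}$. Applying Jensen's inequality to $\xv^{(T)}_{\text{avg}}=\frac{1}{S_T}\sum_t w_t\bar{\xv}^{(t)}$ extracts $f(\xv^{(T)}_{\text{avg}})-f^*$ and yields the three-term bound with the sum still running over all $t$.

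The decisive and novel step is the final truncation. The finite-time consensus theorem shows that the nonlinear coupling drives the disagreement to exactly zero for $t>T^*(\Wmat,p)$ (see~(\ref{eq:finite_time})), so $V^{(t)}=0$ beyond $T^*$ and $\sum_{t=0}^{T-1}w_tV^{(t)}$ collapses to $\sum_{t=0}^{T^*(\Wmat,p)}w_tV^{(t)}$, giving the claimed rate. The main obstacle I anticipate is making this truncation rigorous for the \emph{interleaved} discrete dynamics: the finite-time theorem concerns pure gossiping in continuous time, whereas Algorithm~\ref{alg_nonlinear} injects a fresh SGD perturbation into the consensus error each round. I would therefore need to argue that either the gossiping is iterated enough (or $\gamma$ taken large enough) to absorb each round's perturbation inside the finite-time window, or that the decaying step size $\eta_t$ shrinks the injected disagreement fast enough that the finite-time mechanism still forces $V^{(t)}$ to zero after $T^*$; controlling this SGD-induced leakage into the disagreement subspace is the crux of the argument.
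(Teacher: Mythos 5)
Your proposal follows essentially the same route as the paper's proof. The paper takes the per-step recursion of Lemma \ref{lemma} as given (citing Koloskova et al.\ 2019 rather than rederiving it), simplifies the coefficients under $\eta_t \le \frac{1}{4L}$ to obtain the $(2L+\mu)$ factor, applies the same weighted-summation argument (Lemma 23 of that reference) with $w_t=(a+t)^2$ to collapse the transient, variance, and consensus terms, and then truncates the disagreement sum at $T^*(\Wmat,p)$ by invoking the finite-time consensus theorem. Your only departures are cosmetic: you derive the recursion from first principles (including the average-invariance of the nonlinear gossip step via oddness of $\phiv$, which the paper uses implicitly in its proof of the consensus theorem).

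One further remark is worth making. The ``crux'' you flag in your final paragraph --- that the finite-time consensus theorem is proved for \emph{pure} continuous-time gossiping, whereas Algorithm \ref{alg_nonlinear} re-injects disagreement into $V^{(t)}$ through every SGD step --- is a genuine issue, and the paper's own proof does not resolve it either: the appendix simply asserts that ``according to Theorem 3.1'' the disagreement term vanishes for $t>T^*$, with no argument controlling the SGD-induced leakage after consensus is first reached, nor the gap between the continuous-time dynamics and the discrete updates. So your proposal is as complete as the paper's argument, and your closing paragraph correctly identifies the step at which both are informal.
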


\vspace{-8pt}
\subsection{Consensus with asynchronous or randomized communications}\label{subsec:consensus}
\vspace{-6pt}
For practical considerations, it is often desirable that a solution should be robust to network latency and possible worker or network failures. For example, some workers may take longer to finish their local computation (commonly known as {\it stragglers}), which blocks the overall progress if faster workers have to wait for the synchronous updates. This is very common in heterogeneous computing environments, where decentralized learning schemes are mostly employed. On the other hand, the network may be subject to a total communication budget ({\it e.g.}, bandwidth), such that a worker can only afford to communicate with a pre-defined number of its neighbors \citep{koloskova2020unified}.

We investigate $(i)$ asynchronous consensus updates, where stale model parameters are used for consensus updates; and $(ii)$ random communication graphs, where workers can only communicate to sub-sampled neighbors at each iteration. The following statement asserts that it suffices to prove the consensus protocols will achieve synchronization by themselves. 

\begin{lem}
\label{lemma}
	The average $\bar{\xv}^t$ of the Gossiping-type SGD algorithm satisfies the following:
	\vspace{-.2em}
	\begin{equation*}
	\begin{aligned}
	&\mathbb{E}_{\xi_1^{(t)}, \cdots, \xi_n^{(t)}}\left\|\bar{\xv}^{(t+1)} -\xv^*\right\|^2\\
	\leq& \left(1-\frac{\mu\eta_t}{2}\right)\left\|\bar{\xv}^{(t)}-\xv^*\right\|^2 +\frac{\eta_t^2\bar{\sigma}^2}{n} \\
	&-2\eta_t\left(1-2L\eta_t\right)e_t
	+\eta_t\frac{2\eta_tL^2+L\eta_t+\mu}{n}V^{(t)}
	\end{aligned}
	\vspace{-.5em}
	\end{equation*}
where $e_t = \mathbb{E}[f(\xv^{(t)})] - f^*$.
\end{lem}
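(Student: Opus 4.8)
The plan is to exploit the single structural fact that distinguishes any ``Gossiping-type'' scheme: the consensus step leaves the network average untouched. Because $\phiv$ acts coordinatewise as an odd function, $\phiv(\xv_j-\xv_i)=-\phiv(\xv_i-\xv_j)$, and because $\Wmat=\Wmat^\top$, the total consensus increment $\gamma\sum_{i}\sum_{j}\Wmat_{ij}\phiv(\xv_j^{(t)}-\xv_i^{(t)})$ cancels in antisymmetric pairs and vanishes. Hence $\bar{\xv}^{(t+1)}=\bar{\xv}^{(t+\frac12)}=\bar{\xv}^{(t)}-\eta_t\bar{\gv}^{(t)}$ with $\bar{\gv}^{(t)}=\frac1n\sum_i\gv_i^{(t)}$. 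This reduces the lemma to analyzing a \emph{virtual} SGD recursion on the mean iterate driven by the average of the local stochastic gradients, which is exactly why the statement applies verbatim to both linear and nonlinear gossiping.

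First I would expand $\|\bar{\xv}^{(t+1)}-\xv^*\|^2 = \|\bar{\xv}^{(t)}-\xv^*\|^2 - 2\eta_t\langle\bar{\gv}^{(t)},\bar{\xv}^{(t)}-\xv^*\rangle + \eta_t^2\|\bar{\gv}^{(t)}\|^2$ and take the conditional expectation over the independent draws $\xi_1^{(t)},\dots,\xi_n^{(t)}$. Unbiasedness $\mathbb{E}[\gv_i^{(t)}]=\nabla f_i(\xv_i^{(t)})$ replaces $\bar{\gv}^{(t)}$ by $\frac1n\sum_i\nabla f_i(\xv_i^{(t)})$ in the cross term, while for the quadratic term the bias--variance split together with cross-worker independence gives $\mathbb{E}\|\bar{\gv}^{(t)}\|^2 = \|\tfrac1n\sum_i\nabla f_i(\xv_i^{(t)})\|^2 + \tfrac{1}{n^2}\sum_i\mathbb{E}\|\gv_i^{(t)}-\nabla f_i(\xv_i^{(t)})\|^2 \le \|\tfrac1n\sum_i\nabla f_i(\xv_i^{(t)})\|^2 + \tfrac{\bar{\sigma}^2}{n}$, which is precisely the $\tfrac{\eta_t^2\bar{\sigma}^2}{n}$ term.

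The bulk of the work is to control the deterministic quantity $-2\eta_t\langle\frac1n\sum_i\nabla f_i(\xv_i^{(t)}),\bar{\xv}^{(t)}-\xv^*\rangle+\eta_t^2\|\frac1n\sum_i\nabla f_i(\xv_i^{(t)})\|^2$ and to convert every discrepancy between the local iterates $\xv_i^{(t)}$ and their mean $\bar{\xv}^{(t)}$ into the synchronization index via the variance identity $\frac1n\sum_i\|\xv_i^{(t)}-\bar{\xv}^{(t)}\|^2=\tfrac{1}{n}V^{(t)}$. For the cross term I would split $\bar{\xv}^{(t)}-\xv^*$ per worker and apply $\mu$-strong convexity of each $f_i$ to generate the contraction on $\|\bar{\xv}^{(t)}-\xv^*\|^2$ together with the descent term $-(f(\bar{\xv}^{(t)})-f^*)$; the residual inner products involving $\bar{\xv}^{(t)}-\xv_i^{(t)}$ are absorbed by $L$-smoothness and Young's inequality into $V^{(t)}$. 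For the squared-gradient term I would use $\|\frac1n\sum_i\nabla f_i(\xv_i^{(t)})\|^2\le 2\|\nabla f(\bar{\xv}^{(t)})\|^2+\tfrac{2L^2}{n}V^{(t)}$ followed by $\|\nabla f(\bar{\xv}^{(t)})\|^2\le 2L(f(\bar{\xv}^{(t)})-f^*)$; pairing the resulting $O(\eta_t^2)$ suboptimality contribution with the $O(\eta_t)$ descent term produces the factor $-2\eta_t(1-2L\eta_t)e_t$ after taking total expectation and identifying $e_t=\mathbb{E}[f(\bar{\xv}^{(t)})]-f^*$.

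The main obstacle is purely the constant bookkeeping: matching the contraction factor exactly to $(1-\tfrac{\mu\eta_t}{2})$ rather than $(1-\mu\eta_t)$, and collecting all disagreement remainders into the precise coefficient $\eta_t\tfrac{2\eta_tL^2+L\eta_t+\mu}{n}$. This forces a specific choice of the Young's-inequality parameters when separating the $\langle\nabla f_i(\xv_i^{(t)}),\bar{\xv}^{(t)}-\xv_i^{(t)}\rangle$ terms: one must deliberately spend half of the strong-convexity gain (hence $\mu/2$) to dominate a cross term, which is what demotes the contraction to $\mu/2$ and leaves the $O(\eta_t)$ disagreement term with coefficient $\mu$, while the smoothness-based estimates supply the $O(\eta_t^2)$ coefficients $2L^2$ and $L$. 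I expect no conceptual difficulty beyond keeping these splits consistent, and note that the synchronization index $V^{(t)}$ enters only as an exogenous quantity here, to be bounded separately by the finite-time consensus analysis.
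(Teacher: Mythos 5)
Your proof is correct, but note that the paper itself does not prove this lemma at all: in SM Sec.~A it is imported by citation from \citet{koloskova2019decentralized}, so your argument is in effect a reconstruction of the proof behind that citation. Your reconstruction matches that reference's strategy step for step: mean-preservation of the gossip step, the bias--variance split giving the $\eta_t^2\bar{\sigma}^2/n$ term, strong convexity applied per worker with half of the $\mu$-gain deliberately spent on converting $\frac{1}{n}\sum_i\|\xv_i^{(t)}-\xv^*\|^2$ into $\frac{1}{2}\|\bar{\xv}^{(t)}-\xv^*\|^2$ minus disagreement, and smoothness on the squared-gradient term to form $-2\eta_t(1-2L\eta_t)e_t$. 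What your write-up adds beyond the citation --- and it is the only ingredient actually specific to this paper --- is the explicit verification that the \emph{nonlinear} consensus step preserves the network average, via oddness of $\phiv$ and symmetry of $\Wmat$; the cited lemma is stated for linear (compressed) gossip, and without this observation it would not apply verbatim to NGO. The paper uses the same fact inside its proof of Theorem 3.1 (that $\sum_{(i,j)\in\mathcal{E}}\Wmat_{ij}\phiv(\xv_j-\xv_i)=0$, hence $\bar{\xv}$ is invariant) but never re-invokes it for this lemma, so your treatment is more self-contained than the paper's. One bookkeeping caveat: the derivation you describe yields the coefficient $\eta_t\frac{2\eta_t L^2+L+\mu}{n}$ on $V^{(t)}$ (the form stated in the paper's appendix and in the cited reference), not the main-text coefficient $\eta_t\frac{2\eta_t L^2+L\eta_t+\mu}{n}$ that you quote; since $\eta_t<1$ the main-text form is strictly smaller and is not implied by your (or the cited) argument --- it is evidently a typo in the main text, and what you prove is the appendix version.
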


\begin{lem}
Gossiping-type SGD will converge provided the consensus protocol can reach synchronization.
\end{lem}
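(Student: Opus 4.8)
The plan is to treat Lemma~\ref{lemma} as a protocol-agnostic master recursion and to reduce the convergence of any gossiping-type SGD to the control of the accumulated disagreement $\sum_t w_t V^{(t)}$. The crucial observation is that the specific consensus protocol (synchronous, asynchronous, or randomized) enters the one-step bound of Lemma~\ref{lemma} \emph{only} through the term $\eta_t \frac{2\eta_t L^2 + L\eta_t + \mu}{n} V^{(t)}$; every other contribution is driven purely by the local SGD updates and is therefore identical to the centralized case already handled in Theorem~\ref{centralized}. Hence, once the recursion is carried through to a rate, convergence holds as soon as this single disagreement term is shown to be negligible, which is exactly the meaning of ``the consensus protocol can reach synchronization.''

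First I would rearrange Lemma~\ref{lemma} to isolate the optimality gap $e_t$, writing, with $r_t \triangleq \mathbb{E}\|\bar{\xv}^{(t)} - \xv^*\|^2$,
\begin{equation*}
2\eta_t(1 - 2L\eta_t)\, e_t \leq \left(1 - \tfrac{\mu\eta_t}{2}\right) r_t - r_{t+1} + \frac{\eta_t^2 \bar{\sigma}^2}{n} + \eta_t\,\frac{2\eta_t L^2 + L\eta_t + \mu}{n}\, V^{(t)}.
\end{equation*}
The schedule $\eta_t = \frac{4}{\mu(a+t)}$ with $a \geq 15\kappa$ keeps $1 - 2L\eta_t$ bounded away from zero for every $t$, so the coefficient of $e_t$ is positive and the inequality is informative. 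I would then multiply through by the weights $w_t = (a+t)^2$ and sum over $t = 0, \dots, T-1$. The schedule is chosen precisely so that the contraction factor satisfies $w_t\left(1 - \frac{\mu\eta_t}{2}\right) \leq w_{t-1}$, which makes the $r_t$-terms telescope and leaves only the initial value $r_0 = \|\bar{\xv}^{(0)} - \xv^*\|^2$.

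After telescoping, the surviving pieces are an initial-condition term, a noise term built from $\sum_t w_t \eta_t^2 \bar{\sigma}^2 / n$, and a disagreement term built from $\sum_t w_t \eta_t (\cdots) V^{(t)} / n$. Dividing by $S_T = \sum_t w_t \geq \frac{1}{3}T^3$ and invoking convexity of $f$ through Jensen's inequality on $\xv_{\text{avg}}^T = \frac{1}{S_T}\sum_t w_t \bar{\xv}^{(t)}$ yields $f(\xv_{\text{avg}}^T) - f^* \leq \frac{1}{S_T}\sum_t w_t e_t$, reproducing the three-term bound of Theorem~\ref{main thm}, with the last term equal to $\frac{2L+\mu}{n S_T}\sum_{t} w_t V^{(t)}$. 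Up to this point nothing has been assumed about the protocol.

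The decisive step is to absorb the disagreement term using the synchronization hypothesis. If the protocol enjoys finite-time consensus, then $V^{(t)} = 0$ for $t > T^*(\Wmat, p)$, so $\sum_t w_t V^{(t)} = \sum_{t=0}^{T^*(\Wmat,p)} w_t V^{(t)}$ is a fixed quantity independent of $T$; dividing by $S_T \gtrsim T^3$ drives it to zero and recovers exactly the rate of Theorem~\ref{main thm}. For merely asymptotic synchronization ($V^{(t)} \to 0$), the same conclusion follows once $\sum_t w_t V^{(t)} = o(S_T)$. I expect the main obstacle to lie precisely here: making the informal phrase ``reach synchronization'' quantitative enough that the weighted sum is provably $o(S_T)$ against the cubic growth of $w_t = (a+t)^2$. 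For finite-time consensus this is immediate; for the asynchronous and randomized variants one must separately establish the decay rate of $V^{(t)}$ via a self-contained consensus estimate that never references the objective. This separation is exactly why the lemma is phrased as a reduction: it cleanly decouples the optimization analysis carried out above from the protocol-specific consensus analysis.
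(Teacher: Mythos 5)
Your proposal is correct and takes essentially the same route as the paper: the paper's own proof sketch likewise inserts the synchronization-induced control of $V^{(t)}$ into the recursion of Lemma~\ref{lemma} and concludes convergence of $e_t$, and its appendix carries this out with exactly the weighted telescoping (via Lemma 23 of \citet{koloskova2019decentralized}) that you describe, with the finite-time-consensus case truncating the disagreement sum at $T^*$ just as you argue. The only difference is cosmetic: the paper compresses the recursion analysis into the phrase ``contraction mapping,'' whereas you spell out the weighting, the telescoping, Jensen's inequality, and the Ces\`aro-type absorption of $\sum_t w_t V^{(t)}$ against $S_T$, which is, if anything, more explicit than the paper's one-line sketch.
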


{\it Sketch of proof.} Since the consensus protocol can reach synchronization, the Lyapunov function $V^{(t)}$ can be bounded. Inserting this bound into (\ref{lemma}), via an application of contraction mapping, the convergence of $e_t$ can be established, {\it  i.e.}, $f(\xv^{(t)})\rightarrow f^*$ as $t\rightarrow \infty$. \hfill $\square$

{\bf Nonlinear consensus with communication delays.} To avoid technical clutter, we consider the following simplified nonlinear gossiping consensus problem, with delayed communications.  
\vspace{-.2em}
\begin{equation}\label{delay}
\xv_i^{(t+1)}  = \xv_i^{(t)} + \sum_{j:(i,j)\in \mathcal{E}} \Wmat_{ij} \phiv\left(\xv_j^{(t-\tau)} - \xv_i^{(t-\tau)}\right).
\vspace{-.5em}
\end{equation}
The following statement gives the maximum tolerance of communication delays for consensus to work in terms of the spectrum of the graph Laplacian. 
\begin{thm}[NGO convergence with stale updates]
The consensus update defined by (\ref{delay}) asymptotically solves the average consensus problem with a uniform time-delay $\tau$ for all initial states if and only if $0 \leq \tau\leq  \pi/{2\lambda_n(\Lmat)}$.
\end{thm}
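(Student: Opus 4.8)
The plan is to recognize this as the classical delayed average-consensus analysis of Olfati-Saber and Murray, specialized to our symmetric, undirected setting; the appearance of the exact threshold $\pi/(2\lambda_n(\Lmat))$ is the tell-tale signature of a linear frequency-domain argument, so I would carry out the computation on the linear consensus dynamics and treat the nonlinearity $\phiv$ as a separate matter (see below). Passing to the continuous-time limit of (\ref{delay}) with linear coupling and stacking the node states, the dynamics become $\dot{\xv}(t) = -\Lmat\,\xv(t-\tau)$, since $\sum_{j}\Wmat_{ij}(\xv_j - \xv_i) = -(\Lmat\xv)_i$. Because the problem decouples across the $d$ coordinates, it suffices to analyze the scalar-per-node system, i.e. $\Lmat$ acting on $\BR^n$.

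The second step is to diagonalize. Since $\mathcal{G}$ is undirected, $\Lmat$ is symmetric positive semidefinite, so there is an orthogonal $\Umat$ with $\Umat^\top \Lmat \Umat = \mathrm{diag}(\lambda_1(\Lmat), \ldots, \lambda_n(\Lmat))$, where $0 = \lambda_1 \le \lambda_2 \le \cdots \le \lambda_n$ and the eigenvector for $\lambda_1 = 0$ is $\mathbf{1}/\sqrt{n}$ (connectivity giving $\lambda_2 > 0$). Setting $\yv = \Umat^\top \xv$ decouples the system into $n$ scalar delay differential equations $\dot{y}_i(t) = -\lambda_i(\Lmat)\,y_i(t-\tau)$. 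The mode $i=1$ obeys $\dot{y}_1 = 0$, so $y_1$ is conserved; this is precisely the preserved average $\tfrac{1}{n}\mathbf{1}^\top\xv$, which pins down the consensus value. Average consensus is therefore equivalent to $y_i(t)\to 0$ for every $i\ge 2$.

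The third step is the scalar stability criterion. Each mode $i \ge 2$ has characteristic equation $s + \lambda_i(\Lmat)\,e^{-s\tau} = 0$, and I would locate the stability boundary by seeking purely imaginary roots $s = \mathrm{j}\omega$: separating real and imaginary parts gives $\lambda_i\cos(\omega\tau) = 0$ and $\omega = \lambda_i\sin(\omega\tau)$, whose first nonnegative crossing is $\omega\tau = \pi/2$, hence $\omega = \lambda_i$ and $\lambda_i\tau = \pi/2$. A standard root-continuity (Nyquist) argument then shows all roots lie in the open left half-plane iff $\lambda_i\tau < \pi/2$. Since the constraint tightens as $\lambda_i$ grows, the binding mode is $i=n$, yielding asymptotic consensus iff $\tau < \pi/(2\lambda_n(\Lmat))$; conversely, once $\tau$ exceeds this value the $n$-th mode acquires a right-half-plane root and consensus fails for generic initial states, which supplies the ``only if'' direction. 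At the boundary $\tau = \pi/(2\lambda_n)$ the top mode is marginally stable (sustained oscillation), so strictly only asymptotic, not boundary, stability holds; I would either state the result with strict inequality or flag this endpoint explicitly.

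The main obstacle is the nonlinearity: the Nyquist/characteristic-equation machinery is intrinsically linear, whereas (\ref{delay}) carries the sublinear coupling $\phiv$. The clean threshold $\pi/(2\lambda_n(\Lmat))$ is exactly the linear one, so the honest reading is that the delay bound governs the linear consensus dynamics, with $\phiv$ responsible for the finite-time behavior established in the finite-time consensus theorem above. To make the nonlinear statement fully rigorous one would linearize about the consensus manifold and argue that $\phiv$, being odd and monotone with $\phiv(\zv)\to 0$ as $\zv\to 0$, does not enlarge the admissible delay --- e.g. via a Lyapunov--Krasovskii functional on $V(t)$ carrying a delay-dependent bound --- rather than relying on the spectral computation alone. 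I expect reconciling the exact spectral threshold with the presence of $\phiv$ to be the delicate part of a complete write-up.
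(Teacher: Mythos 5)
Your frequency-domain argument is the classical Olfati-Saber--Murray proof, and it is essentially all the paper has: the appendix does not actually prove this theorem, it merely restates the continuous-time \emph{linear} delayed-consensus result (diagonalize $\Lmat$, scalar modes $\dot{y}_i(t) = -\lambda_i y_i(t-\tau)$, threshold from the first imaginary-axis crossing at $\omega\tau = \pi/2$) as a cited fact, with no derivation and no mention of $\phiv$. Both discrepancies you flag are therefore real defects of the paper rather than of your proposal. First, the appendix statement covers only the linear coupling $\Wmat_{ij}\left(\xv_j(t-\tau)-\xv_i(t-\tau)\right)$, while the main-text theorem claims the same threshold for the nonlinear update (\ref{delay}); the paper never bridges this gap, so your point that the characteristic-equation machinery is intrinsically linear, and that one would need a linearization about the consensus manifold or a Lyapunov--Krasovskii functional to handle $\phiv$, identifies exactly the missing step in the paper's treatment. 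Second, the endpoint: the appendix version reads $0 \leq \tau < \pi/(2\lambda_n)$ with strict inequality, consistent with your observation that at $\tau = \pi/(2\lambda_n)$ the top mode has a purely imaginary characteristic root and only sustains an oscillation, whereas the main-text statement writes $\leq$; your strict inequality is the correct one. In short, your write-up reproduces the argument the paper implicitly relies on, and is more careful than the paper itself on the two points where the stated theorem and its supporting material disagree.
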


\paragraph{Nonlinear consensus on random graphs} We further study how the consensus protocol cope with communication uncertainties. In particular, we consider the following random graph model, as a tractable mathematical simplification of our problem. Let $u$ be the incidence rate of an edge, and the adjacency matrix $\Wmat^{(t)}(u)$ for graph $G \in \mathcal{G}(n,u)$. Define the following model
\vspace{-.8em}
\begin{equation}
\label{aijp}
	\Wmat_{ij}^{(t)}(u) = \begin{cases}
	1, ~~\mbox{with prob. $u$}, \\
	0, ~~\mbox{with prob. $1- u$}.\\
	\end{cases}
	\vspace{-.5em}
\end{equation}
For each time $t$, we have $\Wmat^{(t)}(u)$ as an \iid draw from the above model, which poses NGO with randomized communications as the following random dynamic system:
\vspace{-.5em}
\begin{equation}\label{eq:random}
\xv_i^{(t+1)}  = \xv_i^{(t)} + \sum_{j:(i,j)\in \mathcal{E}} \Wmat_{ij}^{(t)}(u) \phiv\left(\xv_j^{(t)} - \xv_i^{(t)}\right).
\vspace{-.5em}
\end{equation}
Note this model subsumes cases with random communication failures or limited communications discussed at the beginning of this subsection. The following statement verifies its consensus property. 
\begin{thm} The random dynamic system defined by (\ref{aijp})-(\ref{eq:random}) reaches consensus with probability $1$.
\end{thm}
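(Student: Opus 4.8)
The plan is to treat $V^{(t)} = \sum_i \|\xv_i^{(t)} - \bar{\xv}^{(t)}\|^2$ as a stochastic Lyapunov function and show $V^{(t)}\to 0$ almost surely via a supermartingale argument. First I would establish that the running average $\bar{\xv}^{(t)}$ is invariant under (\ref{eq:random}) for \emph{every} realization of the random graph: since $\Wmat^{(t)}(u)$ is symmetric and $\phiv$ is odd (each coordinate $\text{sign}(z)|z|^{2p-1}$ is an odd function), the double sum $\sum_{i,j}\Wmat_{ij}^{(t)}\phiv(\xv_j^{(t)}-\xv_i^{(t)})$ equals its own negative and hence vanishes, giving $\bar{\xv}^{(t)} \equiv \bar{\xv}^{(0)}$. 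Consequently consensus is equivalent to $V^{(t)}\to 0$, and the disagreement vectors $\bm{\delta}_i^{(t)} = \xv_i^{(t)}-\bar{\xv}^{(0)}$ obey the same recursion with $\xv_j-\xv_i = \bm{\delta}_j-\bm{\delta}_i$.

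Next I would expand one step as $V^{(t+1)} = V^{(t)} + 2A^{(t)} + B^{(t)}$, with cross term $A^{(t)} = \sum_{i,j}\Wmat_{ij}^{(t)}\langle \bm{\delta}_i, \phiv(\bm{\delta}_j-\bm{\delta}_i)\rangle$ and quadratic term $B^{(t)} = \sum_i \|\sum_j \Wmat_{ij}^{(t)}\phiv(\bm{\delta}_j-\bm{\delta}_i)\|^2$. The same symmetrization turns $A^{(t)}$ into $-\tfrac12\sum_{i,j}\Wmat_{ij}^{(t)}\langle \bm{\delta}_j-\bm{\delta}_i,\phiv(\bm{\delta}_j-\bm{\delta}_i)\rangle$, which is manifestly $\le 0$ because $\langle \zv,\phiv(\zv)\rangle = \sum_k |z_k|^{2p}\ge 0$. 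Taking the conditional expectation over the independent edges (each present with probability $u$) replaces $\Wmat_{ij}^{(t)}$ by $u$, so $\EE[A^{(t)}\mid \CF_t] = -\tfrac{u}{2}\sum_{i,j}\sum_k |(\bm{\delta}_j-\bm{\delta}_i)_k|^{2p} =: -\tfrac{u}{2}\,g(\bm{\delta}^{(t)})$, a strictly negative drift whenever the workers disagree.

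I would then show $\EE[2A^{(t)}+B^{(t)}\mid \CF_t]\le -c\, g(\bm{\delta}^{(t)})$ for some $c>0$, making $V^{(t)}$ a nonnegative supermartingale. By martingale convergence $V^{(t)}$ converges a.s. to a finite limit, and summing the drift inequality yields $\sum_t \EE[g(\bm{\delta}^{(t)})]\le \tfrac{1}{c}V^{(0)}<\infty$, forcing $g(\bm{\delta}^{(t)})\to 0$ almost surely. Since the expected graph from (\ref{aijp}) assigns positive weight $u$ to every pair and is therefore connected, $g(\bm{\delta})=0$ forces all pairwise differences $\bm{\delta}_j-\bm{\delta}_i$ to vanish; together with the invariance of the average this gives $\bm{\delta}_i\equiv 0$, i.e. consensus with probability $1$.

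The main obstacle is the quadratic remainder $B^{(t)}$: controlling it so the negative drift from $A^{(t)}$ dominates. Because $2p-1<1$, the coupling $\phiv$ has unbounded slope at the origin, so near consensus a unit step may overshoot and $B^{(t)}$ need not be small relative to $g$. I expect to handle this by bootstrapping — first showing $V^{(t)}$ stays bounded, then using that bound (or an appropriately small consensus step-size $\gamma$ absorbed into the weights) to dominate $B^{(t)}$ by a fraction of $u\,g(\bm{\delta}^{(t)})$ via Hölder and Young inequalities, thereby securing the supermartingale property and the summability that drives the almost-sure argument.
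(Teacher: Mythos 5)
Your skeleton is sound up to the step you defer: the average-invariance argument, the decomposition $V^{(t+1)}=V^{(t)}+2A^{(t)}+B^{(t)}$, the symmetrization showing $A^{(t)}\le 0$, and the conditional-expectation identity $\EE[A^{(t)}\mid\CF_t]=-\tfrac{u}{2}g(\bm{\delta}^{(t)})$ are all correct. But the deferred step is not a technical loose end --- the drift inequality $\EE[2A^{(t)}+B^{(t)}\mid\CF_t]\le -c\,g(\bm{\delta}^{(t)})$ you need for the supermartingale property is \emph{false} for the unit-step recursion (\ref{eq:random}) near consensus, and the almost-sure conclusion fails with it. The scaling is decisive: over an edge difference $\zv=\bm{\delta}_j-\bm{\delta}_i$, the cross term contributes $-|z|^{2p}$ per coordinate while the quadratic term contributes $|z|^{2(2p-1)}=|z|^{4p-2}$, and since $4p-2<2p$ for $p<1$ the ratio $B^{(t)}/|A^{(t)}|\sim |z|^{2p-2}\to\infty$ as $z\to 0$. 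Concretely, take $n=2$, $d=1$, $p=\tfrac12$, so $\phiv(z)=\sign(z)$: each time the edge appears, the difference $z=\xv_2-\xv_1$ maps to $z-2\sign(z)$, so from $z=\tfrac12$ the trajectory alternates between $\tfrac12$ and $-\tfrac32$ forever, and $V^{(t)}$ oscillates between $\tfrac18$ and $\tfrac98$ even though the edge recurs infinitely often a.s.; for every $p\in[\tfrac12,1)$ the state $|z|=1$ is likewise a (for $p>\tfrac12$, attracting) period-2 orbit. So no nonnegative supermartingale exists along your decomposition, and no H\"older/Young bootstrapping or boundedness argument can manufacture one: the obstruction is in the dynamics, not in the estimates. (This also shows the discrete, fixed-unit-step reading of the theorem needs a step-size qualification; note that even the paper's discrete \emph{linear} consensus result in SM Sec.~A carries the restriction $\gamma<1/\Delta$.)

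The paper avoids this issue entirely by arguing in continuous time: there the Lyapunov computation used for Theorem 3.1 gives $\dot V=2\gamma A(t)$ with no second-order term, so $V$ is nonincreasing along every realization, and the random topology is handled by appeal to the switching-network consensus theorem of \citet{olfati2004consensus} quoted in SM Sec.~A, not by a martingale argument. (Even there a small repair is needed, since an Erd\H{o}s--R\'enyi realization need not be connected at any given instant, so the switching theorem's standing connectivity hypothesis does not literally apply; your observation that every edge recurs a.s.\ is exactly what fixes this.) Your discrete stochastic-approximation route can be salvaged, and would then be a genuinely different and arguably more honest proof, if you introduce a consensus step-size $\gamma_t$ into (\ref{eq:random}) with $\sum_t\gamma_t=\infty$, $\sum_t\gamma_t^2<\infty$. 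Then $V^{(t+1)}=V^{(t)}+2\gamma_tA^{(t)}+\gamma_t^2B^{(t)}$, and since $|z|^{4p-2}\le 1+|z|^2$ you get $B^{(t)}\le C\left(1+V^{(t)}\right)$, while superadditivity of $x\mapsto x^p$ gives $g(\bm{\delta}^{(t)})\ge \left(2nV^{(t)}\right)^p$; hence $\EE[V^{(t+1)}\mid\CF_t]\le \left(1+C\gamma_t^2\right)V^{(t)}+C\gamma_t^2-u\gamma_t\left(2nV^{(t)}\right)^p$, and the Robbins--Siegmund theorem yields $V^{(t)}\to V_\infty$ a.s.\ with $\sum_t\gamma_t\left(V^{(t)}\right)^p<\infty$ a.s., which forces $V_\infty=0$. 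As written, however, your proposal has a genuine gap at its central step.
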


\vspace{-8pt}
\section{Related Work}
\label{sec:rel}
\vspace{-6pt}

{\bf Stochastic gradient descent} is the {\it de facto} practice in solving large-scale learning problems, which iteratively updates model parameters with noisy gradients \citep{zhang2004solving, bottou2010large, ghadimi2013stochastic}. Empirically, speedy convergence is expected from the SGD-type algorithm due to a better trade-off between faster iterations and a slightly slower convergence rate \citep{ruder2016overview, li2014efficient}. 
Scaling up SGD with parallel computation is receiving growing attention in recent years \citep{bordes2009sgd, lian2017can, jahani2018efficient, qu2017accelerated, yu2019linear}, and has witnessed great success \citep{goyal2017accurate}. Other progress in distributed SGD setting has been made, {\it e.g.}, addressing the robustness to malicious inputs \citep{alistarh2018byzantine} and how to make performance-adaptive updates \citep{teng2019leader}. The proposed NGO bears resemblance to recent nonlinear variants of SGD, such as sign-SGD \citep{bernstein2018signsgd}. The difference lies in we apply nonlinearity to the consensus protocol, rather than model updates, yet synergies can be exploited.

{\bf Decentralized coordination} is a naturally occurring phenomenon observed in many physical systems, such as flocking of birds and schooling of fish. Consequently, the study of network coordination with dynamic agents is an active topic in dynamical systems \citep{kempe2003gossip, xiao2004fast}.
The seminal work of \citet{olfati2004consensus} first established convergence analysis of non-trivial consensus protocols for a network of integrators with a directed information flow and fixed or switching topology. A comprehensive review on the topic can be found in \citet{motter2013spontaneous}. More recently, \citet{wang2010finite, lu2016note} investigated finite-time consensus problems for multi-agent systems and presented a framework for constructing effective distributed protocols. In this study, we explore how these more general consensus protocols can be leveraged to benefit distributed learning.

{\bf Decentralized optimization} has attracted lots of attention due to the recent explosion of data and model complexity. Gossiping-type schemes combined with SGD, showing sub-linear convergence to optimality can be achieved via gradually diminishing step-size \citep{nedic2009distributed}. A number of later contributions \citep{shi2015extra,johansson2008subgradient,jakovetic2014fast,matei2011performance} extended decentralized SGD to other settings, including stochastic networks, constrained problems and noisy environments. Under the non-\iid setting, \citet{tang2018communication} investigated how better performance can be achieved using decentralized SGD. \citet{lian2017can} argued that decentralized algorithms are no compromise for their centralized counterparts, and on par performance can be expected. Our work complements existing studies via generalizing consensus protocols. Additionally, recent analysis on the learning dynamics of deep neural nets suggests distributed learners might be better synced in the nonlinear regime \citep{chizat2019lazy}.

{\bf Worker asynchrony and random graph topology} promise to ameliorate the communication barrier in a distributed setup. Discussion on asynchronous SGD variants and their robustness to communication delays can be found in \citep{zheng2017asynchronous, mitliagkas2016asynchrony, Zhou2018, Mania2015}. Concerning the graph topology \citep{nedic2017achieving, hale2017convergence}, randomized gossiping has been investigated in \citep{hatano2005agreement, boyd2006randomized} and \citet{wang2019matcha} leveraged graph decomposition and randomly sample sub-graphs in each synchronization iteration to improve mixing. Random graph topology has also been considered in the centralized setting \citep{yu2018distributed}. \citet{koloskova2020unified} proposed the decentralized SGD on time-varying random graph. We found our NGO can boost the performance in both cases, furthermore, the randomized connection with a small communication budget under NGO is comparable with the centralized learning.

{\bf Communication efficiency} constitutes one of the most active research topics in distributed learning. There have been myriad of proposals to promote this, for instance by increasing parallelism and increasing computation on
each worker \citep{mcmahan2016communication, zhang2012communication}. Another line of work is to reduce the communication frequency \citep{seide2014one, zinkevich2010parallelized, stich2018local}. Instead of transmitting a full gradient, methods with gradient compression, such as quantization and sparsification, can be a more efficient way to transmit useful information \citep{wen2017terngrad, alistarh2017qsgd, zhang2017zipml, koloskova2019decentralized}. This work investigates an orthogonal problem, which is how to achieve better synchrony under the same communication budget. 

\vspace{-8pt}
\section{Experiments}
\vspace{-6pt}
To validate the utility of the proposed NGO algorithm and benchmark its performance against prior arts, we consider a wide range of synthetic and real-world tasks. All experiments are implemented with PyTorch, and our code is available from \url{https://github.com/author_name/NGO}. Details of our setup \& more results are deferred to the SM Sec. B and C. 

\vspace{-8pt}
\subsection{Experimental setup}
\vspace{-6pt}
{\bf Datasets.} We consider the following real-world dataset
($i$) {\texttt{MNIST}} ($ii$) \texttt{CIFAR10} standard image classification tasks; ($iii$) {\texttt{TINYIMAGENET}} \citep{le2015tiny}: a scaled-down version of the classic natural image dataset \texttt{IMAGENET}, comprised of $200$ classes, $500$ samples per class and $10k$ validation images.

\vspace{-2pt}
{\bf Data model.} We distribute the $m$ data samples evenly across the $n$ workers, with one of the following two data models: 
\begin{itemize}
	\vspace{-1em}
    \item[$(i)$] \iid setting, where each worker see a \iid copy of data from the same distribution (all labels);
    \vspace{-.5em}
    \item[$(ii)$] Non-\iid setting, where each worker only sees data with a subset of labels. 
\end{itemize} 
\vspace{-1em}
We ensure non-overlapping data assignment, so each data point is assigned to only one worker.
Note that learning with non-\iid data is very challenging.

\vspace{-2pt}
{\bf Baselines.} The following competing baselines are considered to benchmark the proposed solution: $(i)$
centralized learning (parameter server);
$(ii)$ decentralized SGD without compression \citep{lian2017asynchronous},
$(iii)$ Local SGD \citep{stich2018local}.

\vspace{-2pt}
{\bf Network topology.} We consider the following network topologies: ring, random-connected and fully-connected (complete) graphs. 
Unless otherwise specified, we use the ring topology by default for illustrative purposes. This is the most constrained topology, as each worker can only interact with two neighbors. 

\vspace{-2pt}
    {\bf Performance metrics.} We are particularly interested in the level of synchronization, as measured by synchronization index $V$, in addition to the algorithmic convergence in terms of test accuracy and test loss ({\it e.g.}, cross entropy for the supervised learning task and \text{ELBO} for the VI task).

\begin{table*}[t!]
\caption{Comparison of linear and nonlinear Gossiping for supervised learning (Acc) \label{tab:classification}}
\setlength{\tabcolsep}{3pt}
\begin{center}
\begin{small}
\begin{sc}
\begin{tabular}{clccclcc}
\toprule
Dataset & Topology & Gossip & NGO & Dataset & Topology & Gossip & NGO\\
\midrule
\multirow{4}{*}{\iid}
& Centralized &\multicolumn{2}{c}{\textbf{97.47}} &\multirow{4}{*}{Non \iid} &Centralized &\multicolumn{2}{c}{\textbf{83.87}} \\
& Ring & 96.47 & \textbf{96.52}&& Ring & 62.30 & \textbf{64.84}\\
\multirow{2}{*}{\texttt{MNIST}} & Random & 96.48 & \textbf{96.67} &\multirow{2}{*}{\texttt{MNIST}}& Random & 66.72 & \textbf{70.02} \\
& Complete & 96.35 & \textbf{96.70} && Complete & 71.28 & \textbf{73.61}\\
\midrule
\multirow{4}{*}{\iid}
& Centralized &\multicolumn{2}{c}{\textbf{74.52}} &\multirow{4}{*}{Non \iid} & Centralized &\multicolumn{2}{c}{\textbf{70.40}} \\
& Ring & 70.48 & \textbf{72.38} && Ring & 32.49 & \textbf{34.14} \\
\multirow{2}{*}{\texttt{CIFAR10}}& Random & 72.24 & \textbf{73.57} &\multirow{2}{*}{\texttt{CIFAR10}}& Random & 50.11 & \textbf{51.10} \\
& Complete & 72.25 & \textbf{73.42} && Complete & 52.56 & \textbf{53.35}\\
\midrule
\multirow{4}{*}{\iid}
& Centralized &\multicolumn{2}{c}{\textbf{38.39}} &\multirow{4}{*}{Non \iid} & Centralized &\multicolumn{2}{c}{\textbf{36.79}} \\
& Ring & 35.44 & \textbf{36.56}
&& Ring  & 30.67  & \textbf{31.25}\\
\multirow{2}{*}{\texttt{TinyImagenet}}& Random & 37.51 & \textbf{37.84} & \multirow{2}{*}{\texttt{TinyImagenet}}& Random & 31.60 & \textbf{32.83}\\
& Complete & 37.67 & \textbf{37.95} && Complete & 31.93
& \textbf{32.78}\\
\bottomrule
\end{tabular}
\end{sc}
\end{small}
\end{center}
\vspace{-1em}
\end{table*}


\vspace{-8pt}
\subsection{Impact of nonlinearity}
\vspace{-6pt}
To investigate how the level of nonlinearity (parameter $p$) interacts with learning and synchronization rates to impact synchronization, convergence, and the robustness of training, we plot both synchronization and convergence performance of NGO with varying nonlinearity in Figure \ref{fig:nonlin_para}. As we increase the level of nonlinearity ({\it i.e.,} by decreasing $p$), both performances improve. This is consistent with our theory: NGO enjoys the finite-time consensus property thus synchronizes better with a smaller $p$, which in term reduces the synchronization index discussed in Theorem \ref{centralized}.

\vspace{-8pt}
\subsection{Random graph}
\vspace{-6pt}
We further evaluate NGO with random graphs. We systematically vary the edge probability (Eqn (\ref{aijp})) under the \iid setup, and give the results in Figure \ref{fig:random}. 
After some initial burn-in, the performance of NGO with a rather small $u=0.4$ is on par with that of centralized SGD. We notice that the expectation of a random graph connecting edge numbers with $u=0.2$ is equivalent to a ring topology for 10 workers. Compared to a fixed topology ({\it e.g.}, ring), better performance is observed with the randomized communications with a similar communication budget (See Table \ref{tab:classification} for more results). This can be explained by better information mixing can be expected when workers randomly sync their individual states.
\begin{figure}[t]
	\begin{center}
	\hspace{-1.5em}
	    \includegraphics[width=.5\textwidth]{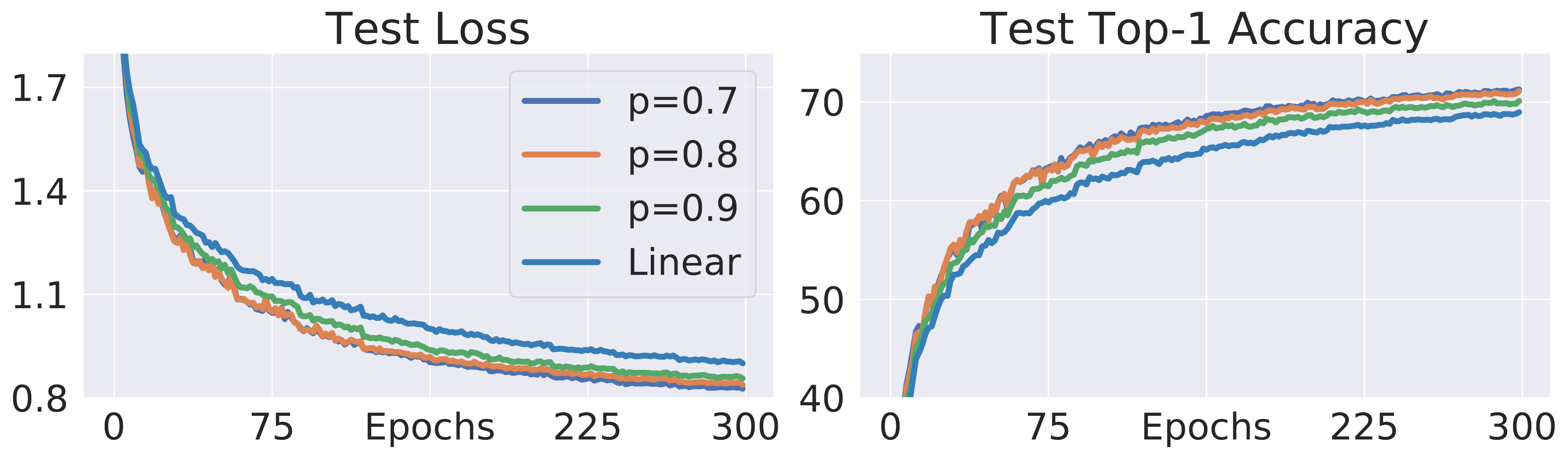}
	    \vspace{-1em}
		\caption{Performance with different choice of NGO parameters.\label{fig:nonlin_para}}
	\end{center}
	 \vspace{-1em}
\end{figure}
\begin{figure}[t]
	\begin{center}
	\hspace{-1.5em}
	    \includegraphics[width=.5\textwidth]{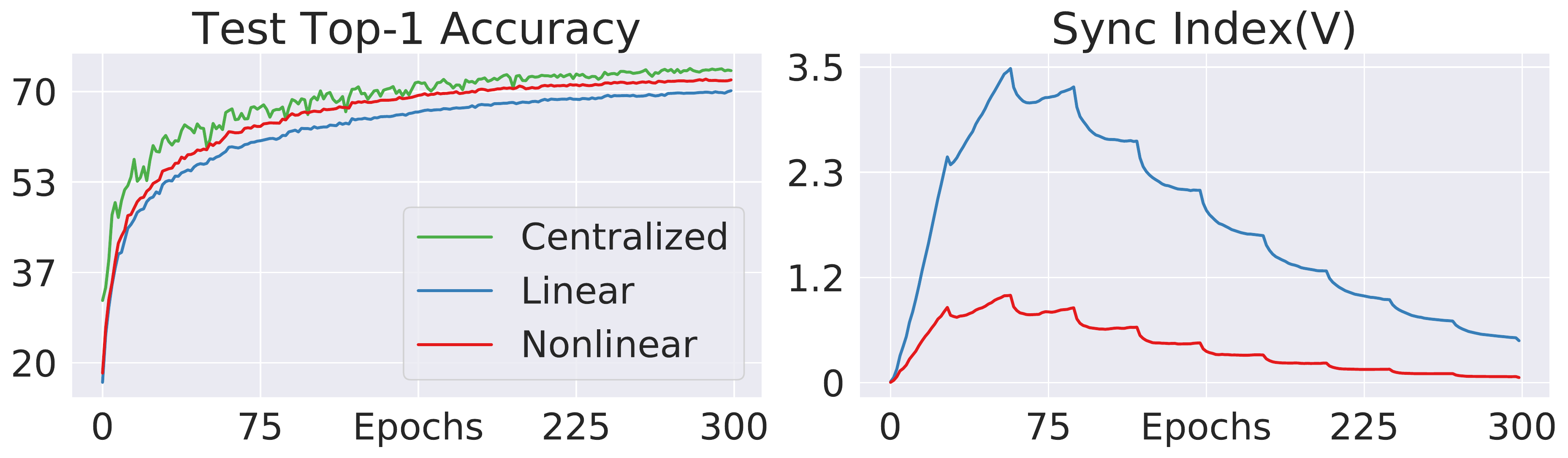}
	    \vspace{-1em}
		\caption{
		Accuracy and synchronization dynamics of linear and nonlinear gossiping and centralized learning on random graph ($u=0.4$). \label{fig:random}}
	\end{center}
	 \vspace{-1.5em}
\end{figure}

\begin{figure}[t]
	\begin{center}
	\hspace{-1.5em}
	    \includegraphics[width=0.5\textwidth]{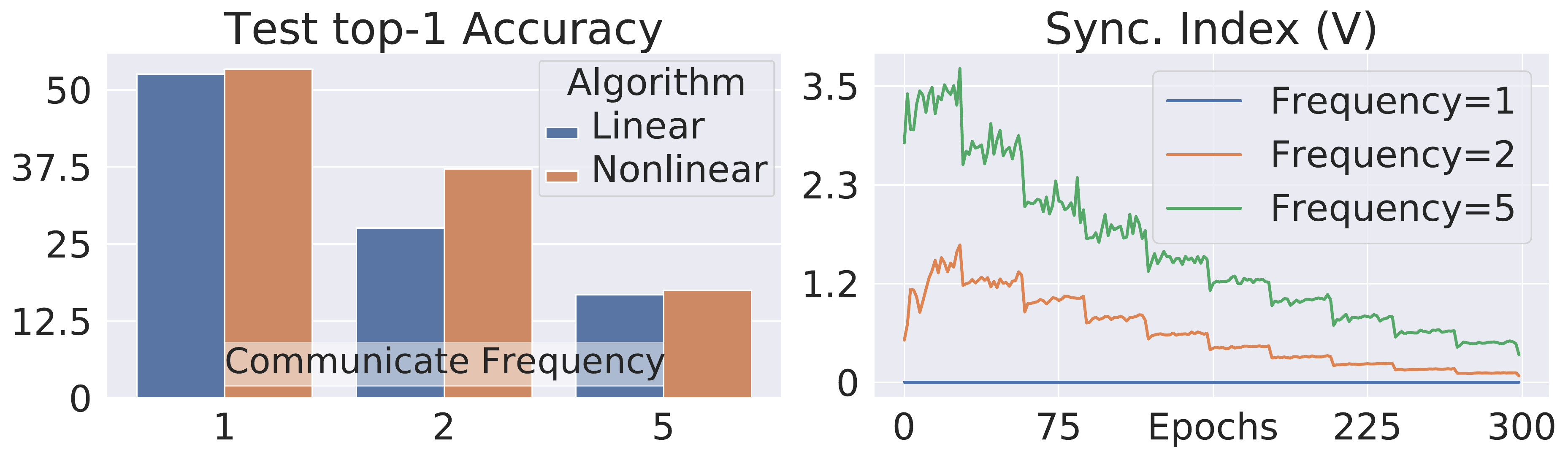}
	    \vspace{-1em}
		\caption{(left) Comparison of local SGD, linear and nonlinear gossiping SGD under the non-\iid setting with complete graph. (right) Synchronization index. \label{fig:localsgd}}
	\end{center}
	 \vspace{-2em}
\end{figure}

\vspace{-8pt}
\subsection{Communication and convergence}
\vspace{-6pt}
We offer an alternative view that to alleviate the communication burden in distributed learning, one should optimize the communication efficiency rather than reducing the communication frequency (which is a practice in local SGD \citep{stich2018local}), especially so for the non-\iid setting. To confirm this, we reduce the consensus update frequency to implement a local SGD variant, and compare its performance with regular implementations. Specifically, we set the synchronization frequency set as the multiple of 1 (gossip SGD), 2 and 5 (local SGD), {\it i.e.,} workers communicate with their neighbors every 1, 2 and 5 iterations of SGD. In Figure \ref{fig:localsgd}, we see that local SGD fails to match the performance with their chatty counterparts. This issue is less severe in the \iid setting, since each worker updates its parameters towards an approximately correct direction. However, slower convergence can be expected (see SM Sec. C). 



\vspace{-8pt}
\subsection{Scaling to large number of workers}
\vspace{-6pt}
To study the scaling properties of NGO, we train on 4, 16, 36 and 64 number of nodes. We give every worker the same amount of data regardless of the network size. The results are summarized in (Figure \ref{fig:scaling}). Centralized learning has a good performance for all scales of networks. For decentralized protocols, our NGO consistently outperforms linear gossiping. Better performance is expected with the increase of workers as more data is utilized in training.

\begin{figure}[t]
	\begin{center}
		\includegraphics[width=0.5\textwidth]{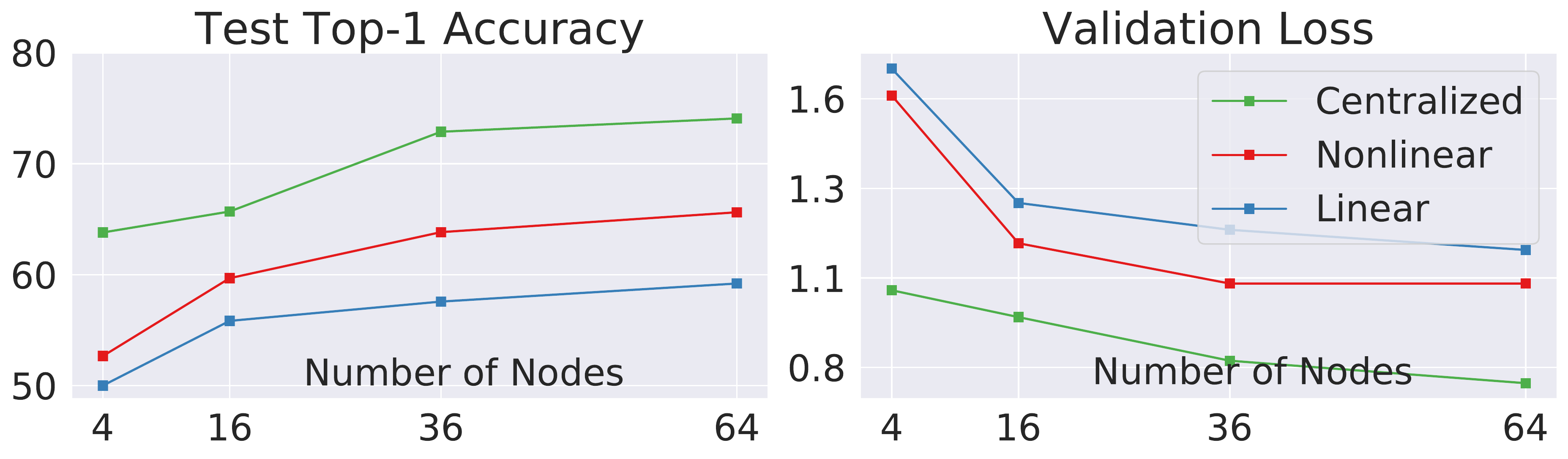}
		\vspace{-2em}
		\caption{Scaling of NGO to large number of workers. Best testing accuracy of the algorithms reached after 300 epochs.\label{fig:scaling}}
	\end{center}
	 \vspace{-1em}
\end{figure}

\begin{figure}[t]
	\begin{center}
		\includegraphics[width=0.5\textwidth]{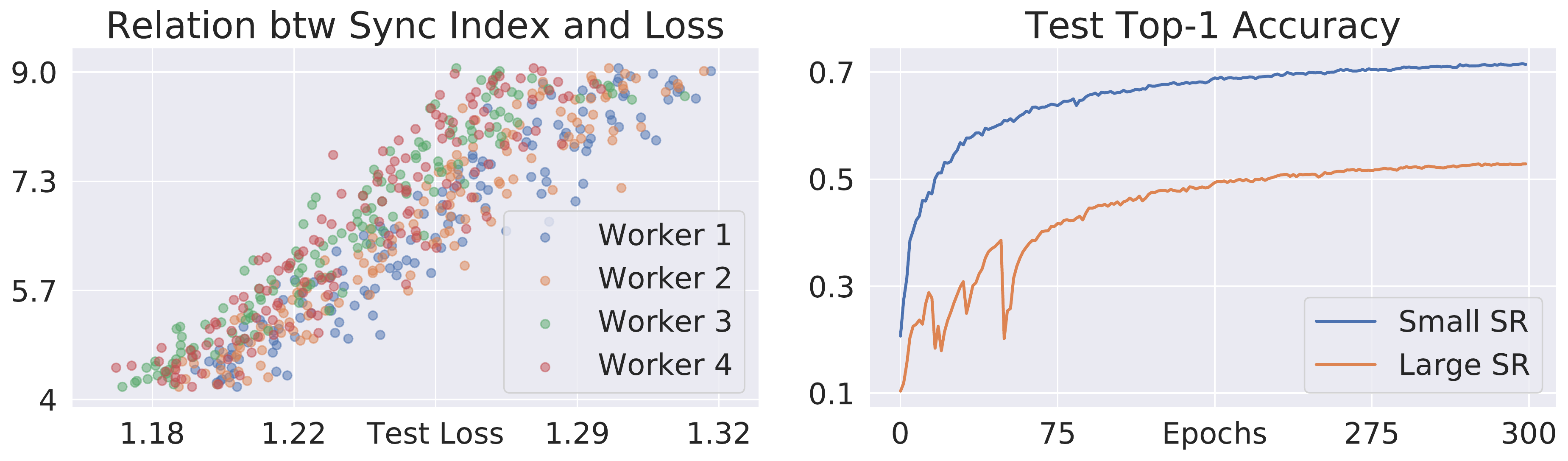}
		\vspace{-2em}
		\caption{(Left) Worker synchronization strongly correlates with performance. (Right) Nonlinear gossiping suffers performance drop at larger synchronization rate.\label{fig:sync_loss}}
	\end{center}
	 \vspace{-1em}
\end{figure}

\vspace{-8pt}
\subsection{Synchronization rate and stability}
\vspace{-6pt}
While in general NGO delivers more favorable results compared to its linear counterparts, we notice NGO should be practiced with caution. This is because NGO can be more sensitive to the synchronization rate $\gamma_t$. When $\gamma_t$ is too high, some oscillations is expected for NGO (Figure \ref{fig:sync_loss} right). 

\vspace{-8pt}
\section{Conclusions}
\vspace{-6pt}
We present a novel nonlinear gossiping SGD framework based on finite-time consensus protocol. We prove that NGO achieves a better convergence rate compared to linear gossiping, bridging the gap of centralized learning and decentralized learning. We highlight that our algorithm achieves better performance under a limited communication budget compared to gossiping SGD. Unlike prior-arts, we seek to improve the efficiency of communication with respect to synchronization rather than reducing the communication frequency or compressing messages. NGO can be easily adapted to work with asynchronous communications and randomized graph topology. In future work, we wish to establish NGO's convergence under non-convex settings, and extend its applicability to heterogeneous learners. 
\bibliography{ngo}

\newpage

\appendix

\renewcommand{\thetable}{S\arabic{table}}
\renewcommand{\thefigure}{S\arabic{figure}}
\renewcommand{\thealgorithm}{S\arabic{algorithm}}
\renewcommand{\thethm}{S\arabic{thm}}

\setcounter{table}{0}
\setcounter{figure}{0}
\setcounter{algorithm}{0}
\setcounter{thm}{0}

\onecolumn
\addcontentsline{toc}{section}{Appendix} 
\part{Appendix} 
\parttoc 

\section{Technical Support}
\subsection{Theoretical Results of Synchronization}
Before stating the theoretical results, we need to define an important class of digraphs that appear
frequently throughout this section.
\begin{defn}[balanced digraphs \cite{olfati2004consensus}]
A digraph $\mathcal{G}$ is
called balanced if $\sum_{j\neq i} \Wv_{ij} = \sum_{j\neq i} \Wv_{ji}$ for all $i\in \mathcal{V}$.
\end{defn}
We first revisit the continuous version of gossip algorithm.
\begin{lem}[\cite{olfati2007consensus}]
    Let $\mathcal{G}$ be a connected undirected graph. Then, 
    \begin{equation}\label{dyn}
        \dot{\xv_i} = \sum_{j:(i,j)\in\mathcal{E}} \Wv_{ij}(\xv_j(t)-\xv_i(t))
    \end{equation}
   asymptotically reaches synchronization for all initial states. Moreover, the synchronization value is $\frac{1}{n} \sum_i  \xv_i(0)$ that is equal to the average of the initial values.
\end{lem}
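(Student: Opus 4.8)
The plan is to recast the coupled ODEs \eqref{dyn} in matrix form and exploit the spectral structure of the graph Laplacian. Since the dynamics act independently on each coordinate of $\xv_i\in\BR^d$ and are linear, it suffices to treat the scalar case and stack the worker states into a vector $\xv(t)\in\BR^n$. Writing out the right-hand side, $\dot{x}_i = \sum_j \Wv_{ij} x_j - x_i\sum_j \Wv_{ij} = (\Wmat\xv)_i - (\Dmat\xv)_i$, so the system becomes $\dot{\xv} = -\Lmat\xv$ with $\Lmat = \Dmat - \Wmat$ the graph Laplacian. Because $\mathcal{G}$ is undirected, $\Lmat$ is symmetric and positive semidefinite; because $\mathcal{G}$ is connected, its eigenvalues satisfy $0 = \lambda_1(\Lmat) < \lambda_2(\Lmat) \leq \cdots \leq \lambda_n(\Lmat)$, with $\ker\Lmat = \Span\{\mathbf{1}\}$.

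First I would establish conservation of the average. Setting $\bar{x}(t) = \tfrac1n \mathbf{1}^\top\xv(t)$ and using $\mathbf{1}^\top\Lmat = 0$ (the row sums of $\Lmat$ vanish), one gets $\dot{\bar{x}} = -\tfrac1n\mathbf{1}^\top\Lmat\xv = 0$, so $\bar{x}(t) \equiv \bar{x}(0) = \tfrac1n\sum_i x_i(0)$ for all $t$. This already pins down the candidate consensus value; it then remains to show the states actually converge to it.

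Next I would analyze the disagreement dynamics through the Lyapunov function $V = \sum_i\|\bm{\delta}_i\|^2 = \|\bm{\delta}\|^2$, where $\bm{\delta} = \xv - \bar{x}\mathbf{1}$. Since $\bar{x}$ is constant and $\Lmat\mathbf{1} = 0$, the disagreement obeys $\dot{\bm{\delta}} = -\Lmat\xv = -\Lmat\bm{\delta}$, whence $\dot{V} = -2\bm{\delta}^\top\Lmat\bm{\delta}$. Observing that $\mathbf{1}^\top\bm{\delta} = 0$ by construction, the vector $\bm{\delta}$ lies in the orthogonal complement of $\ker\Lmat$, so the Rayleigh-quotient bound gives $\bm{\delta}^\top\Lmat\bm{\delta} \geq \lambda_2(\Lmat)\|\bm{\delta}\|^2$. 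Hence $\dot{V} \leq -2\lambda_2(\Lmat)\,V$, and Gr\"onwall's inequality yields $V(t) \leq V(0)\, e^{-2\lambda_2(\Lmat) t} \to 0$. This forces each $\xv_i(t) \to \bar{x}(0)$, i.e., the workers synchronize to the initial average, consistent with the rate $1-\gamma\lambda_2(\Lmat)$ quoted earlier for the discretized scheme.

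The crux of the argument---and the only place connectivity is truly used---is the spectral-gap inequality $\bm{\delta}^\top\Lmat\bm{\delta} \geq \lambda_2(\Lmat)\|\bm{\delta}\|^2$ together with the strict positivity $\lambda_2(\Lmat) > 0$. The latter is precisely the statement that a connected graph has a simple zero eigenvalue with $\ker\Lmat = \Span\{\mathbf{1}\}$; were $\mathcal{G}$ disconnected, $\lambda_2(\Lmat) = 0$ and each connected component would merely reach its own local average. I expect verifying $\lambda_2(\Lmat) > 0$ for connected graphs to be the one step requiring care, though it is a standard spectral-graph fact that follows from the observation that $\bm{\delta}^\top\Lmat\bm{\delta} = \sum_{(i,j)\in\mathcal{E}}\Wv_{ij}(\delta_i-\delta_j)^2$ vanishes only when $\bm{\delta}$ is constant across every component.
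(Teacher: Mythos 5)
Your proof is correct, and it is more complete than the paper's own argument, though both start from the same reduction. The paper also rewrites the system as $\dot{\xv} = -\Lmat\xv$, but from there it takes a qualitative gradient-flow route: it defines the disagreement potential $\phi(\xv) = \frac{1}{2}\xv^\top\Lmat\xv$, observes that the dynamics are exactly $\dot{\xv} = -\nabla\phi(\xv)$, and concludes convergence to the agreement space from two facts (positive semidefiniteness of $\Lmat$ and uniqueness of the equilibrium set $\alpha\mathbf{1}$ for connected graphs), both following from the sum-of-squares identity $\xv^\top\Lmat\xv = \frac{1}{2}\sum_{(i,j)\in\mathcal{E}}\Wmat_{ij}(\xv_j-\xv_i)^2$. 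You instead split the state into the conserved average and the disagreement vector $\bm{\delta}$, and run a spectral-gap Lyapunov argument: $\dot{V} = -2\bm{\delta}^\top\Lmat\bm{\delta} \leq -2\lambda_2(\Lmat)V$ plus Gr\"onwall. What your route buys is twofold: an explicit exponential rate $V(t)\leq V(0)e^{-2\lambda_2(\Lmat)t}$ (the paper's equilibrium argument is purely asymptotic), and an actual derivation that the consensus value equals $\frac{1}{n}\sum_i\xv_i(0)$ via $\mathbf{1}^\top\Lmat = 0$ --- a claim the paper's proof asserts in its final sentence but never derives. What the paper's route buys is a cleaner conceptual picture (consensus as gradient descent on a convex disagreement function), which is the viewpoint it reuses when motivating the nonlinear couplings of Theorem 3.1, where the same Lyapunov function $V$ you used reappears but with the sublinear coupling replacing the quadratic form.
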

\begin{proof}
    The dynamics of system (\ref{dyn}) can be expressed in a compact form as 
    \begin{equation}
        \dot{\xv} = -\mathbf{L} \xv
    \end{equation}
    where $\mathbf{L}$ is known as the graph Laplacian of $\mathcal{G}$. By definition, $\mathbf{L}$ has a right eigenvector of $\mathbf{1}$ associated with
    the zero eigenvalue because of the identity $\mathbf{L} \mathbf{1} = \mathbf{0}$. For the case of undirected graphs, graph Laplacian satisfies the following property:
    \begin{equation}\label{eq:sos}
        \xv^\top \mathbf{L}\xv=\frac{1}{2}\sum_{(i,j)\in \mathcal{E}} \Wmat_{ij}(\xv_j-\xv_i)^2
    \end{equation}
    By defining a quadratic disagreement function as
    \begin{equation}
        \phi(\xv) = \frac{1}{2}\xv^\top \mathbf{L}\xv
    \end{equation}
    it becomes apparent that algorithm (\ref{dyn}) is the same as 
    \begin{equation}
        \dot{\xv} = -\nabla\phi(\xv)
    \end{equation}
    or the gradient-descent algorithm. This algorithm globally asymptotically converges to the agreement space provided that two conditions hold: 1) $\mathbf{L}$ is a positive semidefinite
matrix; 2) the only equilibrium of (\ref{dyn}) is $\alpha \mathbf{1}$ for some $\alpha$. Both of these conditions hold for a connected graph and follow from the SOS property of graph Laplacian in (\ref{eq:sos}). Therefore, an average-synchronization is asymptotically reaches for all initial states. 
\end{proof}
The discrete-time convergence result is almost
identical to its continuous-time counterpart.
\begin{thm}[\citep{ren2005consensus}]
Consider a network of workers with topology $\mathcal{G}$ applying the distributed
consensus algorithm
\begin{equation}
    \xv_i^{(t+1)} = \xv_i^{(t)}+\gamma \sum_{j:(i,j)\in \mathcal{G}} \Wmat_{ij}(\xv_j^{(t)} -\xv_i^{(i)})
\end{equation}
where $0<\gamma<1/\Delta$, and $\Delta$ is the maximum degree of the network. Let $\mathcal{G}$ be a strongly connected digraph. Then an average-synchronization is asymptotically reached if \Wmat is doubly stochastic.
\end{thm}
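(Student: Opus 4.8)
The plan is to recast the coordinate-wise update in matrix form and reduce the claim to a statement about powers of a primitive stochastic matrix. Stacking the iterates (and treating each of the $d$ coordinates separately, since the update acts identically and independently across coordinates), the recursion becomes $\xv^{(t+1)} = (\Imat - \gamma\Lmat)\xv^{(t)} =: P\xv^{(t)}$, where $\Lmat = \Dmat - \Wmat$ is the graph Laplacian. Everything then hinges on the spectral structure of $P$, so the first step is simply to record its entries: $P_{ij} = \gamma\Wmat_{ij} \geq 0$ for $i\neq j$, while $P_{ii} = 1 - \gamma d_i$.

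The second step is to verify that $P$ is a nonnegative doubly stochastic matrix. Nonnegativity of the off-diagonal is immediate, and for the diagonal I would invoke the step-size bound $\gamma < 1/\Delta \leq 1/d_i$, which forces $P_{ii} > 0$; this is precisely where that hypothesis is consumed. The row sums equal one because $\Lmat\mathbf{1} = \mathbf{0}$, and the column sums equal one because double stochasticity of $\Wmat$ gives $\mathbf{1}^\top\Lmat = \mathbf{0}^\top$. This column-stochasticity has a second, crucial payoff: it shows the running sum is conserved, $\mathbf{1}^\top\xv^{(t+1)} = \mathbf{1}^\top P\xv^{(t)} = \mathbf{1}^\top\xv^{(t)}$, which is what pins the eventual consensus value to the \emph{average} of the initial states rather than to an arbitrary convex combination.

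The third and central step is to argue that $P$ is primitive. Because the off-diagonal zero pattern of $P$ coincides with the edge set of $\mathcal{G}$, irreducibility of $P$ is equivalent to strong connectivity of $\mathcal{G}$, which is assumed; and since every diagonal entry is strictly positive, $P$ is aperiodic, so an irreducible aperiodic nonnegative matrix is primitive. The Perron--Frobenius theorem for primitive stochastic matrices then yields that $1$ is a simple eigenvalue, that every other eigenvalue has modulus strictly less than one, and hence that $P^{t}$ converges to the rank-one projector onto the Perron eigenspace, $\mathbf{1}\mathbf{u}^\top/(\mathbf{u}^\top\mathbf{1})$ with $\mathbf{u}$ the left Perron vector. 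Double stochasticity forces $\mathbf{u} = \mathbf{1}$, so $P^{t} \to \frac{1}{n}\mathbf{1}\mathbf{1}^\top$, and the final bookkeeping gives $\xv^{(t)} = P^{t}\xv^{(0)} \to \left(\frac{1}{n}\sum_i \xv_i^{(0)}\right)\mathbf{1}$, i.e. average consensus.

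I expect the main obstacle to be the primitivity argument, specifically handling the fact that a doubly stochastic matrix on a \emph{digraph} need not be symmetric. This rules out the clean self-adjoint argument that works in the undirected case, where the eigenvalues of $P$ are simply $1 - \gamma\lambda_i(\Lmat)$ and the bound $\gamma < 1/\Delta$ directly keeps the nonunit ones inside the unit disc. Without symmetry one must instead control the geometric versus algebraic multiplicity of the eigenvalue $1$ and rule out other unit-modulus eigenvalues, which is exactly what aperiodicity (via the positive diagonal) and irreducibility (via strong connectivity) deliver through Perron--Frobenius. A secondary care point is justifying that matrix-level convergence of $P^{t}$ transfers to the vector iterates uniformly across all $d$ coordinates, which is routine since the dynamics decouple coordinate-by-coordinate.
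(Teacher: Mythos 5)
Your proof is correct, but note that the paper never actually proves this statement: it is quoted with a citation to Ren and Beard, and the surrounding text merely remarks that the discrete-time result is ``almost identical'' to the continuous-time lemma proved just before it. That continuous-time proof is a Lyapunov/gradient-flow argument: writing $\dot{\xv} = -\Lmat\xv$ and observing that this is gradient descent on the disagreement function $\phi(\xv) = \frac{1}{2}\xv^\top\Lmat\xv$, convergence follows from positive semidefiniteness of $\Lmat$ and the identity $\xv^\top\Lmat\xv = \frac{1}{2}\sum_{(i,j)\in\mathcal{E}}\Wmat_{ij}(\xv_j-\xv_i)^2$ --- an argument that leans on the symmetry of $\Lmat$, i.e., on the graph being undirected (the paper extends it to digraphs only via the balanced-digraph device). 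Your Perron--Frobenius route is genuinely different and, for the theorem as actually stated (strongly connected \emph{digraph} with doubly stochastic $\Wmat$), the more appropriate one: as you correctly identify, without symmetry the self-adjoint spectral argument is unavailable, and your combination of irreducibility (from strong connectivity), aperiodicity (positive diagonal, which is where $\gamma < 1/\Delta$ is consumed), and identification of the left Perron vector as $\mathbf{1}$ (from double stochasticity) delivers both $P^t \to \frac{1}{n}\mathbf{1}\mathbf{1}^\top$ and the fact that the consensus value is the initial average. What the paper's Lyapunov style buys instead is a template that survives nonlinear couplings --- it is exactly the argument reused in the appendix proof of the finite-time consensus theorem --- whereas your argument additionally yields a geometric convergence rate governed by the second-largest eigenvalue modulus of $P$. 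One cosmetic remark: double stochasticity of $\Wmat$ forces all degrees $d_i = \sum_j \Wmat_{ij} = 1$, hence $\Delta = 1$ and the step-size hypothesis is simply $\gamma < 1$; your proof uses the hypothesis correctly either way.
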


Suppose that worker $i$ receives a message sent by its
neighbor $j$ after a time-delay of $\tau$. This is equivalent to a network with a uniform one-hop communication time delay. 
\begin{thm}
The algorithm 
\begin{equation*}
    \dot{\xv}(t) = \sum_{j:(i,j)\in \mathcal{G}}\Wmat_{ij}(\xv_j(t-\tau)-\xv_i(t-\tau))
\end{equation*}
asymptotically solves the average synchronization problem with a uniform one-hop time-delay $\tau$
for all initial states if and only if $0\leq \tau<\pi/2\lambda_n$, where $\lambda_n$ is the largest eigenvalue of the graph Laplacian $\mathbf{L}$.
\end{thm}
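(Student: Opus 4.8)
The plan is to reduce the vector delay differential equation (DDE) to a family of decoupled scalar DDEs by spectral decomposition, and then to characterize the stability of each scalar mode through its characteristic equation. First I would write the dynamics compactly as $\dot{\xv}(t) = -\mathbf{L}\,\xv(t-\tau)$, using that $\sum_{j}\Wmat_{ij}(\xv_j-\xv_i) = -(\mathbf{L}\xv)_i$. Observe immediately that $\mathbf{1}^\top\mathbf{L} = \mathbf{0}$ implies $\frac{d}{dt}\,\mathbf{1}^\top\xv(t) = -\mathbf{1}^\top\mathbf{L}\,\xv(t-\tau) = 0$, so the average $\frac{1}{n}\sum_i\xv_i(t)$ is conserved for every $\tau\ge 0$. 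Hence \emph{solving the average consensus problem} is equivalent to showing that the disagreement, i.e.\ the component of $\xv$ orthogonal to $\mathbf{1}$, decays to zero.

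Next, since $\mathcal{G}$ is undirected, $\mathbf{L}$ is symmetric positive semidefinite and admits an orthonormal eigendecomposition $\mathbf{L} = \mathbf{U}\,\mathrm{diag}(\lambda_1,\dots,\lambda_n)\,\mathbf{U}^\top$ with $0=\lambda_1<\lambda_2\le\cdots\le\lambda_n$ (connectedness gives $\lambda_2>0$). Setting $\mathbf{y}=\mathbf{U}^\top\xv$ decouples the system into scalar DDEs $\dot{y}_k(t) = -\lambda_k\,y_k(t-\tau)$ for $k=1,\dots,n$. The $k=1$ mode satisfies $\dot{y}_1=0$ and carries the conserved average, while the modes $k\ge 2$ govern the disagreement. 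Each has characteristic equation
\[
s + \lambda_k e^{-s\tau} = 0,
\]
and that mode decays precisely when all of its roots $s$ have negative real part.

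I would then carry out the boundary-of-stability analysis. A root can leave the open left half-plane only by crossing the imaginary axis at some $s=\mathrm{i}\omega$; substituting and separating real and imaginary parts gives $\lambda_k\cos(\omega\tau)=0$ and $\omega=\lambda_k\sin(\omega\tau)$, whose smallest-$\tau$ solution is $\omega\tau=\pi/2$, $\omega=\lambda_k$, i.e.\ the critical delay $\tau_k^\ast=\pi/(2\lambda_k)$. Since $\tau_k^\ast$ is strictly decreasing in $\lambda_k$, the most restrictive threshold is attained at the largest eigenvalue $\lambda_n$, yielding the candidate bound $\tau<\pi/(2\lambda_n)$. Invoking the classical scalar fact that $\dot{y}=-a\,y(t-\tau)$ with $a>0$ is asymptotically stable exactly when $a\tau<\pi/2$ then settles each mode: for $\tau<\pi/(2\lambda_n)$ one has $\lambda_k\tau<\pi/2$ for all $k\ge 2$, so every disagreement mode decays and $\xv(t)$ converges to the preserved average; whereas for $\tau\ge\pi/(2\lambda_n)$ the $\lambda_n$-mode has a root on or right of the imaginary axis, producing undamped oscillation or growth, so consensus fails for generic initial data. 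This is the asserted \emph{if and only if}.

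The main obstacle is the rigorous crossing/monotonicity argument underpinning the scalar stability criterion: one must show that for each fixed $k$ the rightmost root of $s+\lambda_k e^{-s\tau}=0$ moves continuously in $\tau$ and enters the right half-plane exactly at $\tau=\tau_k^\ast$ and not earlier. I would handle this by a root-continuity argument, verifying that all roots lie strictly in the left half-plane as $\tau\to 0^+$, that crossings can occur only at the imaginary-axis solutions computed above, and that $\frac{d}{d\tau}\,\mathrm{Re}(s)>0$ at the crossing; alternatively one may simply cite the standard characterization of stability for the retarded scalar equation. Additional care is needed at $\tau=\pi/(2\lambda_n)$ to confirm the crossing is genuine (a simple pair of purely imaginary roots rather than a removable degeneracy), which secures the strictness of the inequality.
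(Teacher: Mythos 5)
Your proof is correct, and it is essentially the standard argument for this result: conservation of the average via $\mathbf{1}^\top\Lmat=\mathbf{0}^\top$, orthogonal diagonalization of the symmetric Laplacian to decouple the dynamics into scalar delay equations $\dot{y}_k(t)=-\lambda_k y_k(t-\tau)$, and the classical Hayes/Nyquist stability criterion that such a mode decays exactly when $\lambda_k\tau<\pi/2$, making $\lambda_n$ the binding eigenvalue (with the purely imaginary pair $\pm\mathrm{i}\lambda_n$ at $\tau=\pi/(2\lambda_n)$ giving the ``only if'' direction). Note that the paper itself supplies no proof of this theorem --- it restates, without proof, a classical result of Olfati-Saber and Murray (2004), cited elsewhere in the appendix, whose original frequency-domain proof follows precisely the route you took --- so your proposal in fact fills the gap rather than deviating from the paper.
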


A switching network can be modeled using a dynamic graph $\mathcal{G}_{s_t}$ parameterized with a switching signal $s_t:\mathbb{R}\rightarrow J$ that
takes its values in an index set $J =\{1,2,\cdots, n\}$.
\begin{thm}[\cite{olfati2004consensus}]
Consider a network of workers with algorithm $\dot{\xv} = \mathbf{L}_{\mathcal{G}_k}$ with topologies ${\mathcal{G}_k}\in \Gamma$. Suppose every graph in $\Gamma$ is a balanced digraph that is strongly connected and let $\lambda_2^* = \min_{k\in J}\lambda_2(\mathcal{G}_k)$. Then, for any arbitrary switching signal, the workers asymptotically reach an average-synchronization for all initial states with a speed faster or equal to $\lambda_2^*$.
\end{thm}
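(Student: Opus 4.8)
The plan is to construct a single, switching-independent (``common'') Lyapunov function and to exploit the defining property of balanced digraphs---that the all-ones vector $\mathbf{1}$ is a \emph{left} as well as a right null vector of every Laplacian $\mathbf{L}_{\mathcal{G}_k}$---so that the average is a conserved quantity across all switching modes and a single quadratic certifies exponential convergence for every admissible switching signal.

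First I would record the two structural facts. Since $\mathbf{L}_{\mathcal{G}_k}\mathbf{1} = \mathbf{0}$ always holds, and since balancedness ($\sum_{j\neq i}\Wv_{ij} = \sum_{j\neq i}\Wv_{ji}$) is precisely the condition that each column of $\mathbf{L}_{\mathcal{G}_k}$ sums to zero, one also has $\mathbf{1}^\top \mathbf{L}_{\mathcal{G}_k} = \mathbf{0}^\top$ for every $k\in J$. Consequently, writing $\bar{x}(t) = \frac{1}{n}\mathbf{1}^\top\xv(t)$ for the instantaneous average, $\dot{\bar{x}} = -\frac{1}{n}\mathbf{1}^\top\mathbf{L}_{\mathcal{G}_{s_t}}\xv = 0$, so $\bar{x}$ is invariant regardless of the switching signal $s_t$. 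The limiting consensus value, if reached, must therefore equal $\bar{x}(0) = \frac{1}{n}\sum_i \xv_i(0)$, giving the average-consensus claim.

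Next I would introduce the disagreement vector $\bm{\delta} = \xv - \bar{x}\mathbf{1}$, which lies in $\mathbf{1}^\perp$ since $\mathbf{1}^\top\bm{\delta} = 0$, and note that because $\mathbf{L}_{\mathcal{G}_{s_t}}\mathbf{1} = \mathbf{0}$ and $\bar{x}$ is constant, $\dot{\bm{\delta}} = \dot{\xv} = -\mathbf{L}_{\mathcal{G}_{s_t}}\bm{\delta}$. Taking the candidate common Lyapunov function $V = \frac{1}{2}\|\bm{\delta}\|^2$, I would compute along trajectories $\dot{V} = -\bm{\delta}^\top\mathbf{L}_{\mathcal{G}_{s_t}}\bm{\delta} = -\bm{\delta}^\top\widehat{\mathbf{L}}_{\mathcal{G}_{s_t}}\bm{\delta}$, where $\widehat{\mathbf{L}} = \frac{1}{2}(\mathbf{L}+\mathbf{L}^\top)$ is the symmetric part---the Laplacian of the undirected ``mirror'' graph, which for a balanced, strongly connected digraph is positive semidefinite with simple null space $\text{span}\{\mathbf{1}\}$. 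Because $\bm{\delta}\perp\mathbf{1}$, the Courant--Fischer characterization on $\mathbf{1}^\perp$ gives $\bm{\delta}^\top\widehat{\mathbf{L}}_{\mathcal{G}_{s_t}}\bm{\delta} \geq \lambda_2(\mathcal{G}_{s_t})\|\bm{\delta}\|^2 \geq \lambda_2^*\|\bm{\delta}\|^2$, yielding the uniform differential inequality $\dot{V} \leq -2\lambda_2^* V$ valid in every mode. As $V$ is continuous across switching instants (the state $\xv$ is continuous) and decreases at a mode-independent rate, integrating gives $V(t)\leq V(0)e^{-2\lambda_2^* t}$ for any piecewise-constant $s_t$, so $\bm{\delta}(t)\to 0$ exponentially with rate no slower than $\lambda_2^*$.

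The main obstacle I anticipate is not the single-mode estimate but the switching itself: one must verify that a single Lyapunov function certifies stability uniformly over all admissible signals, rather than merely stabilizing each frozen mode. This is exactly where the balanced assumption does the heavy lifting---it forces $\mathbf{1}$ to be a common left null vector, which (i) makes $\bar{x}$ conserved through every switch so that $\bm{\delta}$ is a bona fide coordinate and $V$ is the same function in all modes, and (ii) makes each symmetrized Laplacian positive semidefinite on $\mathbf{1}^\perp$, so that $V$ decays in every mode with the common rate $\lambda_2^*$. Without balancedness the average would drift at switching times and no such common quadratic certificate would exist; I would flag this dependence explicitly, and note that strong connectivity of each $\mathcal{G}_k$ is what guarantees the mirror Laplacian has a simple zero eigenvalue so that $\lambda_2(\mathcal{G}_k)>0$.
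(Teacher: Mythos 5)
Your proof is correct. Note that the paper itself offers no proof of this statement---it is imported by citation from Olfati-Saber and Murray (2004)---so there is no internal argument to compare against; your construction is essentially the one used in that original source, and it is also the natural extension of the proof the paper \emph{does} give for the undirected, fixed-topology lemma (where $\phi(\xv)=\tfrac{1}{2}\xv^\top\mathbf{L}\xv$ serves as the Lyapunov certificate). Concretely, you replace the single-graph disagreement function with a switching-independent quadratic $V=\tfrac{1}{2}\|\bm{\delta}\|^2$, use balancedness ($\mathbf{1}^\top\mathbf{L}_{\mathcal{G}_k}=\mathbf{0}^\top$ for every mode $k$) to make the average a conserved quantity under arbitrary switching, and use the mirror-graph symmetrization $\widehat{\mathbf{L}}=\tfrac{1}{2}\left(\mathbf{L}+\mathbf{L}^\top\right)$---a valid undirected Laplacian precisely because of balancedness, with simple kernel $\mathrm{span}\{\mathbf{1}\}$ by strong connectivity---to obtain the mode-uniform bound $\dot V\le -2\lambda_2^* V$; integrating across switching instants, where $V$ is continuous, gives the exponential rate $\lambda_2^*$. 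Two small points worth flagging: (i) the integration step tacitly assumes the switching signal is piecewise constant with finitely many switches in any finite interval (no Zeno behavior), which is the standard hypothesis of the cited theorem and should be stated; and (ii) the dynamics in the statement contain a typo, $\dot{\xv}=\mathbf{L}_{\mathcal{G}_k}$, which your proof correctly reads as $\dot{\xv}=-\mathbf{L}_{\mathcal{G}_{s_t}}\xv$.
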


\subsection{Proof of Theorem 3.1}
\begin{proof}
Recall the definition of Lyapunov function
\begin{equation*}
    V(t) = \sum_{i=1}^n\left(\xv_i(t) - \bar{\xv}\right)^2,
\end{equation*}
after differentiating we have:
\begin{equation}
	\begin{split}
	\dot{V}(t) &= 2\sum_{i}\left(\xv_i(t) - \bar{\xv}\right)(\dot{\xv}_i(t) - \dot{\bar{\xv}})\\
	&= 2\gamma\sum_i \left(\xv_i(t) - \bar{\xv}(t)\right)\sum_{j:(i,j)\in \mathcal{E}} \Wmat_{ij}\phiv\left(\xv_j(t) - \xv_i(t)\right)
	\end{split}
\end{equation}

Here we use the fact that $\sum_{(i,j)\in \mathcal{E}}\Wmat_{ij}\phiv(\xv_j(t)-\xv_i(t)) = 0, \forall t>0$, thus $\bar{\xv}(t) = \bar{\xv}$ is invariant.

\begin{equation}
	\begin{split}
	&\sum_{i}\sum_{j:(i,j)\in \mathcal{E}} \Wmat_{ij}\left|\xv_i(t) -\xv_j(t)\right|\phiv\left(\left|\xv_j(t) - \xv_i(t)\right|\right)\\
	=& \sum_{(i,j)\in \mathcal{E}}\Wmat_{ij}\left(\xv_i(t) - \xv_j(t)\right)^{2p}\\
	\geq & \left[\sum_{(i,j)\in\mathcal{E}} \Wmat_{ij}^{\frac{1}{p}}\left(\xv_i(t) - \xv_j(t)\right)^2 \right]^{p}\\
	=& \left[\sum_{(i,j)\in\mathcal{E}} \Wmat_{ij}^{\frac{1}{p}}\left(\deltav_i(t) - \deltav_j(t)\right)^2 \right]^{p}\\
	\end{split}
\end{equation}
where $\Bmat =\left[\Wmat_{ij}^{\frac{1}{p}}\right]\in \BR^{n\times n}$. Notice that $\sum_{i,j=1}^{n} \Wmat_{ij}^\frac{1}{p}(\deltav_i-\deltav_j)^2 = 2\deltav^\top \Lmat(\Bmat)\deltav$, and $\sum_i \deltav_i = 0$,
we have
\begin{equation*}
	\frac{\sum_{i,j = 1}^n \Wmat_{ij}^{\frac{1}{p}} \left(\deltav_i-\deltav_j\right)^2}{V(t)} = \frac{2\deltav^\top \Lmat(\Bmat)\deltav}{\deltav^\top\deltav} \geq 2\lambda_2(\Lmat(\Bmat)).
\end{equation*}
where $\Lmat(\Bmat)$ is the graph Laplacian of $\mathcal{G}(\Bmat)$ and $\lambda_2(\Lmat(\Bmat))$ is the algebraic connectivity of $\mathcal{G}(\Bmat)$, $\lambda_2(\Lmat(\Bmat))$ is positive since the graph is connected.

Therefore,
\begin{equation}
\begin{split}
	\dot{V}(t) &= 2\gamma\sum_i(\xv_i(t)-\bar{\xv})\sum_{j:(i,j)\in \mathcal{E}} \Wmat_{ij}\phiv(\xv_j(t)-\xv_i(t))\\
	&=2\gamma\sum_{(i,j)\in \mathcal{E}} \Wmat_{ij}\xv_i(t)\phiv(\xv_j(t) - \xv_i(t))\\
	&=2\gamma\sum_{(i,j)\in \mathcal{E}} \Wmat_{ij}(\xv_i(t) - \xv_j(t))\phiv(\xv_j(t) - \xv_i(t))\\
	&=-2\gamma\sum_{(i,j)\in \mathcal{E}} \Wmat_{ij}|\xv_i(t) -\xv_j(t)|\phiv \left(|\xv_j(t) - \xv_i(t)|\right)\\
	&\leq -4\gamma \left(\lambda_2(\Lmat(\Bmat))\right)^p V(t)^{p}
	\end{split}
\end{equation}

Therefore $V(t) = 0$, when $$t>T^* = \frac{V^{1-p}(0)}{4\gamma(1-p) \left(\lambda_2(\Lmat(\Bmat))\right)^p}$$.
\end{proof}
\subsection{Convergence rate}
The following lemma for decentralized learning is derived in \cite{koloskova2019decentralized}
\begin{lem}
	The average $\bar{\xv}^{(t)}$ of the iterates of  Algorithm 2 and Algorithm 3 satisfy the following:
	\begin{equation*}
	\begin{split}
	&\mathbb{E}_{\xi_1^{(t)}, \cdots, \xi_n^t}\left\|\bar{\xv}^{(t+1)} -\xv^*\right\|^2\\
	\leq& \left(1-\frac{\mu\eta_t}{2}\right)\left\|\bar{\xv}^{(t)}-\xv^*\right\|^2 +\frac{\eta_t^2\bar{\sigma}^2}{n} 
	-2\eta_t\left(1-2L\eta_t\right)\left(f(\bar{\xv}^t) - f^*\right) \\
	+&\frac{2 \eta_t^2L^2+L\eta_t+\mu\eta_t}{n}\sum_{i=1}^{n}\|\bar{\xv}^{(t)} - \xv_i^{(t)}\|^2
	\end{split}
	\end{equation*}
	where $\bar{\sigma}^2 = \frac{1}{n}\sum_{i=1}^{n}\sigma_i^2$.
\end{lem}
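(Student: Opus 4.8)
The plan is to reduce the entire statement to a one-step analysis of the averaged iterate $\bar{\xv}^{(t)}$, exploiting that the consensus step leaves the average unchanged. First I would observe that for both the linear and nonlinear gossip updates, summing the update over all workers and using the symmetry $\Wmat_{ij}=\Wmat_{ji}$ together with the oddness $\phiv(-\zv)=-\phiv(\zv)$ forces $\sum_{i}\sum_{j}\Wmat_{ij}\phiv(\xv_j-\xv_i)=0$ (relabel $i\leftrightarrow j$; the double sum equals its own negative). Hence $\bar{\xv}^{(t+1)}=\bar{\xv}^{(t+1/2)}=\bar{\xv}^{(t)}-\eta_t\bar{\gv}^{(t)}$ with $\bar{\gv}^{(t)}=\frac{1}{n}\sum_i\nabla F_i(\xv_i^{(t)},\xi_i^{(t)})$. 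This single observation is what lets one lemma cover Algorithms 2 and 3 simultaneously. Expanding $\|\bar{\xv}^{(t+1)}-\xv^*\|^2$ and taking the conditional expectation over the independent draws $\xi_i^{(t)}$ splits the bound into a cross term $T_1=-2\eta_t\langle \bar{\gv}_f^{(t)},\bar{\xv}^{(t)}-\xv^*\rangle$ and a square term $T_2=\eta_t^2\,\mathbb{E}\|\bar{\gv}^{(t)}\|^2$, where $\bar{\gv}_f^{(t)}=\frac{1}{n}\sum_i\nabla f_i(\xv_i^{(t)})$ is the conditional mean gradient.

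Second, for $T_2$ I would use a bias--variance decomposition. Independence of the $\xi_i^{(t)}$ across workers kills the cross terms, so $\mathbb{E}\|\bar{\gv}^{(t)}-\bar{\gv}_f^{(t)}\|^2\le \frac{1}{n^2}\sum_i\sigma_i^2=\frac{\bar{\sigma}^2}{n}$, producing the $\frac{\eta_t^2\bar{\sigma}^2}{n}$ term directly. The deterministic part $\eta_t^2\|\bar{\gv}_f^{(t)}\|^2$ I would control by writing $\nabla f_i(\xv_i)=\nabla f_i(\bar{\xv})+(\nabla f_i(\xv_i)-\nabla f_i(\bar{\xv}))$, applying Jensen and $L$-smoothness to the difference (which yields $\frac{2L^2}{n}V^{(t)}$) and the smoothness inequality $\|\nabla f(\bar{\xv})\|^2\le 2L(f(\bar{\xv})-f^*)$ to the remaining averaged gradient (which yields $4L(f(\bar{\xv})-f^*)$). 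This is exactly the source of the $2\eta_t^2L^2$ piece in the $V^{(t)}$ coefficient and of the $-4L\eta_t^2$ correction that upgrades the descent factor to $(1-2L\eta_t)$.

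Third, for $T_1$ I would split $\bar{\xv}^{(t)}-\xv^*=(\bar{\xv}^{(t)}-\xv_i^{(t)})+(\xv_i^{(t)}-\xv^*)$ inside each summand. Strong convexity applied between $\xv_i^{(t)}$ and $\xv^*$ handles the second piece, producing $-(f_i(\xv_i)-f_i^*)-\frac{\mu}{2}\|\xv_i-\xv^*\|^2$, while the smoothness descent lemma handles the first piece, producing $-(f_i(\bar{\xv})-f_i(\xv_i))+\frac{L}{2}\|\bar{\xv}-\xv_i\|^2$; the $f_i(\xv_i)$ terms cancel, and after averaging and using $\frac{1}{n}\sum_i f_i^*=f^*$ one is left with $-2\eta_t(f(\bar{\xv})-f^*)+\frac{L\eta_t}{n}V^{(t)}-\frac{\mu\eta_t}{n}\sum_i\|\xv_i-\xv^*\|^2$. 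The final maneuver is the relaxation $\|\xv_i-\xv^*\|^2\ge\frac{1}{2}\|\bar{\xv}-\xv^*\|^2-\|\xv_i-\bar{\xv}\|^2$, which converts the last term into the contraction factor $-\frac{\mu\eta_t}{2}\|\bar{\xv}-\xv^*\|^2$ plus a $+\frac{\mu\eta_t}{n}V^{(t)}$ consensus penalty.

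I expect the main subtlety to be the coefficient bookkeeping rather than any deep difficulty: the precise combination $2\eta_t^2L^2+L\eta_t+\mu\eta_t$ and the factor $(1-2L\eta_t)$ emerge only if one deliberately uses the slightly loose bounds above --- the $\frac{1}{2}$-relaxation of $\|\xv_i-\xv^*\|^2$ and the factor-two split of $\|\bar{\gv}_f^{(t)}\|^2$ --- rather than the tight variance identity $\frac{1}{n}\sum_i\|\xv_i-\xv^*\|^2=\|\bar{\xv}-\xv^*\|^2+\frac{V^{(t)}}{n}$, which would instead deliver a $-\frac{\mu\eta_t}{n}V^{(t)}$ term carrying the wrong sign for the downstream telescoping argument. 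Collecting the $(f(\bar{\xv})-f^*)$, $\|\bar{\xv}-\xv^*\|^2$, $V^{(t)}$, and $\bar{\sigma}^2$ contributions from $T_1$ and $T_2$ then reproduces the stated inequality verbatim.
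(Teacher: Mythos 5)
Your derivation is correct, but note that the paper itself never proves this lemma: the appendix simply imports it, stating that it "is derived in" \citet{koloskova2019decentralized}, so there is no in-paper argument to compare against. Your proof is a correct, self-contained reconstruction of the standard argument underlying that citation: (i) average-invariance of the gossip step reduces everything to a perturbed SGD step on $\bar{\xv}^{(t)}$; (ii) a bias--variance split of the stochastic gradient, using independence across workers, yields the $\eta_t^2\bar{\sigma}^2/n$ term; (iii) strong convexity between $\xv_i^{(t)}$ and $\xv^*$ plus the descent lemma between $\xv_i^{(t)}$ and $\bar{\xv}^{(t)}$ handle the cross term; and (iv) the deliberately loose relaxations reproduce the coefficients $-2\eta_t(1-2L\eta_t)$ and $(2\eta_t^2L^2+L\eta_t+\mu\eta_t)/n$ exactly. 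Your step (i) is in fact the only piece of content specific to this paper: for the nonlinear protocol, average-invariance requires $\Wmat$ symmetric and $\phiv$ odd, which is precisely the fact the paper invokes in its proof of Theorem 3.1, and making it explicit is what entitles the lemma to cover both the linear and nonlinear algorithms simultaneously. Two small caveats. First, in your cross-term computation $f_i^*$ must mean $f_i(\xv^*)$, the local objective evaluated at the \emph{global} minimizer, not $\min_{\xv} f_i(\xv)$; only then does $\frac{1}{n}\sum_i f_i^* = f^*$ hold, and in the non-i.i.d.\ setting the two genuinely differ. Second, your parenthetical claim that the tight identity $\frac{1}{n}\sum_i\|\xv_i-\xv^*\|^2=\|\bar{\xv}-\xv^*\|^2+\frac{1}{n}V^{(t)}$ would carry "the wrong sign" is off: using it would give a strictly stronger inequality (contraction factor $1-\mu\eta_t$ and a \emph{negative} $V^{(t)}$ contribution), which still implies the stated bound; the loose relaxation is needed only to match the lemma's coefficients verbatim, not to make any downstream telescoping work.
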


Similarly, for centralized learning, we have
\begin{equation*}
	\mathbb{E}_{\xi_1^{(t)}, \cdots, \xi_n^{(t)}}\left\|\xv^{(t+1)} - \xv^*\right\|^2\leq \left(1-\frac{\mu\eta_t}{2}\right)\left\|\xv^{(t)}-\xv^*\right\|^2 +\frac{\eta_t^2\bar{\sigma}^2}{n} -2\eta_t\left(1-2L\eta_t\right)\left(f\left(\xv^{(t)}\right) - f^*\right)
\end{equation*}

According to Lemma 21 in \cite{koloskova2019decentralized}, for linear gossip algorithm we have:
\begin{equation}\label{last_term}
	\left\|{\xv}^{(t+1)} - \bar{\xv}^{(t+1)}\right\|_F^2 \leq 40\eta_t^2\frac{1}{\beta^2}nG^2
\end{equation}
where $\beta =1-\gamma\lambda_2(\Lmat(\Wmat)) $. For nonlinear gossip algorithm, according to Theorem 3.1, the term \eqref{last_term} vanishes in finite time $T^*$.

\begin{thm}
With stepsize $\eta_t =\frac{4}{\mu(a+t)}$, for parameter $a \geq \max\{\frac{5}{\beta}, 15 \kappa\}, \kappa=\frac{L}{\mu}$, centralized learning converges at the rate 
$$f\left(\xv^{(T)}_{\text{avg}}\right)-f^*\leq \frac{\mu a^3}{8S_T}\|\bar{\xv}^{(0)}-\xv^*\|^2+\frac{4T(T+2a)}{\mu S_T}\frac{\bar{\sigma}^2}{n}$$
nonlinear gossip learning converges at the rate 
$$f\left(\xv^{(T)}_{\text{avg}}\right)-f^*\leq \frac{\mu a^3}{8S_T}\|\bar{\xv}^{(0)}-\xv^*\|^2+\frac{4T(T+2a)}{\mu S_T}\frac{\bar{\sigma}^2}{n} + \frac{2L+\mu}{S_T}\sum_{t=0}^{T^*}\sum_{i=1}^n \frac{(a+t)^2}{n} \|\xv_i^{(t)}-\bar{\xv}\|^2$$
where $\xv^{(T)}_{\text{avg}} = \frac{1}{S_T}\sum_{t=0}^{T-1}w_t\bar{\xv}^{(t)} 
$ for weights $w_t = (a+t)^2$, and $S_T= \sum_{t=0}^{T-1}w_t\geq \frac{1}{3}T^3$, finite time 
\begin{equation*}
    T^* = \frac{\left(\sum_{i}\left(\xv_i^{(0)}-\bar{\xv}^{(0)}\right)^2\right)^{1-p}}{4\gamma (1-p) \left(\lambda_2(\Lmat(\Bmat))\right)^p },
\end{equation*} 
with $\Bmat =\left[\Wmat_{ij}^{\frac{1}{p}}\right]$,  $\lambda_2(\Lmat(\Bmat))$ is the algebraic connectivity of $\mathcal{G}(\Bmat)$.
\end{thm}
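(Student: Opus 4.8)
The plan is to reduce both claims to a single weighted-averaging argument applied to the one-step descent recursion already established in the Lemma of this subsection (the \citet{koloskova2019decentralized} bound), the only difference between the two cases being the consensus-error contribution. Taking total expectation and writing $e_t = \mathbb{E}[f(\bar{\xv}^{(t)})] - f^*$ and $r_t = \mathbb{E}\|\bar{\xv}^{(t)} - \xv^*\|^2$, that Lemma reads
\begin{equation*}
r_{t+1} \leq \left(1-\tfrac{\mu\eta_t}{2}\right) r_t - 2\eta_t(1-2L\eta_t)\,e_t + \frac{\eta_t^2\bar{\sigma}^2}{n} + \frac{A_t}{n}\,V^{(t)},
\end{equation*}
with $A_t = 2\eta_t^2 L^2 + L\eta_t + \mu\eta_t$, and for centralized SGD the same recursion holds with $V^{(t)}\equiv 0$. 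So I would prove one master inequality and then specialize.

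First I would fix the step-size regime. With $\eta_t=\frac{4}{\mu(a+t)}$ and $a\ge 15\kappa$ one has $\eta_t\le \frac{4}{15L}$, so $1-2L\eta_t$ is bounded below by a positive constant and $\tfrac{A_t}{\eta_t}=2\eta_t L^2+L+\mu\le 2L+\mu$. I would then rearrange to isolate the descent term,
\begin{equation*}
2\eta_t(1-2L\eta_t)\,e_t \leq \left(1-\tfrac{\mu\eta_t}{2}\right) r_t - r_{t+1} + \frac{\eta_t^2\bar{\sigma}^2}{n} + \frac{A_t}{n}\,V^{(t)},
\end{equation*}
multiply through by $w_t/\eta_t$ with $w_t=(a+t)^2$ so the left side becomes a constant multiple of $w_t e_t$, and sum over $t=0,\dots,T-1$. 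The contraction terms are designed to telescope up to non-positive residuals: the choice $\eta_t=\frac{4}{\mu(a+t)}$ makes $1-\tfrac{\mu\eta_t}{2}=\frac{a+t-2}{a+t}$, so the weighted coefficients cancel in the spirit of the \citet{stich2018local}-type averaging lemma, leaving only the boundary term $\tfrac{w_0}{\eta_0}r_0=\tfrac{\mu a^3}{4}r_0$, which after the halving from the left-hand constant gives $\tfrac{\mu a^3}{8S_T}r_0$. The noise contribution uses $\sum_{t} w_t\eta_t = \tfrac{4}{\mu}\sum_t (a+t)\asymp \tfrac{2}{\mu}T(T+2a)$, producing $\tfrac{4T(T+2a)}{\mu n S_T}\bar{\sigma}^2$. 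Dividing by $S_T$, invoking convexity (Jensen) to pass from $\tfrac{1}{S_T}\sum_t w_t e_t$ to $f(\xv^{(T)}_{\text{avg}})-f^*$, and using $S_T\ge\tfrac13 T^3$ completes the averaging.

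It remains to handle the consensus term $\sum_t \tfrac{w_t}{\eta_t}\tfrac{A_t}{n}V^{(t)}$, whose coefficient is bounded by $\tfrac{2L+\mu}{n}w_t$ via $\tfrac{A_t}{\eta_t}\le 2L+\mu$. For centralized SGD, $V^{(t)}\equiv 0$ and the term drops, yielding the first bound. For NGO I would invoke the finite-time consensus property (Theorem 3.1): the disagreement $V^{(t)}=\sum_i\|\xv_i^{(t)}-\bar{\xv}^{(t)}\|^2$ is driven to zero after $T^*(\Wmat,p)$, so $V^{(t)}=0$ for $t>T^*$ and the sum truncates to $\sum_{t=0}^{T^*}w_t V^{(t)}$, which is exactly the stated third term.

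I expect the main obstacle to be twofold. The finicky but routine part is the weighted telescoping: matching the factor $1-\tfrac{\mu\eta_t}{2}$ against the weight ratio $w_t/\eta_t\propto(a+t)^3$ so the residual cross-terms carry the correct (non-positive) sign, which is precisely where the constant in $a\ge 15\kappa$ is consumed; here I would lean on the cited technical lemma rather than re-derive it. The genuinely delicate, conceptual step is transferring the finite-time consensus of Theorem 3.1 — which concerns the pure continuous-time nonlinear coupling \emph{without} interleaved SGD — to the actual discrete algorithm, where each SGD step re-injects disagreement into $V^{(t)}$. Making ``$V^{(t)}=0$ for $t>T^*$'' rigorous therefore requires either enough consensus sub-iterations per SGD update or a quantitative argument that the finite-time contraction dominates the per-step perturbation; this is the step I would scrutinize most carefully.
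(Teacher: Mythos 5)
Your proposal follows essentially the same route as the paper's own proof: the same one-step descent lemma from \citet{koloskova2019decentralized}, the same step-size simplifications ($2L\eta_t-1\le-\tfrac12$ and $2\eta_t^2L^2+L\eta_t+\mu\eta_t \le (2L+\mu)\eta_t$), the same appeal to the cited weighted-averaging lemma (the paper invokes Lemma 23 of \citet{koloskova2019decentralized} exactly where you propose to lean on it) to produce the $\tfrac{\mu a^3}{8S_T}$, noise, and consensus terms, and the same truncation of $\sum_t w_t V^{(t)}$ at $T^*$ via Theorem 3.1, with the centralized bound obtained by setting $V^{(t)}\equiv 0$. The gap you flag at the end --- that Theorem 3.1 concerns the pure consensus dynamics without interleaved SGD, so asserting $V^{(t)}=0$ for all $t>T^*$ in the actual algorithm needs a further argument --- is genuine, but the paper's proof makes exactly the same leap without addressing it, so this is a criticism of the paper rather than a divergence from it.
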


\begin{proof}
	For $\eta_t\leq \frac{1}{4L}$ it holds $2L\eta_t - 1\leq -\frac{1}{2}$ and $(2\eta_tL^2+L+\mu)<(2L+\mu)$, hence
	\begin{equation}
		\mathbb{E}\|\xv^{(t+1)} - \xv^*\|^2 \leq \left(1-\frac{\mu\eta_t}{2}\right)\mathbb{E}\|\xv^{(t)}- \xv^*\|^2 +\frac{\eta_t^2\bar{\sigma}^2}{n} -\eta_t e_t +\eta_t \frac{2L+\mu}{n} V^{(t)}
	\end{equation}
	
	From Lemma 23 in \cite{koloskova2019decentralized}, in nonlinear gossip algorithm we get 
	\begin{equation*}
	\frac{1}{S_T}\sum_{t=0}^{T-1}w_te_t\leq \frac{\mu a^3}{8S_T}a_0 +\frac{4T(T+2a)}{\mu S_T}\frac{\bar{\sigma}^2}{n}+\frac{2L+\mu}{nS_T }\sum_{t=0}^{T^*}V^{(t)}
	\end{equation*}
	for weights $w_t =(a+t)^2$ and $S_T\triangleq \sum_{t=0}^{T-1}w_t = \frac{T}{6}\left(2T^2+6aT-3T+6a^2 -6a+1\right)\geq \frac{1}{3}T^3$. The last term equals zero in centralized learning.
\end{proof}

\section{Toy Model Experiments}
We set the target model as $y=\sin x +\epsilon$, where $\epsilon\sim N(0,0.1)$ is a random noise added to the data generation. In the toy model, the 6400 datapoints randomly assigned to 5 workers, and the partition of data is non-overlapping for all the workers.
The network on each worker is set as a 3-layer linear network: $1\xrightarrow{\text{Linear}+ \text{ReLU}} 64\xrightarrow{\text{Linear}+ \text{ReLU}}
64\xrightarrow{\text{Linear}+ \text{ReLU}}1$.

\section{Real-world Experiments}
\subsection{Data Partition}
We distribute the $m$ data samples evenly across the $n$ workers, with one of the following two data models: 
\begin{itemize}
    \item[$(i)$] \iid setting, where data are randomly assigned to each worker, such that each worker sees \iid copies from the same distribution;
    \item[$(ii)$] Non-\iid setting, where each worker only sees data with one, or a few, of particular labels. 
\end{itemize} 
Specifically, under the non-\iid setting, we split {\texttt{MNIST}} into 300 shards, and each worker takes two shards of images from the data pool without replacement. In non-\iid {\texttt{CIFAR10}}, the data is splitted into 40 shards and each worker selects 4 shards from the pool without replacement. 
We use Dirichlet distribution to generate the non-\iid {\texttt{TINYIMAGENET}} data partition among workers \citep{xiu2020supercharging}.
We sample $p_k \sim Dir_{N} (\beta)$ and allocate a 
$p_{k, j}$ proportion of the instances of class $k$ to worker $j$, where $Dir(\beta)$ is the Dirichlet distribution with a concentration parameter $\beta$ (0.5 by default). The data distributions among workers in default settings are shown in Figure \ref{fig:class_id}.
\begin{figure}[t]
	\begin{center}\label{fig:corr}
		\includegraphics[width=.7\textwidth]{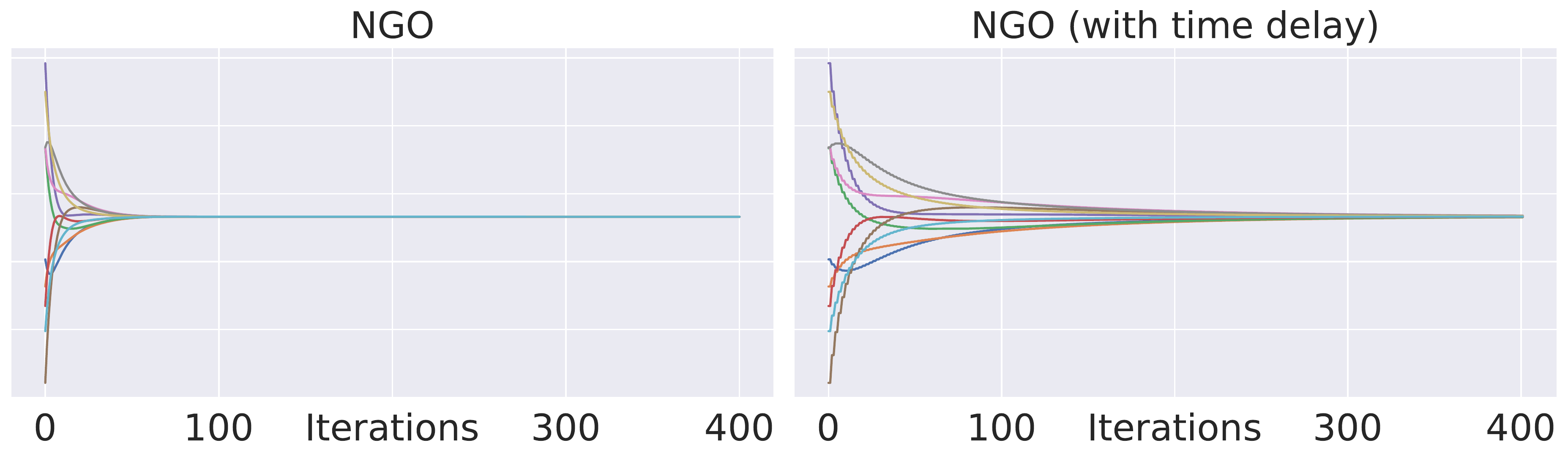}
		\caption{Consensus result on nonlinear gossip with and without time delay.}
	\end{center}
\end{figure}

\begin{figure}[t]
	\begin{center}\label{fig:class_id}
		\includegraphics[width=\textwidth]{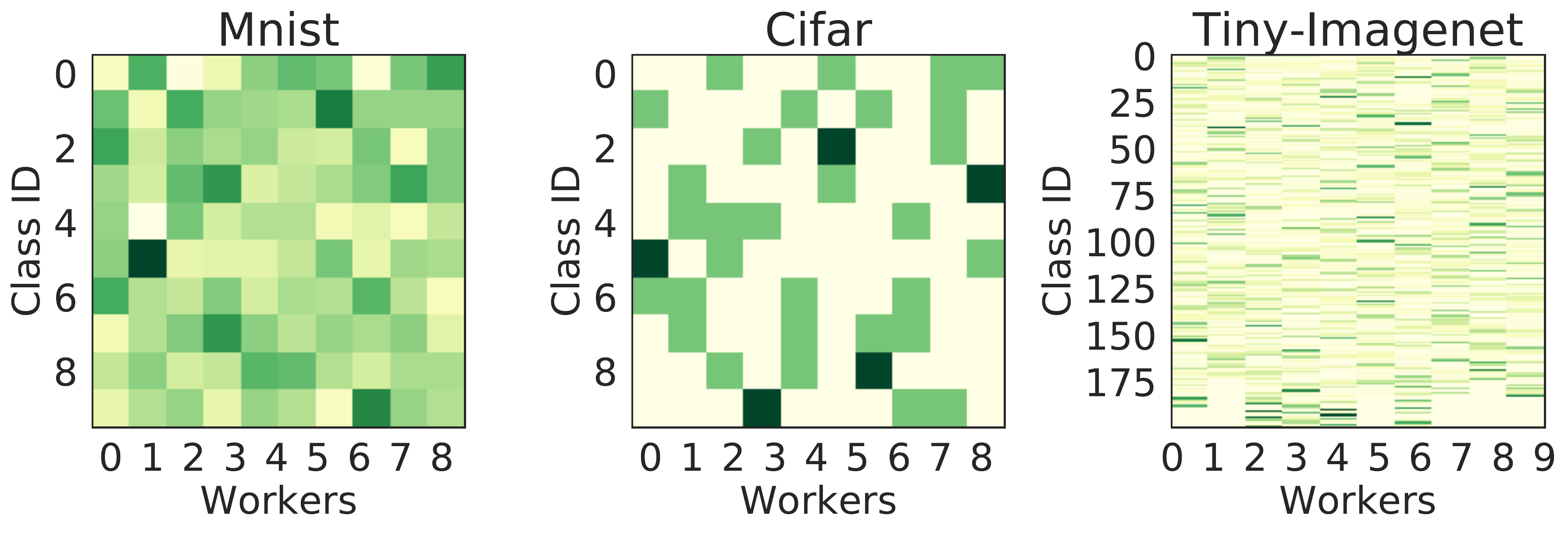}
		\caption{The data distribution of each worker using non-\iid data partition. The color bar denotes the number of data samples. Each rectangle represents the number of data samples of a specific class on a worker.}
	\end{center}
\end{figure}

\subsection{Supervised Learning}
\paragraph{Scalability}
To further study the scaling properties of NGO, we train on 4, 16, 36 and 64 number of nodes. We notice that under this setting, we distribute the whole dataset evenly to nodes, which is different from Sec. 5.5.
The corresponding parameters are listed in Table \ref{tab:spectral}.
\begin{table}[t!]
\caption{Summary of communication topologies \label{tab:spectral}}
\setlength{\tabcolsep}{3pt}
\begin{center}
\begin{small}
\begin{sc}
\begin{tabular}{lccccc}
\toprule
\multirow{2}{*}{Toplogy}&  & \multicolumn{4}{c}{spectral gap}\\\cmidrule(lr){2-2} \cmidrule(lr){3-6}
 & node degree & $n=4$ & $n=16$ & $n=36$ & $n=64$\\
\midrule
ring & 2 & 0.67 &0.05&0.01&0.003 \\
random($u$) & $un$& - & -&-&-\\
complete &$(n-1)$&1&1&1&1\\
\bottomrule
\end{tabular}
\end{sc}
\end{small}
\end{center}
\end{table}
Gossip type algorithm slows down due to the influence of the graph topology which is consistent with the spectral gaps order (see Tab. \ref{tab:spectral}). Nonlinear gossip SGD consistently outperforms linear gossip, and the improvement is more obvious when number of nodes increases.
\begin{figure}[t]
	\begin{center}\label{fig:generate}
		\includegraphics[width=\textwidth]{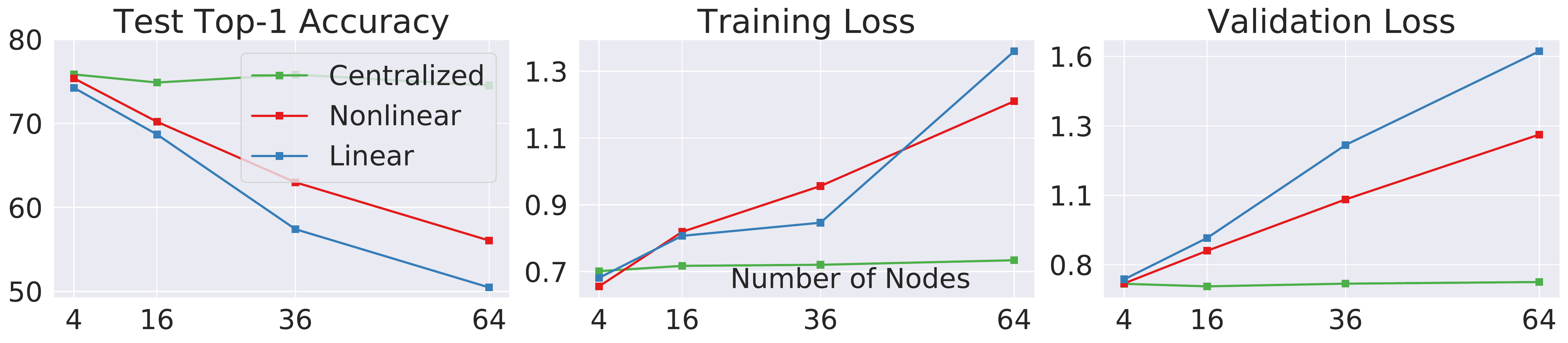}
		\caption{Scalability results on \iid CIFAR10}
	\end{center}
\end{figure}

\begin{figure}[t]
	\begin{center}\label{fig:corr}
		\includegraphics[width=.4\textwidth]{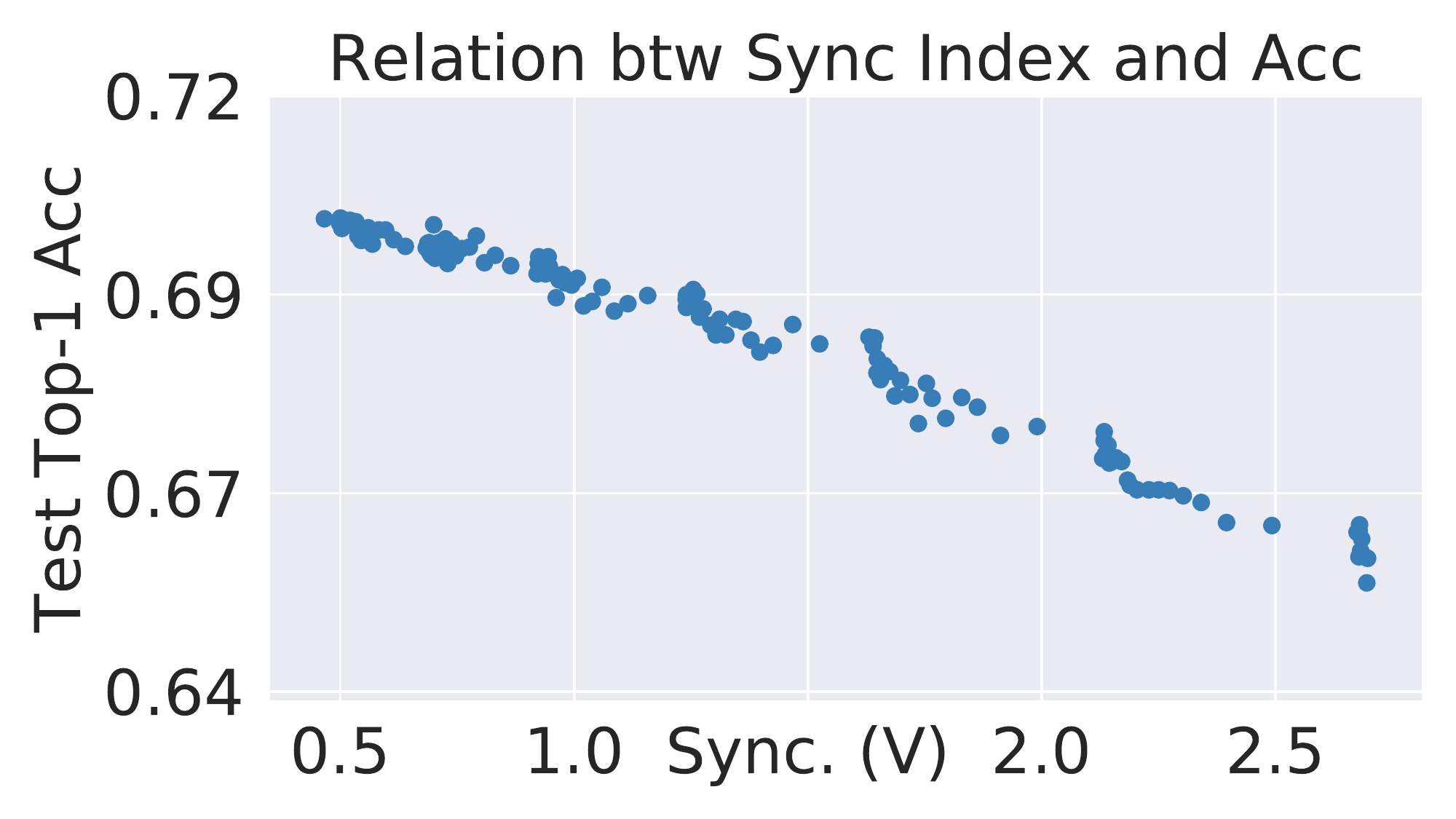}
		\caption{(left) Correlation between synchronization and test accuracy. }
	\end{center}
\end{figure}


\subsection{Unsupervised Learning}
We now compare the performance of linear and nonlinear gossiping for unsupervised learning problem. Here we adopt the VAE setting, where one seeks to maximize the lower bound to the data likelihood by optimizing the stochastic autoencoder $q(z)$ and conditional data likelihood $p(x|z)$ \citep{chen2021variational,tao2019fenchel}. We compare the result of linear and nonlinear gossiping, together with the centralized learning result under \iid setting and non-\iid setting. In Table \ref{table1}, we compare the evidence lower bound on MNIST dataset and the results show that NGO outperforms Gossiping under both settings, and NGO performs almost as good as centralized learning. For Cifar 10 and MNIST, we provide a few reconstruction with the average model trained with NGO compared to their samples in Figure \ref{fig:generate}.
\begin{figure}[t]
	\begin{center}
		\includegraphics[width=0.45\textwidth]{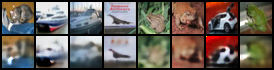}
		\includegraphics[width=0.45\textwidth]{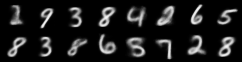}
		\caption{(left) Samples reconstructed from \texttt{CIFAR10} dataset with average model trained with nonlinear gossip algorithm. (right) Samples generated from \texttt{MNIST} dataset with model on different workers trained with nonlinear gossip algorithm. \label{fig:generate}}
	\end{center}
\end{figure}

\subsection{Network Achitectures}
\begin{table}[h]\label{table1}
\caption{Comparison of NGO and Gossip for unsupervised learning (VAE) \label{tab:vae}}
\setlength{\tabcolsep}{3pt}
\begin{center}
\begin{small}
\begin{sc}
\begin{tabular}{lcccccccccccc}
\toprule
Setting& \multicolumn{3}{c}{Gossip} & \multicolumn{2}{c}{NGO} \\
\midrule
\iid Centralized&\multicolumn{5}{c}{113.73}\\
\iid Ring&\multicolumn{3}{c}{124.35}&\multicolumn{2}{c}{113.27}\\
\iid Random&\multicolumn{3}{c}{124.83}&\multicolumn{2}{c}{114.62} \\
\iid FC &\multicolumn{4}{c}{121.79}&\multicolumn{2}{c}{114.41}\\
\midrule
non\ \iid Centralized&\multicolumn{5}{c}{93.18}\\
non\ \iid Ring&\multicolumn{3}{c}{113.05}&\multicolumn{2}{c}{111.56}\\
non\ \iid Random&\multicolumn{3}{c}{113.37}&\multicolumn{2}{c}{110.59}\\
non\ \iid FC &\multicolumn{4}{c}{113.43}&\multicolumn{2}{c}{110.82}\\
\bottomrule
\end{tabular}
\end{sc}
\end{small}
\end{center}
\end{table}

\begin{table}[!t]
\begin{minipage}{.55\linewidth}
\caption{MNIST classication experiment network architecture.\label{tab:mnist_net}}
\setlength{\tabcolsep}{25pt}
\centering
\begin{sc}
\begin{small}
\resizebox{.9\textwidth}{!}{\begin{tabular}{cl}
\toprule
\multicolumn{1}{c}{Network} & \multicolumn{1}{c}{Architecture} \\
\midrule
Classifier & conv2d(unit=10, kernel=5, stride=1)\\
&+maxpool2d(stride=2)+ReLU \\
& +dropout2d(p=0.5)\\
& +conv2d(unit=20, kernel=5, stride=1)\\
&+maxpool2d(stride=2)+ReLU \\
&+ fc(unit=50)+relu\\
& +dropout2d(p=0.5)\\
&+ fc(unit=10)+logsoftmax\\
\bottomrule
\end{tabular}}
\end{small}
\end{sc}
\end{minipage}
\begin{minipage}{.43\linewidth}
\caption{MNIST VAE experiment network architecture.\label{tab:mnist_net}}
\setlength{\tabcolsep}{25pt}
\centering
\begin{sc}
\begin{small}
\resizebox{.9\textwidth}{!}{\begin{tabular}{cl}
\toprule
\multicolumn{1}{c}{Network} & \multicolumn{1}{c}{Architecture} \\
\midrule
Encoder & fc(unit=784)+ReLU \\
&+ fc(unit=512)+ReLU\\
&+ fc(unit=256)\\
&+ fc(unit=2)\\
[5pt]
Decoder & fc(unit=2)+ReLU \\
&+ fc(unit=256)+ReLU\\
&+ fc(unit=512)+ReLU\\
&+ fc(unit=784)+Sigmoid\\
\bottomrule
\end{tabular}}
\end{small}
\end{sc}
\end{minipage}
\end{table}

\begin{table}
\begin{minipage}{\linewidth}
\centering
\caption{Cifar10 classification experiment network architecture. \label{tab:cifar_net} }
\setlength{\tabcolsep}{25pt}
\begin{sc}
\begin{small}
\resizebox{.9\textwidth}{!}{
\begin{tabular}{cl}
\toprule
\multicolumn{1}{c}{Network} & \multicolumn{1}{c}{Architecture} \\
\midrule
Network & conv2d(unit=6, kernel=5, stride=1)+ReLU\\
&+maxpool2d(stride=2)\\
&+ conv2d(unit=16, kernel=5, stride=1)+ReLU\\
&+maxpool2d(stride=2)\\
&+ fc(unit=120)+ReLU\\
&+ FC(unit=84)+ReLU\\
&+ FC(unit=10)\\
\bottomrule
\end{tabular}}
\end{small}
\end{sc}
\end{minipage} 
\end{table}

\begin{table}
\begin{minipage}{\linewidth}
\centering
\caption{Cifar10 VAE experiment network architecture. \label{tab:cifar_net} }
\setlength{\tabcolsep}{25pt}
\begin{sc}
\begin{small}
\resizebox{.9\textwidth}{!}{
\begin{tabular}{cl}
\toprule
\multicolumn{1}{c}{Network} & \multicolumn{1}{c}{Architecture} \\
\midrule
Encoder & conv2d(unit=16, kernel=3, stride=1) +BN+ReLU\\
&+ conv2d(unit=32, kernel=3, stride=2)+BN+ReLU\\
&+ conv2d(unit=32, kernel=3, stride=1)+BN+ReLU\\
&+ conv2d(unit=16, kernel=3, stride=2)+BN+ReLU\\
&+ FC(unit=512)+BN+ReLU\\
[5pt]
Decoder & FC(unit=512)+BN+ReLU\\
& FC(unit=1024)+BN+ReLU\\
& deconv(unit=32, kernel=3, stride=2)+BN+ReLU\\
& deconv(unit=32, kernel=3, stride=1)+BN+ReLU\\
& deconv(unit=16, kernel=3, stride=2)+BN+ReLU\\
& deconv(unit=768, kernel=3, stride=2)\\
\bottomrule
\end{tabular}}
\end{small}
\end{sc}
\end{minipage} 
\end{table}

\begin{table}
\begin{minipage}{0.5\linewidth}
\centering
\caption{Tiny Imagenet classification experiment network architecture. \label{tab:cifar_net} }
\setlength{\tabcolsep}{25pt}
\begin{sc}
\begin{small}
\resizebox{.9\textwidth}{!}{
\begin{tabular}{cl}
\toprule
\multicolumn{1}{c}{Network} & \multicolumn{1}{c}{Architecture} \\
\midrule
Network & resnet18\footnote{Resnet 18 without last layer}\\
&+ FC(unit=200)\\
\bottomrule
\end{tabular}}
\end{small}
\end{sc}
\end{minipage} 
\end{table}
\end{document}



\icmlsetsymbol{equal}{*}

\begin{icmlauthorlist}
\icmlauthor{Junya Chen}{duke,fudan}
\icmlauthor{Chenyang Tao}{duke}
\icmlauthor{Wenlian Lu}{fudan}
\icmlauthor{Lawrence Carin}{duke}
\end{icmlauthorlist}


\icmlaffiliation{duke}{Electrical \& Computer Engineering, Duke University, Durham, NC, USA}
\icmlaffiliation{fudan}{School of Mathematical Sciences, Fudan University, Shanghai, China}

\icmlcorrespondingauthor{Cieua Vvvvv}{c.vvvvv@googol.com}
\icmlcorrespondingauthor{Eee Pppp}{ep@eden.co.uk}

\icmlkeywords{Machine Learning, ICML}

\vskip 0.3in
]



\printAffiliationsAndNotice{\icmlEqualContribution} 

\subsection{Experiments}
All experiments are implemented with Tensorflow and run on a single NVIDIA TITAN X GPU. 
\begin{figure}[t]
	\begin{center}
	    \includegraphics[width = 0.23\textwidth]{icml-2020/1000.pdf}
		\includegraphics[width = 0.23\textwidth]{icml-2020/4.pdf}
		\hspace{-1.2em}
		\includegraphics[width = 0.23\textwidth]{icml-2020/1001.pdf}
		\includegraphics[width = 0.23\textwidth]{icml-2020/0033.pdf}
		\hspace{-1.2em}
		\includegraphics[width = 0.23\textwidth]{icml-2020/1003.pdf}
		\includegraphics[width = 0.23\textwidth]{icml-2020/12.pdf}
		\hspace{-1.2em}
		\includegraphics[width = 0.23\textwidth]{icml-2020/1004.pdf}
		\includegraphics[width = 0.23\textwidth]{icml-2020/1063.pdf}
		\hspace{-1.2em}
		\includegraphics[width = 0.23\textwidth]{icml-2020/1005.pdf}
		\includegraphics[width = 0.23\textwidth]{icml-2020/1292.pdf}
		\caption{Convergence of NGO SGD under non-\iid setting in the VAE task. (a) Generations on each worker after 2 iterations. (b) Generations on each worker after 200 iterations.\label{noniid_1}}
		\vspace{-1em}
	\end{center}
\end{figure}

\begin{table}[t!]\label{table1}
\caption{Comparison of NGO and Gossip for MNIST classification \label{tab:2}}
\setlength{\tabcolsep}{3pt}
\begin{center}
\begin{small}
\begin{sc}
\begin{tabular}{lcccccccccccc}
\toprule
Setting& \multicolumn{3}{c}{Gossip} & \multicolumn{2}{c}{NGO} \\
\midrule
\iid Centralized&\multicolumn{5}{c}{0.26}\\
\iid Ring&\multicolumn{3}{c}{0.25(50}&\multicolumn{2}{c}{0.25(48)}\\
\iid Random&\multicolumn{3}{c}{0.25(44)}&\multicolumn{2}{c}{0.25(44)} \\
\iid FC &\multicolumn{4}{c}{0.25(32)}&\multicolumn{2}{c}{0.25(28)}\\
\midrule
non\ \iid Centralized&\multicolumn{5}{c}{0.72}\\
non\ \iid Ring&\multicolumn{3}{c}{0.72(10)}&\multicolumn{2}{c}{0.71(8)}\\
non\ \iid Random&\multicolumn{3}{c}{0.73(5)}&\multicolumn{2}{c}{0.72(4)}\\
non\ \iid FC &\multicolumn{4}{c}{0.72(4)}&\multicolumn{2}{c}{0.72(4)}\\
\bottomrule
\end{tabular}
\end{sc}
\end{small}
\end{center}
\end{table}

\begin{table}[t!]\label{table1}
\caption{Comparison of NGO and Gossip for CIFAR10 classification\label{tab:3}}
\setlength{\tabcolsep}{3pt}
\begin{center}
\begin{small}
\begin{sc}
\begin{tabular}{lcccccccccccc}
\toprule
Setting& \multicolumn{3}{c}{Gossip} & \multicolumn{2}{c}{NGO} \\
\midrule
\iid Centralized&\multicolumn{5}{c}{0.16}\\
\iid Ring&\multicolumn{3}{c}{0.54}&\multicolumn{2}{c}{0.22}\\
\iid Random&\multicolumn{3}{c}{0.19}&\multicolumn{2}{c}{0.14} \\
\iid FC &\multicolumn{4}{c}{~~~0.30}&\multicolumn{2}{c}{~~~0.38}\\
\midrule
non\ \iid Centralized&\multicolumn{5}{c}{0.18}\\
non\ \iid Ring&\multicolumn{3}{c}{0.50}&\multicolumn{2}{c}{0.50}\\
non\ \iid Random&\multicolumn{3}{c}{0.25}&\multicolumn{2}{c}{0.24}\\
non\ \iid FC &\multicolumn{4}{c}{~~~0.34}&\multicolumn{2}{c}{~~~0.32}\\
\bottomrule
\end{tabular}
\end{sc}
\end{small}
\end{center}
\end{table}

\subsubsection{Supervised Learning}
We apply NGO and Gossip SGD on MNIST and Cifar 10 classification tasks. Quantitative results are summarized in Table \ref{tab:2} and \ref{tab:3}. For both tasks, we consider different network topologies, including ring, random network and fully connected (FC) under \iid setting and non \iid setting. We also provide the performance of centralized learning in the tables. For MNIST task, we set the workers as fully connected neural networks. Since all these protocols can reach the same accuracy in the end, we provide the first hitting time of the final accuracy in the brackets. In the Cifar10 task, we set the workers as convolutional neural networks. NGO SGD outperforms Gossip SGD in both MNIST and CIFAR 10 classification tasks.

\subsubsection{Unsupervised Learning}
We now compare the performance of NGO and Gossip for unsupervised learning problem. Here we adopt the VAE setting, where one seeks to maximize the lower bound to the data likelihood b optimizing the stochastic autoencoder $q(z)$ and conditional data likelihood $p(x|z)$. We compare the result of Gossip SGD and NGO SGD, together with the centralized learning result under \iid setting and non-\iid setting. In Table \ref{table1}, we compare the evidence lower bound and the results show that NGO SGD outperforms SGD under both settings, and NGO performs almost as good as centralized learning. We provide a few generations on each worker compared to their initial non \iid distribution in Figure \ref{noniid_1}.

\subsubsection{Scalability}
Figure \ref{scalability} plots the scalability with Gossip SGD and NGO SGD by scaling up the worker numbers on ring topology under the \iid setting. Gossip SGD achieve almost linear speedup on throughput at the beginning of training, while NGO SGD benefits from more workers during the whole 
training process.

\begin{figure}[t]
	\begin{center}
	\hspace{-1.5em}
	    \includegraphics[width=0.5\textwidth]{icml-2020/Workers.pdf}
		\caption{Speedup achieved by scaling up the worker numbers. (a) Gossip SGD. (b) NGO SGD}\label{scalability}
	\end{center}
\end{figure}

\subsection{Proof of Theorem 3.1}
\begin{proof}
Recall the definition of Lyapunov function
\begin{equation*}
    V(t) = \sum_{i=1}^n\left(\xv_i(t) - \bar{\xv}\right)^2,
\end{equation*}
after differentiating we have:
\begin{equation}
	\begin{split}
	\dot{V}(t) &= 2\sum_{i}\left(\xv_i(t) - \bar{\xv}\right)(\dot{\xv}_i(t) - \dot{\bar{\xv}})\\
	&= 2\gamma\sum_i \left(\xv_i(t) - \bar{\xv}(t)\right)\sum_{j:(i,j)\in \mathcal{E}} \Wmat_{ij}\phiv\left(\xv_j(t) - \xv_i(t)\right)
	\end{split}
\end{equation}

Here we use the fact that $\sum_{(i,j)\in \mathcal{E}}\Wmat_{ij}\phiv(\xv_j(t)-\xv_i(t)) = 0, \forall t>0$, thus $\bar{\xv}(t) = \bar{\xv}$ is invariant.

\begin{equation}
	\begin{split}
	&\sum_{i}\sum_{j:(i,j)\in \mathcal{E}} \Wmat_{ij}\left|\xv_i(t) -\xv_j(t)\right|\phiv\left(\left|\xv_j(t) - \xv_i(t)\right|\right)\\
	=& \sum_{(i,j)\in \mathcal{E}}\Wmat_{ij}\left(\xv_i(t) - \xv_j(t)\right)^{2p}\\
	\geq & \left[\sum_{(i,j)\in\mathcal{E}} \Wmat_{ij}^{\frac{1}{p}}\left(\xv_i(t) - \xv_j(t)\right)^2 \right]^{p}\\
	=& \left[\sum_{(i,j)\in\mathcal{E}} \Wmat_{ij}^{\frac{1}{p}}\left(\deltav_i(t) - \deltav_j(t)\right)^2 \right]^{p}\\
	\end{split}
\end{equation}
where $\Bmat =\left[\Wmat_{ij}^{\frac{1}{p}}\right]\in \BR^{n\times n}$. Notice that $\sum_{i,j=1}^{n} \Wmat_{ij}^\frac{1}{p}(\deltav_i-\deltav_j)^2 = 2\deltav^\top \Lmat(\Bmat)\deltav$, and $\sum_i \deltav_i = 0$,
we have
\begin{equation*}
	\frac{\sum_{i,j = 1}^n \Wmat_{ij}^{\frac{1}{p}} \left(\deltav_i-\deltav_j\right)^2}{V(t)} = \frac{2\deltav^\top \Lmat(\Bmat)\deltav}{\deltav^\top\deltav} \geq 2\lambda_2(\Lmat(\Bmat)).
\end{equation*}
where $\Lmat(\Bmat)$ is the graph Laplacian of $\mathcal{G}(\Bmat)$ and $\lambda_2(\Lmat(\Bmat))$ is the algebraic connectivity of $\mathcal{G}(\Bmat)$, $\lambda_2(\Lmat(\Bmat))$ is positive since the graph is connected.

Therefore,
\begin{equation}
\begin{split}
	\dot{V}(t) &= 2\gamma\sum_i(\xv_i(t)-\bar{\xv})\sum_{j:(i,j)\in \mathcal{E}} \Wmat_{ij}\phiv(\xv_j(t)-\xv_i(t))\\
	&=2\gamma\sum_{(i,j)\in \mathcal{E}} \Wmat_{ij}\xv_i(t)\phiv(\xv_j(t) - \xv_i(t))\\
	&=2\gamma\sum_{(i,j)\in \mathcal{E}} \Wmat_{ij}(\xv_i(t) - \xv_j(t))\phiv(\xv_j(t) - \xv_i(t))\\
	&=-2\gamma\sum_{(i,j)\in \mathcal{E}} \Wmat_{ij}|\xv_i(t) -\xv_j(t)|\phiv \left(|\xv_j(t) - \xv_i(t)|\right)\\
	&\leq -4\gamma \lambda_2^p(\Lmat(\Bmat)) V(t)^{p}
	\end{split}
\end{equation}

Therefore $V(t) = 0$, when $t>t^* = \frac{V^{1-p}(0)}{4\gamma \lambda_2^p(\Lmat(\Bmat))(1-p)}$.
\end{proof}
\subsection{Convergence rate}
\begin{proof}










Step B for Algorithm 2 and Algorithm 3:
\begin{lem}
	The average $\bar{\xv}^{(t)}$ of the iterates of  Algorithm 2 and Algorithm 3 satisfy the following:
	\begin{equation*}
	\begin{split}
	&\mathbb{E}_{\xi_1^{(t)}, \cdots, \xi_n^t}\left\|\bar{\xv}^{(t+1)} -\xv^*\right\|^2\\
	\leq& \left(1-\mu\eta_t\right)\left\|\bar{\xv}^{(t)}-\xv^*\right\|^2 +\frac{\eta_t^2\bar{\sigma}^2}{n} 
	-2\eta_t\left[1-\left(1+\frac{1}{\nu_t}\right)L\eta_t\right]\left(f(\bar{\xv}^t) - f^*\right) \\
	+&\eta_t\frac{\left(1+\nu_t \right)\eta_tL^2+L}{n}\sum_{i=1}^{n}\|\bar{\xv}^t - \xv_i^t\|^2
	\end{split}
	\end{equation*}
	where $\bar{\sigma}^2 = \frac{1}{n}\sum_{i=1}^{n}\sigma_i^2$, $\nu$ is any positive constant, and set 
	\begin{equation*}
	\left(\nu_t^*\right)^2 = \frac{\left\|\sum_{i=1}^n \nabla f_i(\bar{\xv}^t)\right\|^2}{\left\|\sum_{i=1}^n \nabla f_i(\xv_i^t) - \sum_{i=1}^{n}\nabla f_i(\bar{\xv}^t)\right\|^2},
	\end{equation*}
	and any $\nu \neq \nu^* $ will lead to a looser bound.
\end{lem}
\begin{proof}
	\begin{equation*}
	\begin{split}
	&\left\|\bar{\xv}^{(t+1)} - \xv^*\right\|^2\\ 
	=& \left\|\bar{\xv}^{(t)} - \frac{\eta_t}{n}\sum_{j=1}^n \nabla F_j\left(\xv_j^{(t)}, \xi_j^{(t)}\right)- \xv^*\right\|^2\\
	=&\Bigg\|\bar{\xv}^{(t)} - \xv^*-\frac{\eta_t}{n}\sum_{i=1}^n\nabla f_i\left(\xv_i^{(t)}\right) +\frac{\eta_t}{n}\sum_{i=1}^n\nabla f_i\left(\xv_i^{(t)}\right)-\frac{\eta_t}{n}\sum_{j=1}^n \nabla F_j\left(\xv_j^{(t)}, \xi_j^{(t)}\right)\Bigg\|^2\\
	=&\left\|\bar{\xv}^{(t)} - \xv^*-\frac{\eta_t}{n}\sum_{i=1}^n\nabla f_i\left(\xv_i^{(t)}\right)\right\|^2+ \eta_t^2\left\|\frac{1}{n}\sum_{i=1}^n\nabla f_i\left(\xv_i^{(t)}\right)- \sum_{j=1}^n \nabla F_j\left(\xv_j^{(t)}, \xi_j^{(t)}\right)\right\|^2 \\
	&+2\frac{\eta_t}{n}\bigg\langle \bar{\xv}^{(t)} -\xv^*-\frac{\eta_t}{n}\sum_{i=1}^n \nabla f_i\left(\xv_i^{(t)}\right),\sum_{i=1}^n \nabla f_i\left(\xv_i^{(t)}\right)- \sum_{j=1}^n\nabla F_j\left(\xv_j^{(t)}, \xi_j^{(t)}\right) \bigg\rangle.\\
	\end{split}
	\end{equation*}
	The last term is zero in expectation.  
	The second term is less than $\frac{\eta_t^2\bar{\sigma}^2}{n}$ (bounded variance assumption).
	
	The first term can be written as:
	\begin{equation}\label{first term}
	\begin{split}
	&\left\|\bar{\xv}^{(t)} - \xv^*-\frac{\eta_t}{n}\sum_{i=1}^n\nabla f_i\left(\xv_i^{(t)}\right)\right\|^2 \\ 
	=& \left\|\bar{\xv}^{(t)} - \xv^*\right\|^2 +\eta_t^2\left\|\frac{1}{n}\sum_{i=1}^n \nabla f_i\left(\xv_i^{(t)}\right)\right\|^2 -2\eta_t\left\langle \bar{\xv}^{(t)} - \xv^*, \frac{1}{n}\sum_{i=1}^n \nabla f_i\left(\xv_i^{(t)}\right)\right\rangle\\
	\end{split}
	\end{equation}
	
	The second term in (\ref{first term}):
	\begin{equation*}
	\begin{split}
	    &\left\|\frac{1}{n}\sum_{i=1}^n \nabla f_i\left(\xv_i^{(t)}\right)\right\|^2\\
	    =&\Big\|\frac{1}{n}\sum_{i=1}^n \left(\nabla f_i\left(\xv_i^{(t)}\right) - \nabla f_i\left(\bar{\xv}^{(t)}\right)\right)+\left(\nabla f_i\left(\bar{\xv}^{(t)}\right)-\nabla f_i(\xv^*)\right)\Big\|^2\\
		\leq & \frac{(1+\nu)}{n}\sum_{i=1}^{n}\left\|\nabla f_i
		\left(\xv_i^{(t)}\right) - \nabla f_i\left(\bar{\xv}^{(t)}\right)\right\|^2 + \left(1+\frac{1}{\nu}\right)\left\|\frac{1}{n}\sum_{i=1}^n \nabla f_i\left(\bar{\xv}^{(t)}\right) -\frac{1}{n}\sum_{i=1}^n \nabla f_i(\xv^*)\right\|^2\\
		\leq & \frac{(1+\nu)L^2}{n}\sum_{i=1}^{n}\left\|\xv_i^{(t)} - \bar{\xv}^{(t)}\right\|^2 + \left(1+\frac{1}{\nu}\right)\frac{2L}{n}\sum_{i=1}^{n}\left(f_i\left(\bar{\xv}^{(t)}\right) -f_i(\xv^*)\right)\\
		=& \frac{(1+\nu)L^2}{n}\sum_{i=1}^{n}\left\|\xv_i^{(t)} - \bar{\xv}^{(t)}\right\|^2 + 2\left(1+\frac{1}{\nu}\right)L\left(f\left(\bar{\xv}^{(t)}\right) - f^*\right),
	\end{split}	
	\end{equation*}
	 $\forall \nu>0$
	where the optimal value of $\nu$ is 
	\begin{equation*}
		\left(\nu^*\right)^2 = \frac{\left\|\sum_{i=1}^n \nabla f_i(\bar{\xv}^t)\right\|^2}{\left\|\sum_{i=1}^n \nabla f_i(\xv_i^t) - \sum_{i=1}^{n}\nabla f_i(\bar{\xv}^t)\right\|^2}
	\end{equation*}
		
	The third term in (\ref{first term}):
	\begin{equation*}
	\begin{split}
		&-2\eta_t\left\langle \bar{\xv}^{(t)} - \xv^*, \frac{1}{n}\sum_{i=1}^n \nabla f_i\left(\xv_i^{(t)}\right)\right\rangle\\
		=&-\frac{2\eta_t}{n}\sum_{i=1}^n\Big[\left\langle \bar{\xv}^{(t)}-\xv_i^{(t)}, \nabla f_i\left(\xv_i^{(t)}\right)\right\rangle+\left\langle \xv_i^{(t)}-\xv^*, \nabla f_i\left(\xv_i^{(t)}\right) \right\rangle\Big]\\
		\leq& -\frac{2\eta_t}{n}\sum_{i=1}^n\bigg[f_i\left(\bar{\xv}^{(t)}\right) - f_i\left(\xv_i^{(t)}\right)-\frac{L}{2}\left\|\bar{\xv}^{(t)} - \xv_i^{(t)}\right\|^2+f_i\left(\xv_i^{(t)}\right) - f_i\left(\xv^*\right)+\frac{\mu}{2}\left\|\xv_i^{(t)}- \xv^*\right\|^2\bigg]\\
		\leq &-2\eta_t \left(f\left(\bar{\xv}^{(t)}\right)-f(\xv^*)\right)+\frac{L\eta_t}{n}\sum_{i=1}^{n}\left\|\bar{\xv}^{(t)}- \xv_i^{(t)}\right\|^2-\mu \eta_t\left\|\bar{\xv}^{(t)} - \xv^*\right\|^2
	\end{split}
	\end{equation*}
	Combining everything together, we have the statement of the lemma.
\end{proof}

Step C for Algorithm 2:
\begin{proof}
	According to Lemma 21 in \cite{koloskova2019decentralized}, we have:
	\begin{equation*}
		\|{\xv}^{(t+1)} - \bar{\xv}^{(t+1)}\|_F^2 \leq 40\eta_t^2\frac{1}{\beta^2}nG^2
	\end{equation*}
	where $\beta =1-\gamma\lambda_2(\Lmat(\Wmat)) $.
	
	We take 
	\begin{equation*}
	    \nu_t^2= \min\left\{\frac{\left\|\sum_{i=1}^n \nabla f_i(\bar{\xv}^t) 
	    \right\|^2}{\left\|\sum_{i=1}^n \nabla f_i(\xv_i^t) - \sum_{i=1}^{n}\nabla f_i(\bar{\xv}^t)\right\|^2} , \frac{1}{\eta_t^{2\varepsilon}} \right\},
	\end{equation*}
	where $0<\epsilon<1$.
	
	For $\eta_t\leq \frac{1}{4L}$ it holds $(1+\frac{1}{\nu_t})L\eta_t - 1\leq \frac{1}{4\nu_t} - \frac{3}{4}$, $(1+\nu_t) \eta_tL^2 +L \leq \frac{5+\nu_t}{4}L$,
	hence
	\begin{equation*}
	\begin{split}
	&\mathbb{E}\left\|\bar{\xv}^{(t+1)} -\xv^*\right\|^2\\ 
	\leq& (1-\mu\eta_t)\mathbb{E}\left\|\bar{\xv}^{(t)}- \xv^*\right\|^2 +2\eta_t \left(\frac{1}{4\nu_t}-\frac{3}{4}\right)e_t +\frac{\eta_t^2\bar{\sigma}^2}{n}
	+\eta_t^3 \frac{\left(50+10\nu_t\right)L}{\beta^2}nG^2\\
	\leq& (1-\mu\eta_t)\mathbb{E}\|\bar{\xv}^{(t)}- \xv^*\|^2 +2\eta_t \left(\frac{1}{4\nu_t}-\frac{3}{4}\right)e_t +\frac{\eta_t^2\bar{\sigma}^2}{n}
	+\frac{10\eta_t^{3-\epsilon}L}{\beta^2}nG^2 + \frac{50}{\beta^2} \eta_t^3LnG^2
	\end{split}
	\end{equation*}
	From Lemma 23 in \cite{koloskova2019decentralized} we get
	\begin{equation*}
	\begin{split}
	&\left(\frac{3}{2}- \frac{1}{2\nu_{\min}}\right)\sum_{i=0}^{T-1}\frac{w_te_t}{S_T}\leq \frac{\mu a_0a^3}{4S_T}+\frac{2\bar{\sigma}^2T(T+2a)}{\mu n S_T}
	+C_1\frac{(a+T)^{1+\epsilon}}{S_T\beta^2}nG^2L +C_2\frac{1}{S_T\beta^2}nG^2L
	\end{split}
	\end{equation*}
\end{proof}

Step C for Algorithm 3:
\begin{proof}
	According to the analysis in step A, we have:
	\begin{equation*}
		\left\|{\xv}^{(t+1)} - \bar{\xv}^{(t+1)}\right\|_F^2 = 0
	\end{equation*}
	after $t>T_0$.
	
	For $\eta_t\leq \frac{1}{4L}$ it holds $(1+\frac{1}{\nu_t})L\eta_t - 1\leq \frac{1}{4\nu_t} - \frac{3}{4}$, $(1+\nu_t) \eta_tL^2 +L \leq \frac{5+\nu_t}{4}L$,
	hence
	\begin{equation*}
	\begin{split}
	\mathbb{E}\left\|\bar{\xv}^{(t+1)} -\xv^*\right\|^2
	\leq (1-\mu\eta_t)\mathbb{E}\|\bar{\xv}^{(t)}- \xv^*\|^2 +2\eta_t \left(\frac{1}{4\nu_t}-\frac{3}{4}\right)e_t +\frac{\eta_t^2\bar{\sigma}^2}{n}\\
	\end{split}
	\end{equation*}
	From Lemma 23 in \cite{koloskova2019decentralized} we get
	\begin{equation*}
	\left(\frac{3}{2}- \frac{1}{2\nu_{\min}}\right)\sum_{i=T_0}^{T-1}\frac{w_te_t}{S_T}\leq \frac{\mu a_0a^3}{4S_T}+\frac{2\bar{\sigma}^2T(T+2a)}{\mu n S_T}
	\end{equation*}
\end{proof}

Step B for Algorithm 1:

Denote $\hat{F}(\xv^t, \xi_1^t, \xi_2^t, \cdots, \xi_n^t)  = \frac{1}{n}\sum_{i=1}^nF_i(\xv^t, \xi_i^t)$, then
$\mathbb{E}_{\xi_1^t, \cdots, \xi_n^t}\hat{F}(\xv^t, \xi_1^t, \xi_2^t, \cdots, \xi_n^t) =  f(\xv^t)$.
\begin{lem}
	\begin{equation*}
	\begin{split}
		\mathbb{E}_{\xi_1^{(t)}, \cdots, \xi_n^{(t)}}\left\|\xv^{(t+1)} - \xv^*\right\|^2\leq \left(1-\mu\eta_t\right)\left\|\xv^{(t)}-\xv^*\right\|^2 +\frac{\eta_t^2\bar{\sigma}^2}{n} -2\eta_t\left(1-L\eta_t\right)\left(f\left(\xv^{(t)}\right) - f^*\right)
	\end{split}
	\end{equation*}
\end{lem}
\begin{proof}
\begin{equation*}
\begin{split}
&\left\|\xv^{(t+1)} - \xv^*\right\|^2 \\
=&\left\|\xv^{(t)} - \eta_t \nabla \hat{F}\left(\xv^{(t)}, \xi_1^{(t)}, \cdots, \xi_n^{(t)}\right)- \xv^*\right\|^2\\
=&\Bigg\|\xv^{(t)} - \xv^*-\eta_t\nabla f\left(\xv^{(t)}\right) +\eta_t\nabla f\left(\xv^{(t)}\right)-\eta_t \nabla\hat{F}\left(\xv^{(t)}, \xi_1^t, \cdots, \xi_n^{(t)}\right)\Bigg\|^2\\
=& \left\|\xv^{(t)} -\xv^*-\eta_t\nabla f\left(\xv^{(t)}\right)\right\|^2 + \eta_t^2\Bigg\|\nabla f\left(\xv^{(t)}\right)-  \nabla \hat{F}\left(\xv^{(t)}, \xi_1^{(t)}, \cdots, \xi_n^{(t)}\right)\Bigg\|^2 \\
-&2\eta_t\Big\langle \xv^{(t)} -\xv^*-\eta_t\nabla f\left(\xv^{(t)}\right),\nabla f\left(\xv^{(t)}\right)- \nabla \hat{F}(\xv^{(t)}, \xi_1^{(t)}, \cdots, \xi_n^{(t)}) \Big\rangle\\
\end{split}
\end{equation*}
The last term is zero in expectation.  
The second term can be written as:
\begin{equation*}
\begin{split}
&\mathbb{E}_{\xi_1^t, \cdots, \xi_n^t}\left\|\nabla F(\xv^t)-  \nabla \hat{F}(\xv^t, \xi_1^t, \cdots, \xi_n^t)\right\|^2\\
=&\mathbb{E}_{\xi_1^t, \cdots, \xi_n^t}\left\|\frac{1}{n}\sum_{i=1}^n \nabla  f_i(\xv^t) - \frac{1}{n}\sum_{i=1}^{n}\nabla F_i(\xv^t, \xi_i^t)\right\|^2\\
=&\frac{1}{n^2}\sum_{i=1}^{n}\mathbb{E}_{\xi_1^t, \cdots, \xi_n^t}\left\|\nabla f_i(\xv^t) - \nabla F_i(\xv^t,\xi_i)\right\|^2 \\
=& \frac{1}{n^2}\sum_{i=1}^{n}\sigma_i^2 = \frac{\bar{\sigma}^2}{n}
\end{split}
\end{equation*}
The first term can be written as:
\begin{equation}
\begin{split}
	 &\left\|\xv^{(t)} - \xv^*-\eta_t\nabla f(\xv^{(t)})\right\|^2 \\ 
	 =& \left\|\xv^{(t)} - \xv^*\right\|^2 +\eta_t^2\left\|\nabla f(\xv^t)\right\|^2
	 -2\eta_t\left\langle \xv^{(t)} - \xv^*, \nabla f(\xv^{(t)})\right\rangle\\
	 =&  \left\|\xv^{(t)} -\xv^*\right\|^2 +\eta_t^2\left\|\nabla f(\xv^{(t)})- \nabla f(\xv^*)\right\|^2- 2\eta_t\left\langle \xv^{(t)}-\xv^*, \nabla f(\xv^{(t)})\right\rangle\\
	 \leq&\|\xv^{(t)}-  \xv^*\|^2 +2L\eta_t^2\left(f(\xv^{(t)}) - f(\xv^*)\right)-2\eta_t\left(f(\xv^*) - f(\xv^t)-\frac{\mu}{2}\|\xv^{(t)}- \xv^*\|^2\right)
\end{split}
\end{equation}
Putting everything together we are getting the statement of the lemma.
\end{proof}

Step C for Algorithm 1:
\begin{proof}
	For $\eta_t\leq \frac{1}{4L}$ it holds $L\eta_t - 1\leq -\frac{3}{4}$, hence
	\begin{equation}
		\mathbb{E}\|\xv^{(t+1)} - \xv^*\|^2 \leq (1-\mu\eta_t)\mathbb{E}\|\xv^{(t)}- \xv^*\|^2 +\frac{\eta_t^2\bar{\sigma}^2}{n} -\frac{3}{2}\eta_te_t
	\end{equation}
	From Lemma 23 in \cite{koloskova2019decentralized} we get
	\begin{equation*}
	\frac{1}{S_T}\sum_{i=0}^{T-1}w_te_t\leq \frac{\mu a^3}{6S_T}a_0 +\frac{4T(T+2a)}{3\mu S_T}\frac{\bar{\sigma}^2}{n}
	\end{equation*}
\end{proof}

\bibliography{ngo}
\bibliographystyle{icml2020}

\clearpage